\documentclass{article}
\usepackage{iclr2027_conference,times}
\iclrfinalcopy

\usepackage{amsmath,amsfonts,bm}

\def\eqref#1{equation~\ref{#1}}

\def\1{\bm{1}}

\DeclareMathAlphabet{\mathsfit}{\encodingdefault}{\sfdefault}{m}{sl}
\SetMathAlphabet{\mathsfit}{bold}{\encodingdefault}{\sfdefault}{bx}{n}

\usepackage{amsmath,amsthm,amssymb}
\usepackage{algorithm}
\usepackage{algpseudocode}
\usepackage{array}
\usepackage{booktabs}
\usepackage{capt-of}
\usepackage{graphicx}
\usepackage{listings}
\usepackage{multirow}
\usepackage{placeins}
\usepackage{ragged2e}
\usepackage[table]{xcolor}
\definecolor{mydarkred}{rgb}{0.6,0,0}
\definecolor{mydarkgreen}{rgb}{0,0.6,0}
\usepackage[colorlinks,linkcolor=mydarkred,citecolor=mydarkgreen,urlcolor=mydarkred]{hyperref}
\usepackage{url}

\renewcommand{\eqref}[1]{Eq.~(\ref{#1})}

\lstdefinestyle{prompt}{ basicstyle=\ttfamily\scriptsize, columns=fullflexible, keepspaces=true, showstringspaces=false, breaklines=true, breakatwhitespace=true, breakindent=0pt, breakautoindent=false, frame=single, framerule=0.4pt, rulecolor=\color{black!55}, framesep=5pt, framexleftmargin=4pt, framexrightmargin=4pt, xleftmargin=0pt, xrightmargin=0pt, aboveskip=3pt, belowskip=8pt, resetmargins=true }

\newcommand{\promptheading}[1]{%
\par\medskip \noindent\textbf{#1}\par \nobreak\vspace{1pt} } \newcolumntype{P}[1]{>{\RaggedRight\arraybackslash}p{#1}}
\newcommand{\Tok}{\operatorname{Tok}}
\newcommand{\Dec}{\operatorname{Dec}}

\theoremstyle{plain} \newtheorem{theorem}{Theorem}[section] \newtheorem{lemma}[theorem]{Lemma} \newtheorem{proposition}[theorem]{Proposition} 

\title{DenMark: Robust Semantic Watermarking for Diffusion Language Models}

\author{
\textbf{Tianhao Ma}$^{1}$ \quad
\textbf{Weihao Xuan}$^{1,2}$ \quad
\textbf{Dong-Dong Wu}$^{1,2}$ \quad
\textbf{Farshid Nooshi}$^{1}$\\
\textbf{Takashi Ishida}$^{2,1}$ \quad
\textbf{Gang Niu}$^{2}$ \quad
\textbf{Naoto Yokoya}$^{1,2}$ \quad
\textbf{Masashi Sugiyama}$^{2,1}$\\[0.6em]
\normalfont $^{1}$The University of Tokyo\\
\normalfont $^{2}$RIKEN Center for Advanced Intelligence Project\\
}
\date{}
\begin{document}

\maketitle
\lhead{arXiv preprint}

\begin{abstract}
Semantic text watermarks encode signals in meaning rather than surface token choices, offering robustness to paraphrasing and other semantic-preserving edits. Existing semantic watermarking methods are primarily designed for autoregressive language models (ARLMs), where completed candidate units can be generated and scored before generation proceeds. This paradigm does not naturally extend to diffusion language models (DLMs), where semantic units remain incomplete during intermediate denoising steps and tokens may be updated in flexible orders. We propose \textbf{DenMark}, a semantic watermarking framework that injects key-dependent signals directly into the DLM denoising process. DenMark partitions the output into fixed token regions and uses temporary rollouts as semantic lookahead: conditional completions estimate the eventual semantics of an incomplete region, enabling DenMark to select local updates with higher estimated semantic watermark scores. Repeating this procedure across denoising steps progressively accumulates watermark evidence in the final output. For detection, DenMark uses calibrated scanning over candidate unit sizes to remain robust to boundary shifts introduced by semantic attacks.
Across four DLMs, three datasets, and four semantic attacks, DenMark achieves the best results across all reported detection metrics in all 48 backbone--dataset--attack combinations. These results demonstrate that DenMark provides an effective mechanism for robust semantic watermarking in DLMs.
\end{abstract}

\section{Introduction}

Large language models (LLMs) have achieved strong performance across a broad range of language generation and reasoning tasks~\citep{brown2020language,chen2021evaluating,chowdhery2023palm,touvron2023llama,kasneci2023chatgpt,singhal2023large,lin2025creative4u,wang2026badit}. Most modern language models are autoregressive language models (ARLMs), generating tokens sequentially from left to right conditioned on the preceding prefix~\citep{brown2020language,chowdhery2023palm,touvron2023llama}. Recently, diffusion language models (DLMs) have emerged as an alternative generation paradigm. Rather than predicting the next token from a completed prefix, DLMs iteratively denoise partially masked sequences, enabling bidirectional context and flexible updates across multiple positions~\citep{lou2024discrete,sahoo2024simple,gong2024scaling,nie2025large,ye2025dream}. This formulation supports arbitrary-order generation, infilling, parallel refinement, and flexible quality--speed trade-offs.

The growing deployment of LLM-generated text has raised concerns about misinformation, malicious use, bias, and failures in high-stakes domains, motivating reliable mechanisms for identifying model-generated content~\citep{bender2021dangers,bommasani2021opportunities,weidinger2021ethical}. Text watermarking embeds a hidden statistical signal into generated text while preserving its visible content~\citep{kirchenbauer2023watermark,dathathri2024scalable}. Most existing language-model watermarks encode signals through token-level sampling biases, such as key-dependent green--red vocabulary partitions~\citep{kirchenbauer2023watermark,kuditipudi2023robust,hu2023unbiased,dathathri2024scalable}. Because these signals depend on token identities, paraphrasing and rewriting can substantially weaken them while preserving the underlying meaning. Semantic watermarks instead encode signals in sentence or span representations and have shown stronger robustness to such transformations~\citep{hou2024semstamp,ren2024robust,hou2024ksemstamp,dabiriaghdam2025simmark,huo2026pmark}.

A prominent line of semantic watermarking methods relies on complete sentence-level candidate generation and selection~\citep{hou2024semstamp,hou2024ksemstamp,dabiriaghdam2025simmark,huo2026pmark}. Given a fixed prefix, these methods generate a set of complete sentence candidates, score them using a key-dependent semantic watermark criterion, reject candidates that do not satisfy the criterion, and commit an accepted candidate before continuing generation. This procedure is naturally compatible with ARLMs, whose prefix remains fixed while the next sentence is completed and evaluated.

This sentence-level procedure does not directly extend to DLMs. During iterative denoising, tokens at multiple locations can be updated before a sentence or span is fully resolved~\citep{lou2024discrete,sahoo2024simple,nie2025large,ye2025dream}. Under this generation process, it is impractical to construct a set of complete semantic-unit candidates during normal decoding, score and reject them, commit one candidate, and then resume generation. A semantic watermark for DLMs therefore needs to inject semantic signals directly through local updates as denoising proceeds, even when the surrounding semantic unit remains incomplete. Figure~\ref{fig:denomark_motivation} illustrates this mismatch and DenMark's rollout-guided alternative.

\begin{figure*}[t]
\centering
\includegraphics[width=\textwidth]{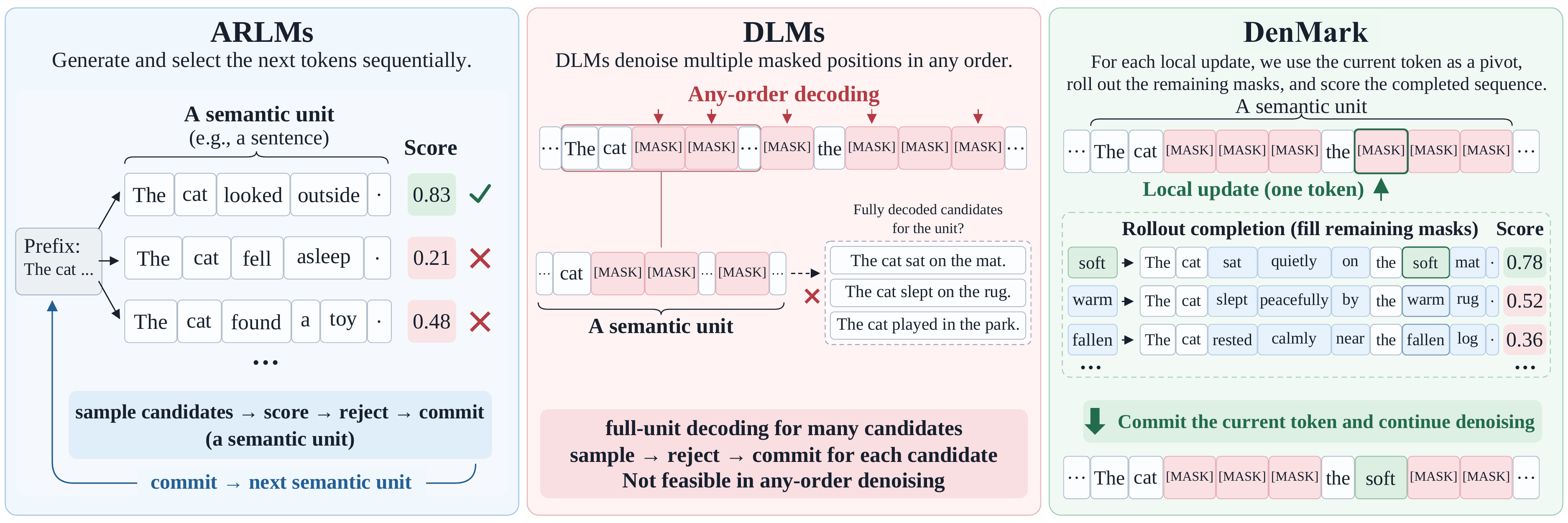}
\caption{\textbf{Why semantic watermarks must be injected during DLM denoising.} \textbf{ARLMs:} A complete next semantic unit can be generated and scored before it is committed. \textbf{DLMs:} Arbitrary-order denoising exposes only partially resolved units, making repeated full-unit candidate generation and rejection impractical. \textbf{DenMark:} Temporary rollouts complete the active unit for each local candidate; only the selected local update is committed before denoising continues.}
\label{fig:denomark_motivation}
\end{figure*}

We propose \textbf{DenMark}, a semantic watermarking method designed for DLMs. DenMark partitions the generated sequence into fixed token regions and, for each candidate local update, uses temporary rollouts to complete unresolved positions and estimate the resulting semantic score. It then commits the update with the highest estimated score, progressively injecting key-dependent watermark evidence throughout denoising; our analysis characterizes the final score gain as the accumulated contributions of these local selections. At detection time, semantic attacks can alter token boundaries, so we use calibrated scanning over candidate region sizes to recover watermark evidence under such shifts. Experiments across multiple DLMs and datasets show that DenMark substantially improves robustness to semantic attacks over token-level watermark baselines.

Our main contributions are summarized as follows:
\begin{itemize}
\item We identify the mismatch between existing sentence-level semantic watermarking methods and the iterative denoising process of DLMs and, to the best of our knowledge, propose the first semantic watermarking framework specifically designed for DLMs.
\item We introduce a rollout-guided watermarking algorithm that estimates the semantics of partially masked regions and selects the local update with the highest estimated score, progressively injecting key-dependent evidence during denoising.
\item We develop calibrated scanning over candidate region sizes to improve robustness to boundary shifts and evaluate DenMark across multiple DLMs, datasets, and semantic attacks, demonstrating strong watermark detectability under semantic rewriting.
\end{itemize}

\section{Preliminaries}
\paragraph{Masked DLMs.}
Recent DLMs generate text through iterative denoising of masked sequences~\citep{lou2024discrete,sahoo2024simple,nie2025large}. Let \(\mathcal V\) be the ordinary token vocabulary and let \([\mathrm{MASK}]\notin\mathcal V\) be the mask token. Let \(x_{\mathrm p}=(x_1,\ldots,x_{L_{\mathrm p}})\in\mathcal V^{L_{\mathrm p}}\) be a tokenized prompt, and let \(\Tok\) and \(\Dec\) denote the tokenizer's encoding and decoding maps. For a target continuation length \(N\), decoding starts from
\[
x^{(0)}= (x_1,\ldots,x_{L_{\mathrm p}},
\underbrace{[\mathrm{MASK}],\ldots,[\mathrm{MASK}]}_{N})
\in
(\mathcal V\cup\{[\mathrm{MASK}]\})^{L_{\mathrm p}+N}.
\]
At step \(t\), the masked generation positions are
\[
\mathcal M_t
=
\{i\in\{L_{\mathrm p}+1,\ldots,L_{\mathrm p}+N\}:x_i^{(t)}=[\mathrm{MASK}]\}.
\]
For each \(i\in\mathcal M_t\), the model \(p_\phi\) provides a categorical distribution \(p_\phi(\cdot\mid x^{(t)},i)\) over \(\mathcal V\). A decoding rule commits sampled tokens at one or more masked positions to obtain \(x^{(t+1)}\). Decoding terminates at step \(T\) when \(\mathcal M_T=\emptyset\), yielding the completed continuation tokens
\[
\hat z=x^{(T)}_{L_{\mathrm p}+1:L_{\mathrm p}+N}\in\mathcal V^N
\]
and the decoded output text \(\hat y=\Dec(\hat z)\).

\paragraph{Semantic watermarking.}
Semantic watermarks encode signals in the meaning of a text unit, which improves robustness to paraphrasing and rewriting~\citep{hou2024semstamp,ren2024robust,huo2026pmark}. Let \(\mathcal T\) denote the set of finite decoded text strings, and let \(u\in\mathcal T\) be a watermarking unit, typically a sentence or short text span. We use a fixed semantic encoder \(E_\eta:\mathcal T\to\mathbb R^d\), where \(\eta\) denotes its parameters and \(d\) is the embedding dimension. Let \(\mathcal K\) be the key space and \(\kappa\in\mathcal K\) the secret key. For every unit index \(b\geq1\), the key deterministically specifies \(C\) channel pairs \(\{(\theta_{b,j},s_{b,j})\}_{j=1}^{C}\), where \(\theta_{b,j}\in\mathbb R^d\) is a semantic direction and \(s_{b,j}\in\{-1,+1\}\) specifies its target side. Each \(\theta_{b,j}\) is obtained by normalizing a pseudorandom Gaussian vector generated from the key, and an independent random bit generated from the key determines \(s_{b,j}\). The detector can regenerate these directions and signs from the same key. We define the semantic score of a completed unit as
\begin{equation}
\label{eq:keyed_unit_score}
g_b(u)
=
\frac{1}{C}\sum_{j=1}^{C}
s_{b,j}\langle E_\eta(u),\theta_{b,j}\rangle.
\end{equation}
Here, \(\langle a,b\rangle=a^\top b\) denotes the Euclidean inner product on \(\mathbb R^d\). A larger \(g_b(u)\) indicates stronger alignment with the target specified by the key. Defining the score for arbitrary \(b\) also lets the detector regenerate the channel pairs required when it evaluates unit sizes that differ from the generation size.

We inject the watermark by guiding each semantic unit toward a larger score \(g_b(u)\) during denoising, producing evidence that can be recovered by the detector.

\section{Method}
We partition the generated continuation into fixed token regions and treat each region as a semantic unit. During denoising, an active unit is only partially observed, so its final semantic representation cannot be scored directly. For each local candidate update at step \(t\), we sample \(R_t\) approximate completions of the active unit and average their semantic watermark scores. We commit the candidate with the largest rollout estimate. The repeated local selections gradually move each completed unit toward the target specified by the key.

\subsection{Local Candidate Sampling within a Semantic Unit}
\label{sec:local_candidate_sampling}
For an integer \(q\geq1\), write \([q]=\{1,\ldots,q\}\). Let \(m\geq1\) be an integer semantic unit size and \(B=\lceil N/m\rceil\) the number of semantic units, where \(\lceil a\rceil\) denotes the smallest integer not less than \(a\). For each \(b\in[B]\), define
\[
\mathcal U_b = \left\{ L_{\mathrm p}+(b-1)m+1,\ldots, \min\{L_{\mathrm p}+bm,L_{\mathrm p}+N\} \right\}.
\]
At denoising step \(t\), the decoding schedule may select one or more unfinished semantic units. Consider a selected unit \(b\in[B]\). Its remaining masked positions are \(\mathcal M_{b,t} = \mathcal U_b\cap\mathcal M_t\). If \(\mathcal M_{b,t}\neq\emptyset\), let \(K_{t,b}\geq1\) be the prescribed number of local candidates at step \(t\) for unit \(b\), and let \(r_t\geq1\) be the integer budget for position updates in each candidate. For each \(k\in[K_{t,b}]\), the position policy samples
\[
\mathcal I_{t,b,k} \sim \pi_{\mathrm{pos}} \left( \cdot \mid x^{(t)},\mathcal M_{b,t},r_t \right),
\]
with support restricted to subsets satisfying
\[
\mathcal I_{t,b,k}\subseteq\mathcal M_{b,t},
\qquad
|\mathcal I_{t,b,k}|=\min\{r_t,|\mathcal M_{b,t}|\}.
\]
The candidate is initialized as \(c_{t,b,k}\leftarrow x^{(t)}\). For every \(i\in\mathcal I_{t,b,k}\), its token is sampled according to
\[
(c_{t,b,k})_i \sim \pi_{\mathrm{tok}} \left( \cdot \mid p_\phi(\cdot\mid x^{(t)},i) \right),
\]
All other positions keep their values from \(x^{(t)}\). The distribution \(\pi_{\mathrm{pos}}\) may use random or confidence based position selection. The token distribution \(\pi_{\mathrm{tok}}\) is defined over \(\mathcal V\) and may use categorical, Gumbel max, or top \(p\) sampling.

\subsection{Semantic Scoring with Rollouts and Candidate Selection}
\label{sec:rollout_scoring}
For each candidate \(c_{t,b,k}\), we draw \(R_t\) one-step rollout completions conditioned on the candidate state. For rollout \(\rho\in[R_t]\), initialize
\(
\tilde c_{t,b,k,\rho}\leftarrow c_{t,b,k}.
\)
The remaining masked positions in the current unit are \(\mathcal M_{b,t}\setminus\mathcal I_{t,b,k}\). For each such position, we sample directly from the candidate-conditioned model distribution:
\[
(\tilde c_{t,b,k,\rho})_i
\sim
p_\phi(\cdot\mid c_{t,b,k},i),
\qquad
i\in\mathcal M_{b,t}\setminus\mathcal I_{t,b,k},
\]
using the rollout temperature, which may differ from the candidate temperature used by \(\pi_{\mathrm{tok}}\). The resulting completed text of unit \(b\) is
\(
\tilde u_{t,b,k,\rho}
=
\Dec\!\left(\big((\tilde c_{t,b,k,\rho})_i\big)_{i\in\mathcal U_b}\right).
\)
These rollout tokens are used only for scoring and are never committed. Using the semantic score in~\eqref{eq:keyed_unit_score}, we compute the rollout estimate for candidate \(k\) as
\begin{align}
\label{eq:rollout_candidate_score}
W_{t,b,k}
&=
\frac{1}{R_t}\sum_{\rho=1}^{R_t}
g_b(\tilde u_{t,b,k,\rho}).
\end{align}
We select
\begin{equation}
\label{eq:candidate_selection}
k_{t,b}^\star\in\arg\max_{k\in[K_{t,b}]}W_{t,b,k},
\end{equation}
and commit the corresponding local update:
\begin{equation}
\label{eq:candidate_commit}
x_i^{(t+1)}
=
\begin{cases}
(c_{t,b,k_{t,b}^\star})_i,
& i\in\mathcal I_{t,b,k_{t,b}^\star},\\
x_i^{(t)},
& \text{otherwise}.
\end{cases}
\end{equation}
When multiple semantic units are selected at the same denoising step, we apply~\eqref{eq:candidate_selection}--\eqref{eq:candidate_commit} to each selected unit and commit the resulting updates simultaneously because their position sets do not overlap.
\section{Detection}
\label{sec:detection}
We first define detection at a fixed unit size and then extend it by scanning calibrated candidate unit sizes to recover evidence after boundary shifts.

\subsection{Detection at a Fixed Unit Size}
\label{sec:standard_detection}
Given a text \(y\), the detector tokenizes it as
\[
z(y)=\Tok(y)=(z_1,\ldots,z_L)\in\mathcal V^L.
\]
For a positive integer candidate unit size \(m'\), define
\[
n_{m'} = \left\lceil\frac{L}{m'}\right\rceil, \qquad \mathcal J_b^{(m')} = \{(b-1)m'+1,\ldots,\min\{bm',L\}\},
\]
for \(b\in[n_{m'}]\). The corresponding decoded text unit is
\[
u_b^{(m')}(y) = \Dec\!\left((z_i)_{i\in\mathcal J_b^{(m')}}\right),
\]
where the final unit may contain fewer than \(m'\) tokens. For each unit index \(b\), the detector regenerates from the secret key \(\kappa\) the same channel pairs \(\{(\theta_{b,j},s_{b,j})\}_{j=1}^{C}\) used during watermark generation. Using the same semantic encoder \(E_\eta\) as in generation, the watermark score under candidate unit size \(m'\) is
\begin{equation}
\label{eq:fixed_unit_score}
S_{m'}(y)
=
\frac{1}{n_{m'}}
\sum_{b=1}^{n_{m'}}
\frac{1}{C}
\sum_{j=1}^{C}
s_{b,j}
\left\langle
E_\eta\!\left(u_b^{(m')}(y)\right),
\theta_{b,j}
\right\rangle.
\end{equation}
The score \(S_{m'}(y)\) measures the average alignment between the detected semantic units and their watermark directions. Detection at the generation size uses \(m\) and compares \(S_m(y)\) with a threshold calibrated on unwatermarked texts at the target false positive rate.

\subsection{Robust Detection by Scanning Unit Sizes}
\label{sec:robust_detection}
Paraphrasing, insertion, deletion, and retokenization can change token length and move the unit boundaries observed by the detector. We scan a fixed finite set of positive candidate unit sizes \(\mathcal G\subset\mathbb N_{+}\) to recover evidence across these shifts. Let \(\mathcal D_0=\{y_i^{(0)}\}_{i=1}^{n_0}\) be a fixed calibration set of \(n_0=|\mathcal D_0|\) unwatermarked texts that is disjoint from all evaluated texts. For each \(m'\in\mathcal G\), we compute \(S_{m'}(y)\) and calibrate it separately using \(\mathcal D_0\). Using the empirical \(p\) value construction of~\citep{phipson2010permutation}, we define
\[
p_{m'}(y) =
\frac{
1+\sum_{y_0\in\mathcal D_0}
\mathbb{I}\!\left\{
S_{m'}(y_0)\geq S_{m'}(y)
\right\}
}{
|\mathcal D_0|+1
}.
\]
Here, \(\mathbb I\{A\}\) equals one when event \(A\) holds and zero otherwise. Smaller \(p_{m'}(y)\) values indicate stronger watermark evidence for candidate size \(m'\). We evaluate several unit sizes and apply a Bonferroni correction~\citep{dunn1961multiple}:
\[
p_{\mathrm{scan}}(y) = \min\!\left\{ 1,\, |\mathcal G| \min_{m'\in\mathcal G}p_{m'}(y) \right\}.
\]
The corresponding robust detection score is
\[
S_{\mathrm{rob}}(y) = -\log p_{\mathrm{scan}}(y).
\]
At significance level \(\alpha\in(0,1)\), the detector classifies \(y\) as watermarked when
\(
p_{\mathrm{scan}}(y)\leq\alpha.
\)
\begin{proposition}[False positive control for calibrated scanning]
\label{prop:scan_fpr_control}
Let \(Y\) denote a random test text under the unwatermarked null distribution. Suppose that \(\mathcal G\) is a fixed finite set and that \((Y,y_1^{(0)},\ldots,y_{n_0}^{(0)})\) is exchangeable under this distribution. Then, for every \(\alpha\in[0,1]\),
\[
\Pr\!\left( p_{\mathrm{scan}}(Y)\leq\alpha \right) \leq\alpha.
\]
where the probability is taken under the unwatermarked null distribution. The Bonferroni corrected scan controls the false positive rate and does not require independence among scores from different unit sizes.
\end{proposition}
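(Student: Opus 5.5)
The proof has two steps. First, each per-size $p$ value is super-uniform under the null. Second, a union bound over $\mathcal G$ absorbs the Bonferroni factor. The trivial cases are handled separately. If $\alpha\ge 1$ the bound is immediate. If $|\mathcal G|\alpha$ would push the threshold past one, the $\min\{1,\cdot\}$ truncation is harmless, because $p_{\mathrm{scan}}(Y)\le\alpha<1$ forces $p_{\mathrm{scan}}(Y)=|\mathcal G|\min_{m'}p_{m'}(Y)$. So for $\alpha<1$ we have
\[
\{p_{\mathrm{scan}}(Y)\le\alpha\}\subseteq\bigcup_{m'\in\mathcal G}\{p_{m'}(Y)\le \alpha/|\mathcal G|\},
\]
and the union bound gives $\Pr(p_{\mathrm{scan}}(Y)\le\alpha)\le\sum_{m'\in\mathcal G}\Pr(p_{m'}(Y)\le\alpha/|\mathcal G|)$. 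The union bound needs no assumption about the joint law of the scores across sizes, which is why independence among unit sizes is not required.

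The main step is to show that for each fixed $m'$ and every $\beta\in[0,1]$, $\Pr(p_{m'}(Y)\le\beta)\le\beta$. This is the Phipson--Smyth rank argument. Fix $m'$ and write $V_0=S_{m'}(Y)$ and $V_i=S_{m'}(y_i^{(0)})$. The map $S_{m'}$ is a fixed deterministic function: the key, encoder, tokenizer and $\mathcal G$ are all fixed and do not depend on the data. Hence exchangeability of $(Y,y_1^{(0)},\ldots,y_{n_0}^{(0)})$ transfers to $(V_0,\ldots,V_{n_0})$.

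Let $n=n_0+1$ and define $R_0=\#\{i\in\{0,\ldots,n_0\}:V_i\ge V_0\}$, so that $p_{m'}(Y)=R_0/n$. The event $p_{m'}(Y)\le\beta$ is the event $R_0\le k$ with $k=\lfloor \beta n\rfloor$. By exchangeability, $\Pr(R_0\le k)=\frac1n\sum_{l=0}^{n_0}\Pr(R_l\le k)=\frac1n\mathbb E[\#\{l:R_l\le k\}]$, where $R_l$ is defined analogously for each index.

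The key combinatorial claim is deterministic: for any reals $v_0,\ldots,v_{n_0}$, at most $k$ indices $l$ satisfy $\#\{i:v_i\ge v_l\}\le k$. To see this, sort the values in decreasing order. If $l$ has this property, every $i$ with $v_i\ge v_l$ shares it. The qualifying indices are therefore upward-closed in value, and the smallest qualifying value already has all of them counted among the at most $k$ indices with value at least it. Ties only increase the counts, so they are handled automatically. This gives $\Pr(R_0\le k)\le k/n\le\beta$.

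I expect the tie handling and the exact exchangeability bookkeeping to be the only delicate points. This includes making sure the calibration set is fixed and disjoint, as the statement assumes, so that no data-dependent choice of $\mathcal G$ or of the scores breaks exchangeability. Combining the two steps gives the bound $\sum_{m'\in\mathcal G}\alpha/|\mathcal G|=\alpha$, which proves the proposition.
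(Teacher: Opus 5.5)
Your proposal is correct and follows the paper's proof essentially step for step. The paper likewise proves superuniformity of each $p_{m'}$ by the Phipson--Smyth rank argument: it averages $\Pr(\mathrm{rk}_i\leq h)$ over exchangeable indices and shows deterministically, via the qualifying index with the smallest score, that at most $h$ indices can have rank at most $h$. It then combines the unit sizes with a Bonferroni union bound that needs no independence across $m'$.
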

Appendix~\ref{app:scan_fpr_control_proof} gives the proof. All DenMark detection results use $S_{\mathrm{rob}}(y)$ as the ranking statistic unless stated otherwise. Although DenMark uses a simple fixed unit size during generation, it remains robust to non-uniform local length changes, as shown in Appendix~\ref{app:nonuniform_local_length_attacks}.

\section{Generation Algorithm Interpretation}
\label{sec:generation_algorithm_interpretation}
We analyze how candidate selection with rollouts accumulates watermark signal during denoising. Let \(\pi\) denote the DenMark decoder defined above. Let \(\pi^0\) be a reference decoder that uses the same candidate sampling rule, draws one local candidate for each selected semantic unit, and commits that candidate directly.

At step \(t\), let
\[
\mathcal B_t
=
\{b\in[B]: b \text{ is selected for update at step } t\}
\]
denote the set of active semantic units. For each \(b\in\mathcal B_t\), DenMark selects
\(k_{t,b}^\star\) according to~\eqref{eq:candidate_selection}.
Throughout this section, vectors indexed by \(b\in\mathcal B_t\) have \(|\mathcal B_t|\) components ordered by increasing unit index \(b\). Define the joint candidate index vector
\(
\mathbf k_t^\star
= (k_{t,b}^\star)_{b\in\mathcal B_t},
\)
and let
\(
\mathbf 1_t=(1,\ldots,1)\in\{1\}^{|\mathcal B_t|}
\)
denote the reference choice that selects the first candidate for every active unit. We couple the first candidate of each active unit with the candidate committed by \(\pi^0\).

For a joint candidate index vector
\(\mathbf k_t=(k_{t,b})_{b\in\mathcal B_t}\), where \(k_{t,b}\in[K_{t,b}]\) specifies the candidate chosen for unit \(b\), let \(c_{t,\mathbf k_t}\) denote the state obtained by jointly committing the
corresponding local updates \(\{c_{t,b,k_{t,b}}\}_{b\in\mathcal B_t}\). At step \(t\), consider a reverse hybrid trajectory in which the first \(t\) decisions follow \(\pi^0\). Define the continuation value as
\begin{equation}
\label{eq:continued_policy_value}
Q_{t,\mathbf k_t}^{\pi}
=
\mathbb E_{\pi}\!\left[
S_m(\hat y)
\,\middle|\,
x^{(t+1)}\leftarrow c_{t,\mathbf k_t}
\right],
\end{equation}
where the joint candidate update is committed at the current step and all subsequent decisions follow \(\pi\). We define the expected step advantage of the selected candidate over the reference choice as
\begin{equation}
\label{eq:continued_policy_advantage}
\Delta_t^{\pi}
=
\mathbb E\!\left[
Q_{t,\mathbf k_t^\star}^{\pi}
-
Q_{t,\mathbf 1_t}^{\pi}
\right].
\end{equation}

\begin{proposition}[Exact decomposition of watermark score improvement]
\label{prop:cumulative_continued_policy_decomposition}
The expected improvement in the final watermark score under \(\pi\) relative to the reference decoder \(\pi^0\) decomposes as
\begin{equation}
\label{eq:final_score_improvement}
\mathbb E_{\pi}\!\left[
S_m(\hat y)
\,\middle|\,
x^{(0)}
\right]
-
\mathbb E_{\pi^0}\!\left[
S_m(\hat y)
\,\middle|\,
x^{(0)}
\right]
=
\sum_{t=0}^{T-1}\Delta_t^{\pi}.
\end{equation}
\end{proposition}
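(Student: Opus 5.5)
The plan is a standard telescoping (performance-difference) argument over hybrid decoders. For each \(t\in\{0,\ldots,T\}\), let \(H_t\) denote the hybrid decoder that follows \(\pi^0\) for the first \(t\) decisions (steps \(0,\ldots,t-1\)) and follows \(\pi\) from step \(t\) onward. Then \(H_0=\pi\) and \(H_T=\pi^0\). Write \(V_t=\mathbb E_{H_t}[S_m(\hat y)\mid x^{(0)}]\). The left-hand side of~\eqref{eq:final_score_improvement} equals \(V_0-V_T\), which telescopes as \(\sum_{t=0}^{T-1}(V_t-V_{t+1})\). It then suffices to show \(V_t-V_{t+1}=\Delta_t^{\pi}\) for each \(t\).

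To prove this, fix \(t\) and condition on the state \(x^{(t)}\) reached by the first \(t\) reference decisions. This state has the same law under \(H_t\) and \(H_{t+1}\), since both follow \(\pi^0\) up to that point. At step \(t\), both hybrids draw the same local candidates \(\{c_{t,b,k}\}\), and we use the coupling stated before the proposition: the first candidate of each active unit is the one \(\pi^0\) would commit.

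Under \(H_t\), step \(t\) is a \(\pi\)-decision. It commits \(c_{t,\mathbf k_t^\star}\) and continues with \(\pi\). By the tower property and the definition~\eqref{eq:continued_policy_value}, its conditional expected final score is \(Q^{\pi}_{t,\mathbf k_t^\star}\). Under \(H_{t+1}\), step \(t\) commits \(c_{t,\mathbf 1_t}\) and then continues with \(\pi\), so its conditional expected final score is \(Q^{\pi}_{t,\mathbf 1_t}\). Taking expectations over \(x^{(t)}\), the candidates, and the rollouts gives \(V_t-V_{t+1}=\mathbb E[Q^{\pi}_{t,\mathbf k_t^\star}-Q^{\pi}_{t,\mathbf 1_t}]=\Delta_t^{\pi}\), as in~\eqref{eq:continued_policy_advantage}. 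Summing over \(t\) yields the claim.

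The main obstacle is making this rigorous. Two things need care. First, the rollout randomness used to form \(W_{t,b,k}\) affects only which index is selected and never the committed tokens, so \(Q^{\pi}_{t,\mathbf k}\) depends only on the committed state. Second, the total number of steps must be handled when it is random or differs between decoders. I would handle the second point by padding with idle steps: after termination each decoder makes trivial decisions, so \(T\) can be taken as a common deterministic bound and \(\Delta_t^{\pi}=0\) on terminated paths. The number of selected units at each step is fixed by the schedule from the state, so the two decoders at step \(t\) agree on \(\mathcal B_t\). I would also state explicitly that the expectation in \(\Delta_t^{\pi}\) is over the hybrid's law of \(x^{(t)}\) (the "reverse hybrid trajectory" of the definition), because this matching is exactly what makes the telescoping identity exact rather than approximate.
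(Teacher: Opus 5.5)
Your proposal is correct and follows essentially the same route as the paper. The paper also defines the reverse hybrids \(\pi^{[\ell]}\) (your \(H_\ell\)), couples the step-\(t\) randomness so that \(\pi^{[t]}\) commits \(c_{t,\mathbf k_t^\star}\) and \(\pi^{[t+1]}\) commits \(c_{t,\mathbf 1_t}\), identifies each adjacent difference with \(\Delta_t^\pi\) through the definition of \(Q^\pi\), and telescopes. Your remarks on padding with idle steps and on the rollout randomness affecting only which index is selected are refinements that the paper leaves implicit, not a different argument.
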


Proposition~\ref{prop:cumulative_continued_policy_decomposition} attributes the final improvement in watermark score to the sum of the expected step advantages along the denoising trajectory.

DenMark uses the rollout score to select a local candidate for each active semantic unit. We estimate the step advantage at each decoding step over 30 prompts for LLaDA-8B and Dream-v0-Instruct-7B. Figure~\ref{fig:reverse_hybrid_cumulative_delta} shows the cumulative estimates. Some individual steps have neutral or negative values, and the mean cumulative advantage increases over the trajectory and ends positive for both models.
\begin{figure}[!htb]
    \centering
    \includegraphics[width=0.90\textwidth]{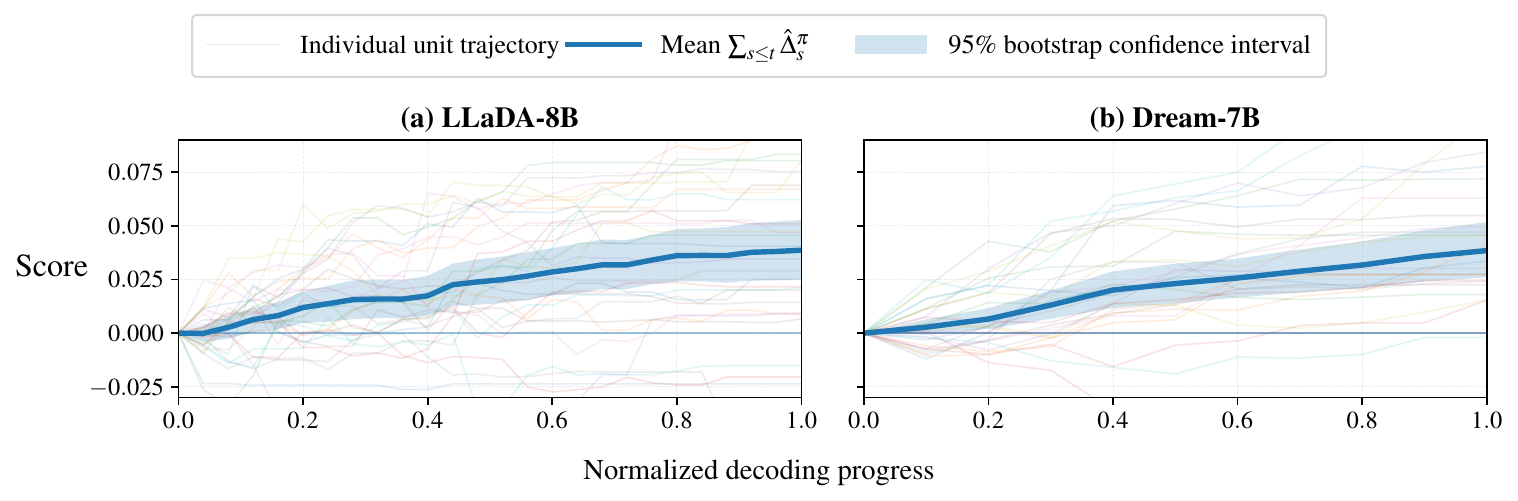}
    \caption{\textbf{Accumulation of empirical step advantages during decoding.} Thin curves show individual unit trajectories. The thick blue curve shows the mean cumulative advantage, and the shaded region gives the 95\% bootstrap confidence interval over 30 prompts. The mean cumulative advantage increases over the trajectory and remains positive near the end of decoding for both LLaDA-8B and Dream-v0-Instruct-7B.}
    \label{fig:reverse_hybrid_cumulative_delta}
\end{figure}

At the final decoding step, the mean cumulative advantages are \(0.0387\) for LLaDA-8B and \(0.0385\) for Dream-v0-Instruct-7B, with 95\% bootstrap confidence intervals \([0.0251,0.0527]\) and \([0.0260,0.0517]\), respectively. The positive final values show that candidate selection with rollouts adds watermark signal over the decoding trajectory. Appendix~\ref{app:cumulative_continued_policy_decomposition_proof} gives the proof.

\section{Experiments}
\label{sec:experiments}
We evaluate detection under semantic and token attacks, analyze sensitivity to the main hyperparameters and the unit size scan, and compare generation time across methods.

\subsection{Experimental Setup}
\label{sec:experimental_setup}

We evaluate LLaDA-8B~\citep{nie2025large}, LLaDA1.5-8B~\citep{zhu2025llada15}, LLaDA2.0-mini~\citep{bie2025llada20}, and Dream-v0-Instruct-7B~\citep{ye2025dream}. Following WaterBench~\citep{tu2024waterbench}, we use Finance~\citep{maia2018fiqa}, Alpaca~\citep{dubois2023alpacafarm}, and LongForm~\citep{fan2019eli5}. We compare DenMark with four recent state-of-the-art watermarking baselines: DLM-KGW~\citep{gloaguen2025watermarking}, DGMark~\citep{hong2026dgmark}, PatternMark~\citep{chen2025patternmark}, and UMR~\citep{yang2026umr}. We report AUC and TPR at false positive rates of $0.5\%$, $1\%$, and $5\%$. At each operating point, we use the empirical score threshold that gives the highest TPR while keeping the measured false positive rate on the held-out negatives no greater than the target. We tune generation hyperparameters for each method to keep output quality comparable across methods. Table~\ref{tab:generation_quality_deltas} reports the resulting quality measurements, and Appendix~\ref{app:experimental_details} provides baseline descriptions, parameter settings, and complete experimental configurations.

\subsection{Robustness to Semantic Attacks}
\label{sec:semantic_attack_results}

Following prior evaluations of semantic watermarks~\citep{hou2024semstamp,hou2024ksemstamp,ren2024robust,dabiriaghdam2025simmark,huo2026pmark}, we apply rewriting, compression, and expansion sentence by sentence and also apply paraphrasing to the full document. These transformations preserve the underlying meaning while changing wording, syntax, sentence structure, and, for compression and expansion, text length. All attacks use GPT-4o-mini. Appendix~\ref{app:experimental_details} provides the filtering criteria and prompts.

\paragraph{Results.}
DenMark has the highest detection score in every evaluated combination of model, dataset, and attacked condition. Averaged over the 48 attacked combinations, DenMark achieves 63.55\%, 71.89\%, and 87.45\% TPR at 0.5\%, 1\%, and 5\% FPR, respectively, with an AUC of 0.9693. The strongest baseline reaches 35.16\% at 0.5\% FPR , 41.90\% at 1\% FPR, 61.80\% at 5\% FPR , and 0.8915 AUC . DenMark has the best value in all 48 attacked combinations for each of the four metrics. Detection remains high after compression and rewriting. Expansion produces the lowest DenMark scores among the evaluated semantic transformations, and full document paraphrasing still leaves a clear margin over the token watermark baselines.

\begin{table*}[t]
\centering
\caption{
\textbf{Detection robustness on LLaDA-8B and LLaDA1.5-8B.}
Each entry reports (TPR@0.5\% / TPR@1\% / TPR@5\% FPR / AUC).
}
\label{tab:c4_empirical_roc_1}
\scriptsize
\renewcommand{\arraystretch}{0.88}
\setlength{\tabcolsep}{1.5pt}
\resizebox{\textwidth}{!}{%
\begin{tabular}{@{}ll|ccc@{}}
\toprule
Attack & Method & Finance & Alpaca & LongForm \\
\midrule
\multicolumn{5}{c}{\textbf{LLaDA-8B}} \\
\midrule
\rowcolor{gray!12}\multirow{5}{*}{Clean} & DenMark & (99.00 / 99.67 / 100.00 / 0.9996) & (95.24 / 97.07 / 98.53 / 0.9915) & (99.67 / 100.00 / 100.00 / 0.9998) \\
 & DLM-KGW & (99.40 / 99.40 / 100.00 / 0.9998) & (93.36 / 95.23 / 98.34 / 0.9957) & (99.80 / 99.80 / 99.80 / 0.9998) \\
 & DGMark & (100.00 / 100.00 / 100.00 / 1.0000) & (97.98 / 98.32 / 98.65 / 0.9897) & (100.00 / 100.00 / 100.00 / 1.0000) \\
 & PatternMark & (100.00 / 100.00 / 100.00 / 1.0000) & (98.56 / 98.56 / 100.00 / 0.9997) & (100.00 / 100.00 / 100.00 / 1.0000) \\
 & UMR & (100.00 / 100.00 / 100.00 / 1.0000) & (98.33 / 99.00 / 99.67 / 0.9995) & (99.67 / 100.00 / 100.00 / 1.0000) \\
\addlinespace[1pt]
\cmidrule(lr){1-5}
\rowcolor{gray!12}\multirow{5}{*}{Rewrite} & DenMark & (\textbf{63.55} / \textbf{74.92} / \textbf{90.30} / \textbf{0.9784}) & (\textbf{63.37} / \textbf{74.73} / \textbf{89.01} / \textbf{0.9690}) & (\textbf{70.33} / \textbf{81.00} / \textbf{93.67} / \textbf{0.9838}) \\
 & DLM-KGW & (10.80 / 18.20 / 38.00 / 0.8224) & (19.09 / 25.52 / 46.27 / 0.8354) & (9.20 / 13.80 / 29.40 / 0.7730) \\
 & DGMark & (34.00 / 42.33 / 72.33 / 0.9344) & (42.42 / 51.18 / 68.01 / 0.9072) & (36.67 / 47.33 / 69.33 / 0.9322) \\
 & PatternMark & (35.33 / 42.67 / 63.00 / 0.8917) & (35.74 / 40.43 / 60.65 / 0.8930) & (30.00 / 36.00 / 51.67 / 0.8542) \\
 & UMR & (34.67 / 40.00 / 59.33 / 0.8889) & (48.33 / 56.33 / 75.00 / 0.9254) & (23.67 / 29.33 / 53.00 / 0.8676) \\
\addlinespace[1pt]
\cmidrule(lr){1-5}
\rowcolor{gray!12}\multirow{5}{*}{Compression} & DenMark & (\textbf{80.94} / \textbf{85.28} / \textbf{96.32} / \textbf{0.9877}) & (\textbf{79.12} / \textbf{84.98} / \textbf{93.41} / \textbf{0.9813}) & (\textbf{91.00} / \textbf{93.33} / \textbf{97.33} / \textbf{0.9914}) \\
 & DLM-KGW & (37.20 / 47.00 / 68.40 / 0.9266) & (42.53 / 48.76 / 68.26 / 0.9048) & (37.60 / 48.20 / 68.20 / 0.9265) \\
 & DGMark & (62.33 / 74.00 / 86.00 / 0.9647) & (57.58 / 65.66 / 75.08 / 0.9093) & (69.33 / 78.67 / 92.00 / 0.9776) \\
 & PatternMark & (51.00 / 55.33 / 75.33 / 0.9281) & (45.85 / 50.90 / 72.56 / 0.9221) & (50.67 / 55.67 / 77.33 / 0.9464) \\
 & UMR & (47.67 / 55.00 / 74.67 / 0.9358) & (58.67 / 66.33 / 78.67 / 0.9520) & (57.33 / 65.00 / 81.33 / 0.9517) \\
\addlinespace[1pt]
\cmidrule(lr){1-5}
\rowcolor{gray!12}\multirow{5}{*}{Expansion} & DenMark & (\textbf{33.11} / \textbf{44.82} / \textbf{70.23} / \textbf{0.9209}) & (\textbf{42.86} / \textbf{54.58} / \textbf{74.73} / \textbf{0.9266}) & (\textbf{47.33} / \textbf{61.00} / \textbf{83.00} / \textbf{0.9669}) \\
 & DLM-KGW & (7.20 / 10.40 / 28.40 / 0.7658) & (12.24 / 18.46 / 35.89 / 0.7873) & (4.00 / 6.80 / 21.00 / 0.6944) \\
 & DGMark & (10.67 / 15.33 / 39.33 / 0.8344) & (23.57 / 30.30 / 51.52 / 0.8596) & (9.67 / 18.33 / 43.33 / 0.8571) \\
 & PatternMark & (32.00 / 39.33 / 60.33 / 0.8761) & (24.91 / 28.52 / 50.54 / 0.8434) & (15.67 / 19.00 / 41.00 / 0.8144) \\
 & UMR & (21.00 / 24.00 / 45.33 / 0.8461) & (36.33 / 42.33 / 63.00 / 0.8879) & (22.33 / 29.00 / 44.33 / 0.8223) \\
\addlinespace[1pt]
\cmidrule(lr){1-5}
\rowcolor{gray!12}\multirow{5}{*}{Document Paraphrase} & DenMark & (\textbf{56.19} / \textbf{67.56} / \textbf{84.95} / \textbf{0.9578}) & (\textbf{53.85} / \textbf{64.84} / \textbf{85.35} / \textbf{0.9670}) & (\textbf{63.00} / \textbf{73.00} / \textbf{91.00} / \textbf{0.9741}) \\
 & DLM-KGW & (11.80 / 15.60 / 36.00 / 0.8047) & (20.12 / 29.25 / 46.27 / 0.8216) & (7.40 / 11.80 / 27.20 / 0.7417) \\
 & DGMark & (27.00 / 37.33 / 61.00 / 0.8937) & (28.96 / 38.72 / 59.60 / 0.8883) & (23.67 / 36.00 / 61.00 / 0.9081) \\
 & PatternMark & (19.67 / 27.33 / 50.67 / 0.8507) & (29.60 / 35.02 / 54.15 / 0.8604) & (17.00 / 23.67 / 43.67 / 0.8038) \\
 & UMR & (28.00 / 33.67 / 57.00 / 0.8796) & (43.33 / 50.67 / 69.00 / 0.9082) & (24.00 / 31.00 / 50.00 / 0.8599) \\
\addlinespace[2pt]
\midrule
\multicolumn{5}{c}{\textbf{LLaDA1.5-8B}} \\
\midrule
\rowcolor{gray!12}\multirow{5}{*}{Clean} & DenMark & (100.00 / 100.00 / 100.00 / 0.9994) & (95.41 / 96.47 / 98.59 / 0.9964) & (99.00 / 99.67 / 100.00 / 0.9993) \\
 & DLM-KGW & (99.40 / 99.60 / 100.00 / 0.9999) & (94.85 / 96.08 / 98.14 / 0.9972) & (99.33 / 99.33 / 100.00 / 0.9998) \\
 & DGMark & (100.00 / 100.00 / 100.00 / 1.0000) & (96.31 / 97.32 / 97.65 / 0.9792) & (100.00 / 100.00 / 100.00 / 1.0000) \\
 & PatternMark & (100.00 / 100.00 / 100.00 / 1.0000) & (97.90 / 98.95 / 99.30 / 0.9980) & (100.00 / 100.00 / 100.00 / 1.0000) \\
 & UMR & (100.00 / 100.00 / 100.00 / 1.0000) & (97.00 / 97.00 / 97.67 / 0.9935) & (99.67 / 100.00 / 100.00 / 1.0000) \\
\addlinespace[1pt]
\cmidrule(lr){1-5}
\rowcolor{gray!12}\multirow{5}{*}{Rewrite} & DenMark & (\textbf{58.53} / \textbf{68.90} / \textbf{88.29} / \textbf{0.9773}) & (\textbf{59.01} / \textbf{72.08} / \textbf{85.87} / \textbf{0.9728}) & (\textbf{71.33} / \textbf{80.67} / \textbf{91.67} / \textbf{0.9822}) \\
 & DLM-KGW & (9.60 / 16.60 / 34.60 / 0.8048) & (18.56 / 25.57 / 45.36 / 0.8313) & (9.67 / 13.67 / 31.67 / 0.7819) \\
 & DGMark & (29.00 / 41.33 / 65.67 / 0.9100) & (36.58 / 45.64 / 67.45 / 0.9099) & (37.00 / 49.33 / 73.33 / 0.9429) \\
 & PatternMark & (30.43 / 40.80 / 62.54 / 0.8985) & (33.57 / 40.91 / 60.84 / 0.8897) & (21.67 / 30.00 / 47.67 / 0.8552) \\
 & UMR & (35.67 / 39.33 / 62.33 / 0.9024) & (50.67 / 56.67 / 74.67 / 0.9295) & (25.67 / 33.33 / 51.67 / 0.8594) \\
\addlinespace[1pt]
\cmidrule(lr){1-5}
\rowcolor{gray!12}\multirow{5}{*}{Compression} & DenMark & (\textbf{80.60} / \textbf{85.28} / \textbf{92.64} / \textbf{0.9842}) & (\textbf{78.09} / \textbf{83.75} / \textbf{90.46} / \textbf{0.9701}) & (\textbf{88.33} / \textbf{93.00} / \textbf{97.00} / \textbf{0.9892}) \\
 & DLM-KGW & (37.40 / 48.00 / 68.80 / 0.9301) & (41.44 / 52.16 / 70.52 / 0.9229) & (41.67 / 50.67 / 71.00 / 0.9291) \\
 & DGMark & (62.33 / 68.00 / 85.67 / 0.9622) & (63.09 / 69.46 / 81.54 / 0.9334) & (69.33 / 77.67 / 89.67 / 0.9771) \\
 & PatternMark & (53.85 / 63.21 / 77.59 / 0.9498) & (45.10 / 53.85 / 69.23 / 0.9058) & (44.33 / 55.00 / 78.00 / 0.9540) \\
 & UMR & (53.33 / 62.33 / 79.00 / 0.9467) & (57.67 / 61.67 / 77.33 / 0.9312) & (58.00 / 64.00 / 82.00 / 0.9537) \\
\addlinespace[1pt]
\cmidrule(lr){1-5}
\rowcolor{gray!12}\multirow{5}{*}{Expansion} & DenMark & (\textbf{35.45} / \textbf{43.81} / \textbf{69.90} / \textbf{0.9233}) & (\textbf{36.40} / \textbf{52.65} / \textbf{75.97} / \textbf{0.9396}) & (\textbf{47.67} / \textbf{61.33} / \textbf{80.67} / \textbf{0.9517}) \\
 & DLM-KGW & (7.80 / 12.00 / 28.60 / 0.7448) & (13.20 / 17.32 / 35.05 / 0.7935) & (5.67 / 7.67 / 23.00 / 0.7313) \\
 & DGMark & (10.00 / 15.67 / 39.00 / 0.8260) & (17.11 / 23.83 / 47.99 / 0.8403) & (13.33 / 20.00 / 44.33 / 0.8676) \\
 & PatternMark & (26.76 / 35.79 / 59.53 / 0.8842) & (24.13 / 30.42 / 48.95 / 0.8518) & (11.67 / 19.33 / 35.67 / 0.8012) \\
 & UMR & (18.67 / 25.00 / 47.67 / 0.8208) & (29.33 / 35.00 / 57.67 / 0.8613) & (15.67 / 21.67 / 42.00 / 0.8155) \\
\addlinespace[1pt]
\cmidrule(lr){1-5}
\rowcolor{gray!12}\multirow{5}{*}{Document Paraphrase} & DenMark & (\textbf{48.49} / \textbf{58.53} / \textbf{78.60} / \textbf{0.9611}) & (\textbf{53.00} / \textbf{63.96} / \textbf{82.69} / \textbf{0.9578}) & (\textbf{65.00} / \textbf{74.67} / \textbf{89.33} / \textbf{0.9776}) \\
 & DLM-KGW & (10.60 / 16.20 / 35.80 / 0.8170) & (19.59 / 26.39 / 45.98 / 0.8439) & (13.00 / 19.00 / 37.67 / 0.8113) \\
 & DGMark & (24.00 / 33.67 / 55.00 / 0.8916) & (31.88 / 41.61 / 62.08 / 0.8909) & (30.00 / 42.67 / 69.33 / 0.9236) \\
 & PatternMark & (21.07 / 29.10 / 48.16 / 0.8639) & (30.77 / 38.11 / 57.34 / 0.8831) & (14.33 / 19.67 / 41.33 / 0.8242) \\
 & UMR & (32.33 / 39.00 / 59.33 / 0.8929) & (44.33 / 50.67 / 70.00 / 0.9168) & (26.67 / 35.00 / 55.33 / 0.8611) \\
\bottomrule
\end{tabular}%
}
\end{table*}

\begin{table*}[t]
\centering
\caption{
\textbf{Detection robustness on LLaDA2.0-mini and Dream-v0-Instruct-7B.}
Each entry reports (TPR@0.5\% / TPR@1\% / TPR@5\% FPR / AUC).
}
\label{tab:c4_empirical_roc_2}
\scriptsize
\renewcommand{\arraystretch}{0.88}
\setlength{\tabcolsep}{1.5pt}
\resizebox{\textwidth}{!}{%
\begin{tabular}{@{}ll|ccc@{}}
\toprule
Attack & Method & Finance & Alpaca & LongForm \\
\midrule
\multicolumn{5}{c}{\textbf{LLaDA2.0-mini}} \\
\midrule
\rowcolor{gray!12}\multirow{5}{*}{Clean} & DenMark & (100.00 / 100.00 / 100.00 / 0.9997) & (95.31 / 96.39 / 98.56 / 0.9955) & (100.00 / 100.00 / 100.00 / 0.9997) \\
 & DLM-KGW & (100.00 / 100.00 / 100.00 / 1.0000) & (97.13 / 97.49 / 98.57 / 0.9977) & (100.00 / 100.00 / 100.00 / 1.0000) \\
 & DGMark & (100.00 / 100.00 / 100.00 / 1.0000) & (99.26 / 99.63 / 99.63 / 0.9997) & (100.00 / 100.00 / 100.00 / 1.0000) \\
 & PatternMark & (100.00 / 100.00 / 100.00 / 1.0000) & (97.56 / 98.26 / 99.30 / 0.9975) & (100.00 / 100.00 / 100.00 / 1.0000) \\
 & UMR & (99.33 / 99.67 / 100.00 / 0.9999) & (90.33 / 93.00 / 95.67 / 0.9886) & (100.00 / 100.00 / 100.00 / 1.0000) \\
\addlinespace[1pt]
\cmidrule(lr){1-5}
\rowcolor{gray!12}\multirow{5}{*}{Rewrite} & DenMark & (\textbf{68.23} / \textbf{75.25} / \textbf{91.30} / \textbf{0.9846}) & (\textbf{67.15} / \textbf{73.65} / \textbf{89.17} / \textbf{0.9751}) & (\textbf{83.67} / \textbf{88.67} / \textbf{98.00} / \textbf{0.9949}) \\
 & DLM-KGW & (26.67 / 35.33 / 58.67 / 0.8949) & (41.22 / 47.31 / 64.16 / 0.9102) & (32.33 / 40.67 / 64.67 / 0.9261) \\
 & DGMark & (32.67 / 39.67 / 64.33 / 0.9283) & (46.86 / 55.72 / 77.86 / 0.9510) & (32.33 / 41.00 / 70.00 / 0.9393) \\
 & PatternMark & (35.33 / 42.33 / 65.00 / 0.9178) & (38.68 / 45.30 / 66.20 / 0.9059) & (39.33 / 47.33 / 73.00 / 0.9349) \\
 & UMR & (18.33 / 25.67 / 53.00 / 0.8647) & (32.00 / 42.33 / 61.33 / 0.8876) & (17.67 / 27.33 / 51.00 / 0.8602) \\
\addlinespace[1pt]
\cmidrule(lr){1-5}
\rowcolor{gray!12}\multirow{5}{*}{Compression} & DenMark & (\textbf{90.64} / \textbf{92.98} / \textbf{98.33} / \textbf{0.9939}) & (\textbf{70.40} / \textbf{76.17} / \textbf{88.09} / \textbf{0.9690}) & (\textbf{96.00} / \textbf{97.33} / \textbf{99.33} / \textbf{0.9985}) \\
 & DLM-KGW & (68.67 / 78.00 / 92.00 / 0.9831) & (53.41 / 63.08 / 76.70 / 0.9404) & (81.00 / 86.33 / 95.00 / 0.9901) \\
 & DGMark & (85.00 / 88.67 / 97.33 / 0.9937) & (65.31 / 73.43 / 85.61 / 0.9592) & (83.00 / 87.33 / 96.00 / 0.9935) \\
 & PatternMark & (64.00 / 73.00 / 90.67 / 0.9792) & (45.99 / 50.52 / 70.03 / 0.9174) & (78.33 / 83.67 / 92.33 / 0.9847) \\
 & UMR & (51.67 / 57.00 / 77.67 / 0.9533) & (41.00 / 49.00 / 67.00 / 0.9035) & (63.67 / 72.67 / 87.33 / 0.9724) \\
\addlinespace[1pt]
\cmidrule(lr){1-5}
\rowcolor{gray!12}\multirow{5}{*}{Expansion} & DenMark & (\textbf{27.76} / \textbf{33.78} / \textbf{66.22} / \textbf{0.9149}) & (\textbf{35.38} / \textbf{42.60} / \textbf{70.76} / \textbf{0.9317}) & (\textbf{42.67} / \textbf{52.67} / \textbf{80.67} / \textbf{0.9524}) \\
 & DLM-KGW & (6.33 / 10.00 / 26.00 / 0.7564) & (25.45 / 31.18 / 53.41 / 0.8651) & (6.67 / 12.00 / 33.67 / 0.8070) \\
 & DGMark & (5.67 / 9.33 / 27.67 / 0.8012) & (24.35 / 31.37 / 53.51 / 0.8715) & (7.33 / 13.33 / 37.33 / 0.8557) \\
 & PatternMark & (20.67 / 27.33 / 50.33 / 0.8485) & (25.09 / 29.62 / 50.87 / 0.8490) & (23.00 / 29.00 / 53.00 / 0.8849) \\
 & UMR & (9.33 / 15.00 / 33.00 / 0.7727) & (22.67 / 27.67 / 45.33 / 0.8070) & (7.33 / 13.33 / 30.67 / 0.7733) \\
\addlinespace[1pt]
\cmidrule(lr){1-5}
\rowcolor{gray!12}\multirow{5}{*}{Document Paraphrase} & DenMark & (\textbf{54.18} / \textbf{64.55} / \textbf{82.61} / \textbf{0.9647}) & (\textbf{52.71} / \textbf{63.90} / \textbf{82.31} / \textbf{0.9585}) & (\textbf{71.00} / \textbf{79.33} / \textbf{94.67} / \textbf{0.9856}) \\
 & DLM-KGW & (19.33 / 27.00 / 51.00 / 0.8607) & (35.48 / 40.86 / 64.52 / 0.8999) & (27.67 / 34.33 / 58.00 / 0.9105) \\
 & DGMark & (25.00 / 33.00 / 59.33 / 0.9099) & (39.11 / 46.13 / 64.94 / 0.9122) & (27.67 / 37.33 / 64.67 / 0.9276) \\
 & PatternMark & (27.33 / 33.33 / 58.67 / 0.8830) & (37.28 / 41.46 / 60.98 / 0.8852) & (33.67 / 39.33 / 66.00 / 0.9146) \\
 & UMR & (21.67 / 28.33 / 53.33 / 0.8738) & (31.00 / 38.00 / 59.00 / 0.8645) & (24.67 / 32.67 / 54.67 / 0.8874) \\
\addlinespace[2pt]
\midrule
\multicolumn{5}{c}{\textbf{Dream-v0-Instruct-7B}} \\
\midrule
\rowcolor{gray!12}\multirow{5}{*}{Clean} & DenMark & (100.00 / 100.00 / 100.00 / 0.9997) & (99.59 / 99.59 / 100.00 / 0.9996) & (100.00 / 100.00 / 100.00 / 0.9997) \\
 & DLM-KGW & (100.00 / 100.00 / 100.00 / 1.0000) & (98.56 / 99.64 / 100.00 / 0.9998) & (99.45 / 100.00 / 100.00 / 1.0000) \\
 & DGMark & (99.53 / 100.00 / 100.00 / 0.9998) & (100.00 / 100.00 / 100.00 / 1.0000) & (100.00 / 100.00 / 100.00 / 1.0000) \\
 & PatternMark & (100.00 / 100.00 / 100.00 / 1.0000) & (99.56 / 99.56 / 100.00 / 0.9999) & (100.00 / 100.00 / 100.00 / 1.0000) \\
 & UMR & (100.00 / 100.00 / 100.00 / 1.0000) & (99.00 / 99.00 / 100.00 / 0.9997) & (100.00 / 100.00 / 100.00 / 1.0000) \\
\addlinespace[1pt]
\cmidrule(lr){1-5}
\rowcolor{gray!12}\multirow{5}{*}{Rewrite} & DenMark & (\textbf{72.00} / \textbf{78.00} / \textbf{94.67} / \textbf{0.9865}) & (\textbf{73.88} / \textbf{80.82} / \textbf{93.88} / \textbf{0.9832}) & (\textbf{84.00} / \textbf{88.67} / \textbf{95.00} / \textbf{0.9914}) \\
 & DLM-KGW & (10.07 / 14.58 / 30.21 / 0.7450) & (15.47 / 21.94 / 41.37 / 0.8077) & (3.31 / 6.63 / 21.55 / 0.7142) \\
 & DGMark & (16.32 / 23.31 / 41.96 / 0.8114) & (48.36 / 58.22 / 77.30 / 0.9425) & (31.00 / 43.00 / 68.33 / 0.9265) \\
 & PatternMark & (42.62 / 48.66 / 71.81 / 0.9358) & (47.16 / 51.09 / 69.87 / 0.9081) & (39.93 / 46.42 / 66.21 / 0.9152) \\
 & UMR & (13.00 / 20.00 / 42.33 / 0.8245) & (33.67 / 37.33 / 60.33 / 0.8863) & (15.67 / 21.67 / 39.67 / 0.8251) \\
\addlinespace[1pt]
\cmidrule(lr){1-5}
\rowcolor{gray!12}\multirow{5}{*}{Compression} & DenMark & (\textbf{88.00} / \textbf{91.00} / \textbf{97.00} / \textbf{0.9949}) & (\textbf{82.86} / \textbf{86.94} / \textbf{93.47} / \textbf{0.9792}) & (\textbf{96.33} / \textbf{98.00} / \textbf{99.33} / \textbf{0.9983}) \\
 & DLM-KGW & (49.23 / 58.67 / 73.98 / 0.9294) & (45.12 / 57.32 / 76.22 / 0.9113) & (50.83 / 59.12 / 79.01 / 0.9405) \\
 & DGMark & (42.89 / 49.65 / 68.30 / 0.8941) & (74.67 / 81.58 / 90.13 / 0.9696) & (67.00 / 76.67 / 90.00 / 0.9787) \\
 & PatternMark & (70.13 / 77.18 / 89.93 / 0.9790) & (55.90 / 62.01 / 78.17 / 0.9423) & (59.73 / 67.24 / 82.59 / 0.9551) \\
 & UMR & (39.00 / 48.67 / 72.33 / 0.9376) & (41.33 / 47.67 / 69.33 / 0.9209) & (41.67 / 48.67 / 72.33 / 0.9501) \\
\addlinespace[1pt]
\cmidrule(lr){1-5}
\rowcolor{gray!12}\multirow{5}{*}{Expansion} & DenMark & (\textbf{36.33} / \textbf{49.00} / \textbf{77.67} / \textbf{0.9333}) & (\textbf{42.04} / \textbf{55.10} / \textbf{79.18} / \textbf{0.9473}) & (\textbf{55.00} / \textbf{66.00} / \textbf{87.33} / \textbf{0.9735}) \\
 & DLM-KGW & (2.43 / 4.17 / 12.85 / 0.6302) & (7.19 / 10.43 / 23.02 / 0.7378) & (1.10 / 1.66 / 7.73 / 0.6111) \\
 & DGMark & (4.90 / 10.49 / 28.67 / 0.7381) & (26.97 / 38.49 / 59.87 / 0.8931) & (9.67 / 17.00 / 40.00 / 0.8573) \\
 & PatternMark & (20.47 / 26.85 / 54.03 / 0.8670) & (28.38 / 37.55 / 57.64 / 0.8690) & (23.21 / 27.65 / 46.76 / 0.8668) \\
 & UMR & (9.00 / 14.00 / 30.67 / 0.7659) & (18.67 / 25.00 / 46.00 / 0.8165) & (8.33 / 12.33 / 31.33 / 0.7807) \\
\addlinespace[1pt]
\cmidrule(lr){1-5}
\rowcolor{gray!12}\multirow{5}{*}{Document Paraphrase} & DenMark & (\textbf{60.67} / \textbf{68.33} / \textbf{87.67} / \textbf{0.9694}) & (\textbf{62.04} / \textbf{71.84} / \textbf{87.76} / \textbf{0.9767}) & (\textbf{71.00} / \textbf{77.33} / \textbf{90.67} / \textbf{0.9786}) \\
 & DLM-KGW & (5.21 / 10.42 / 26.39 / 0.7001) & (10.79 / 15.47 / 37.77 / 0.7826) & (6.63 / 11.05 / 20.99 / 0.6795) \\
 & DGMark & (11.42 / 14.92 / 33.10 / 0.7677) & (40.79 / 48.36 / 68.09 / 0.9204) & (20.33 / 30.33 / 56.33 / 0.8852) \\
 & PatternMark & (23.49 / 33.89 / 56.71 / 0.8739) & (38.86 / 44.54 / 59.83 / 0.8953) & (23.89 / 27.99 / 47.78 / 0.8343) \\
 & UMR & (12.67 / 17.33 / 35.00 / 0.7892) & (22.33 / 27.33 / 50.00 / 0.8334) & (8.33 / 13.00 / 34.00 / 0.8017) \\
\bottomrule
\end{tabular}%
}
\end{table*}

\subsection{Semantic Attacks Using Open Source Models}
\label{sec:opensource_semantic_attacks}

We evaluate two open source paraphrasers, Parrot~\citep{prithivida2021parrot} and DIPPER~\citep{krishna2023paraphrasing}. For Parrot, we vary the candidate prefix size over $\{1,4,7,10\}$ to change the candidate generation configuration. For DIPPER, we vary lexical diversity over $\{20,40,60,80\}$ and keep the remaining decoding settings fixed. Larger lexical diversity values produce stronger changes in wording while preserving semantics.

We evaluate Finance-QA and AlpacaFarm and report TPR at $0.5\%$ and $1\%$ FPR. Figure~\ref{fig:opensource_semantic_attacks} shows that DenMark remains robust under Parrot and loses detection more slowly than the baselines as DIPPER lexical diversity increases. The same pattern appears with open source paraphrasers and with the GPT attacks above.

\FloatBarrier
\begin{figure}[!htb]
    \centering
    \includegraphics[width=0.95\textwidth]{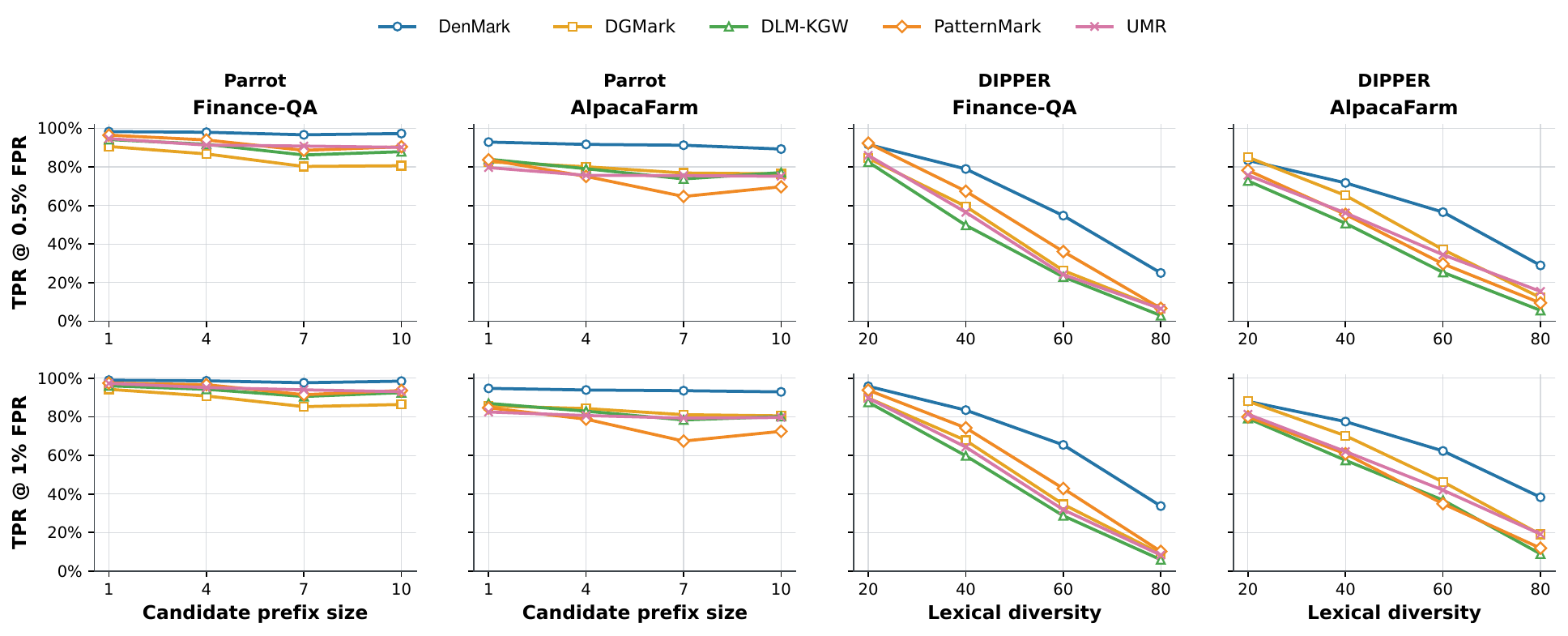}
    \caption{Robustness under Parrot and DIPPER, averaged over LLaDA-8B and Dream-v0-Instruct-7B on Finance-QA and AlpacaFarm. We vary the candidate prefix size over $\{1,4,7,10\}$ for Parrot and lexical diversity over $\{20,40,60,80\}$ for DIPPER.}
    \label{fig:opensource_semantic_attacks}
\end{figure}
\FloatBarrier

\subsection{Robustness to Token Attacks}
\label{sec:token_level_attacks}
We evaluate random word deletion, context aware substitution, and adjacent word swapping at modification ratios of $\{0.1,0.2,0.3,0.4,0.5\}$ and average the results over the five strengths. Context aware substitution uses the BERT-base model~\citep{devlin2019bert}. Figure~\ref{fig:token_level_attacks} averages Finance and LongForm on LLaDA-8B and Dream-v0-Instruct-7B.
\begin{figure}[!htb]
  \centering
  \includegraphics[width=0.95\textwidth]{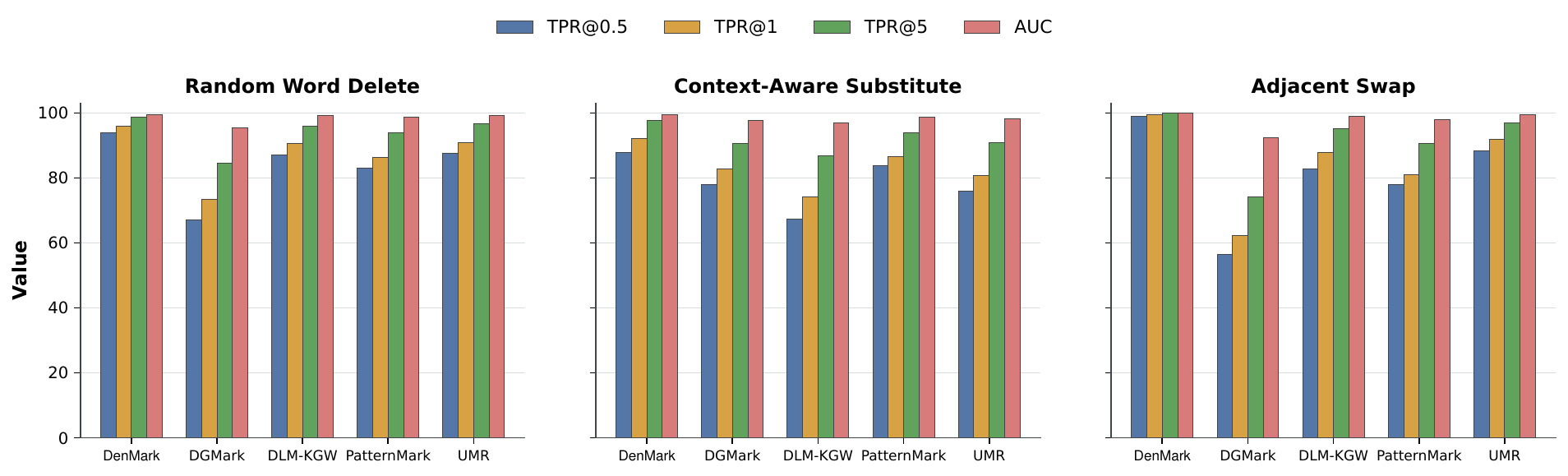}
  \caption{Robustness to token attacks. Results are averaged over five modification ratios, Finance and LongForm, and the Dream and LLaDA-8B backbones.}
  \label{fig:token_level_attacks}
\end{figure}

\paragraph{Results.}
DenMark retains high detection under all three token attacks. Its margin is largest at low FPRs for random deletion and context aware substitution. Detection after adjacent swapping remains near saturation, indicating limited sensitivity to local changes in surface form.
\FloatBarrier

\subsection{Sensitivity Analysis}
\label{sec:sensitivity_analysis}
Figure~\ref{fig:direct_v1_ablation} measures sensitivity to candidate temperature, the number of candidates \(K_{t,b}\), the number of rollouts \(R_t\), and semantic unit size \(m\) on LLaDA-8B. We report TPR@1\%FPR using the empirical threshold rule in Section~\ref{sec:experimental_setup} and AUC for detection and \(\Delta\,\mathrm{mean}(\log\mathrm{PPL})\) for generation quality.

\begin{figure}[!htb]
  \centering
  \includegraphics[width=0.92\textwidth]{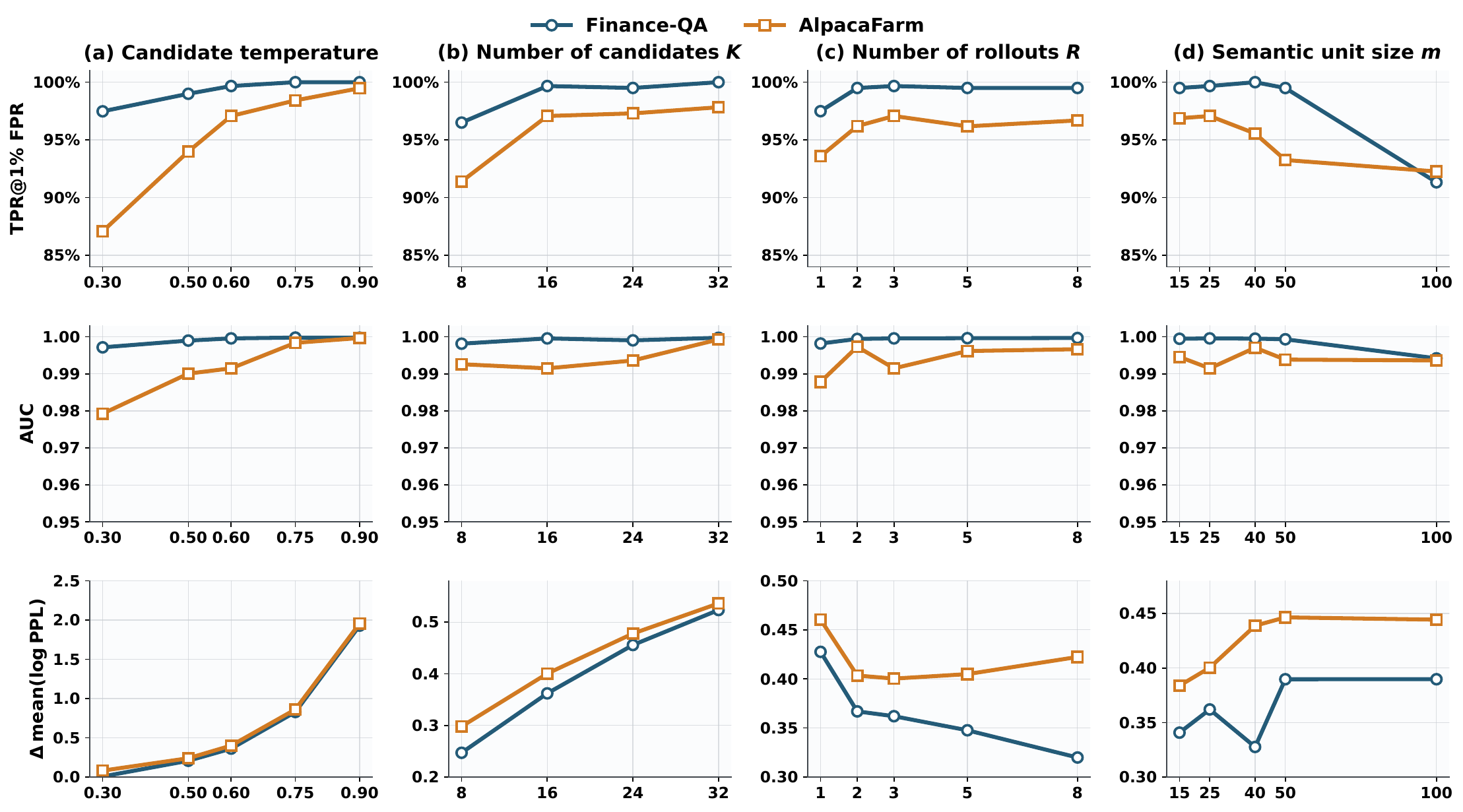}
  \caption{Sensitivity analysis of candidate temperature, number of candidates $K_{t,b}$, number of rollouts $R_t$, and semantic unit size $m$ on LLaDA-8B.}
  \label{fig:direct_v1_ablation}
\end{figure}
\paragraph{Results.}
Higher candidate temperatures improve watermark detection and increase the generation quality cost. Increasing \(K_{t,b}\) from 8 to 16 provides most of the detection gain, and larger candidate sets add smaller improvements. Multiple rollouts improve robustness relative to a single rollout, with most of the gain reached by \(R_t=2\)--\(3\). Moderate semantic unit sizes perform best across the evaluated settings, and larger units reduce detection on AlpacaFarm.
\FloatBarrier

\subsection{Generation Efficiency}
\label{sec:generation_efficiency}

Table~\ref{tab:generation_runtime} reports average generation time per visible output token. DenMark takes 0.654 seconds per token because it scores multiple candidates and rollouts. Rewrite-robust semantic watermarks are likewise costly: SemStamp reports a $20.9\times$ slowdown, k-SemStamp samples 13.3 candidates per accepted sentence~\citep{hou2024semstamp,hou2024ksemstamp}, and PMark consumes 16.0 and 12.1 sampled tokens per output token in its online and offline variants~\citep{huo2026pmark}. This reflects a practical speed--robustness trade-off, while DenMark's overhead remains acceptable for robustness-oriented generation.

\begingroup
\setlength{\intextsep}{4pt}
\begin{table}[!htb]
\centering
\footnotesize
\setlength{\abovecaptionskip}{1pt}
\setlength{\belowcaptionskip}{1pt}
\caption{\textbf{Generation time per output token.} Seconds per visible output token, averaged over Dream/LLaDA-8B and AlpacaFarm/LongFormQA; model loading is excluded.}
\label{tab:generation_runtime}
\normalsize
\renewcommand{\arraystretch}{1.08}
\setlength{\tabcolsep}{8pt}
\begin{tabular}{@{}cccccc@{}}
\toprule
Clean & DenMark & DLM-KGW & DGMark & PatternMark & UMR \\
\midrule
0.054 & 0.654 & 0.118 & 0.236 & 0.089 & 0.070 \\
\bottomrule
\end{tabular}
\vspace{-4pt}
\end{table}
\endgroup

\FloatBarrier

\section{Conclusion}
We introduced DenMark, a semantic watermarking method for DLMs that inserts watermark signals during iterative denoising. Fixed token regions define semantic units, and rollout completions estimate the final semantics of partially masked regions before each local update is selected. Calibrated scanning over candidate unit sizes recovers evidence after rewriting changes token boundaries. The theoretical analysis decomposes the final score gain into the accumulated contributions of local selections. Experiments across four DLMs, three datasets, semantic attacks, and token perturbations show higher detection robustness than the evaluated baselines. The method provides a direct way to embed semantic watermark signals in DLM decoding and detect them after edits that preserve meaning.

\clearpage
\subsection*{AI Use Statement}
Generative AI tools assisted with code development, language editing, construction of the GPT attacks, and automated quality evaluations reported in this work. The authors reviewed and verified all code, text, and experimental outputs produced with AI assistance and take full responsibility for the manuscript, analyses, and conclusions.

\bibliography{iclr2027_conference}

@article{phipson2010permutation,
  title = {Permutation P-values Should Never Be Zero: Calculating Exact P-values When Permutations Are Randomly Drawn},
  author = {Phipson, Belinda and Smyth, Gordon K.},
  journal = {Statistical Applications in Genetics and Molecular Biology},
  volume = {9},
  number = {1},
  pages = {Article 39},
  year = {2010},
  doi = {10.2202/1544-6115.1585}
}

@inproceedings{brown2020language,
  title = {Language Models are Few-Shot Learners},
  author = {Brown, Tom B. and others},
  booktitle = {Advances in Neural Information Processing Systems},
  volume = {33},
  pages = {1877--1901},
  year = {2020},
  url = {https://proceedings.neurips.cc/paper/2020/hash/1457c0d6bfcb4967418bfb8ac142f64a-Abstract.html}
}

@article{wang2026badit,
  title = {Decomposing the Basic Abilities of Large Language Models: Mitigating Cross-Task Interference in Multi-Task Instruct-Tuning},
  author = {Wang, Bing and Li, Ximing and Li, Changchun and Chi, Jinjin and Niu, Gang and Sugiyama, Masashi},
  journal = {arXiv preprint arXiv:2605.05676},
  year = {2026},
  eprint = {2605.05676},
  archiveprefix = {arXiv},
  url = {https://arxiv.org/abs/2605.05676}
}

@article{lin2025creative4u,
  title = {{Creative4U}: {MLLMs}-based Advertising Creative Image Selector with Comparative Reasoning},
  author = {Lin, Yukang and Zhang, Xiang and Jia, Shichang and Wan, Bowen and Fu, Chenghan and Ren, Xudong and Liu, Yueran and Guan, Wanxian and Wang, Pengji and Xu, Jian and Zheng, Bo and Liu, Baolin},
  journal = {arXiv preprint arXiv:2508.12628},
  year = {2025},
  eprint = {2508.12628},
  archiveprefix = {arXiv},
  url = {https://arxiv.org/abs/2508.12628}
}

@article{chen2021evaluating,
  title = {Evaluating Large Language Models Trained on Code},
  author = {Chen, Mark and others},
  journal = {arXiv preprint arXiv:2107.03374},
  year = {2021},
  eprint = {2107.03374},
  archiveprefix = {arXiv},
  url = {https://arxiv.org/abs/2107.03374}
}

@article{kasneci2023chatgpt,
  title = {{ChatGPT} for Good? On Opportunities and Challenges of Large Language Models for Education},
  author = {Kasneci, Enkelejda and others},
  journal = {Learning and Individual Differences},
  volume = {103},
  pages = {102274},
  year = {2023},
  doi = {10.1016/j.lindif.2023.102274}
}

@article{singhal2023large,
  title = {Large Language Models Encode Clinical Knowledge},
  author = {Singhal, Karan and others},
  journal = {Nature},
  volume = {620},
  pages = {172--180},
  year = {2023},
  doi = {10.1038/s41586-023-06291-2}
}

@inproceedings{kirchenbauer2023watermark,
  title = {A Watermark for Large Language Models},
  author = {Kirchenbauer, John and Geiping, Jonas and Wen, Yuxin and Katz, Jonathan and Miers, Ian and Goldstein, Tom},
  booktitle = {International Conference on Machine Learning},
  volume = {202},
  pages = {17061--17084},
  series = {Proceedings of Machine Learning Research},
  year = {2023},
  publisher = {PMLR},
  url = {https://proceedings.mlr.press/v202/kirchenbauer23a.html}
}

@article{dathathri2024scalable,
  title = {Scalable Watermarking for Identifying Large Language Model Outputs},
  author = {Dathathri, Sumanth and others},
  journal = {Nature},
  volume = {634},
  number = {8035},
  pages = {818--823},
  year = {2024},
  doi = {10.1038/s41586-024-08025-4}
}

@inproceedings{sahoo2024simple,
  title = {Simple and Effective Masked Diffusion Language Models},
  author = {Sahoo, Subham Sekhar and others},
  booktitle = {Advances in Neural Information Processing Systems},
  volume = {37},
  year = {2024},
  doi = {10.52202/079017-4135},
  url = {https://proceedings.neurips.cc/paper_files/paper/2024/hash/eb0b13cc515724ab8015bc978fdde0ad-Abstract-Conference.html}
}

@inproceedings{nie2025large,
  title = {Large Language Diffusion Models},
  author = {Nie, Shen and Zhu, Fengqi and You, Zebin and Zhang, Xiaolu and Ou, Jingyang and Hu, Jun and Zhou, Jun and Lin, Yankai and Wen, Ji-Rong and Li, Chongxuan},
  booktitle = {Advances in Neural Information Processing Systems},
  volume = {38},
  year = {2025},
  doi = {10.52202/085713-1689},
  url = {https://proceedings.neurips.cc/paper_files/paper/2025/hash/48b383b24230e0e6e649d9c98dae4d8c-Abstract-Conference.html}
}

@inproceedings{gloaguen2025watermarking,
  title = {Watermarking Diffusion Language Models},
  author = {Gloaguen, Thibaud and Staab, Robin and Jovanovi{\'c}, Nikola and Vechev, Martin},
  booktitle = {International Conference on Learning Representations},
  year = {2026},
  url = {https://openreview.net/forum?id=3aBWTYGcaT}
}

@article{dunn1961multiple,
  title = {Multiple Comparisons among Means},
  author = {Dunn, Olive Jean},
  journal = {Journal of the American Statistical Association},
  volume = {56},
  number = {293},
  pages = {52--64},
  year = {1961},
  doi = {10.1080/01621459.1961.10482090}
}

@inproceedings{hou2024semstamp,
  title = {{SemStamp}: A Semantic Watermark with Paraphrastic Robustness for Text Generation},
  author = {Hou, Abe Bohan and Zhang, Jingyu and He, Tianxing and Wang, Yichen and Chuang, Yung-Sung and Wang, Hongwei and Shen, Lingfeng and Van Durme, Benjamin and Khashabi, Daniel and Tsvetkov, Yulia},
  booktitle = {Proceedings of the 2024 Conference of the North American Chapter of the Association for Computational Linguistics: Human Language Technologies (Volume 1: Long Papers)},
  month = jun,
  pages = {4067--4082},
  year = {2024},
  address = {Mexico City, Mexico},
  publisher = {Association for Computational Linguistics},
  doi = {10.18653/v1/2024.naacl-long.226},
  url = {https://aclanthology.org/2024.naacl-long.226/}
}

@inproceedings{ren2024robust,
  title = {A Robust Semantics-based Watermark for Large Language Model against Paraphrasing},
  author = {Ren, Jie and Xu, Han and Liu, Yiding and Cui, Yingqian and Wang, Shuaiqiang and Yin, Dawei and Tang, Jiliang},
  booktitle = {Findings of the Association for Computational Linguistics: NAACL 2024},
  pages = {613--625},
  year = {2024},
  publisher = {Association for Computational Linguistics},
  doi = {10.18653/v1/2024.findings-naacl.40},
  url = {https://aclanthology.org/2024.findings-naacl.40/}
}

@article{shaffer1995multiple,
  title = {Multiple Hypothesis Testing},
  author = {Shaffer, Juliet Popper},
  journal = {Annual Review of Psychology},
  volume = {46},
  number = {1},
  pages = {561--584},
  year = {1995},
  publisher = {Annual Reviews},
  doi = {10.1146/annurev.ps.46.020195.003021},
  url = {https://www.annualreviews.org/content/journals/10.1146/annurev.ps.46.020195.003021}
}

@inproceedings{huo2026pmark,
  title = {{PMark}: Towards Robust and Distortion-Free Semantic-Level Watermarking with Channel Constraints},
  author = {Huo, Jiahao and Liu, Shuliang and Wang, Bin and Zhang, Junyan and Yan, Yibo and Liu, Aiwei and Hu, Xuming and Zhou, Mingxun},
  booktitle = {International Conference on Learning Representations},
  year = {2026},
  url = {https://openreview.net/forum?id=EhDgP69DJG}
}

@inproceedings{hendrycks2021measuring,
  title = {Measuring Massive Multitask Language Understanding},
  author = {Hendrycks, Dan and Burns, Collin and Basart, Steven and Zou, Andy and Mazeika, Mantas and Song, Dawn and Steinhardt, Jacob},
  booktitle = {International Conference on Learning Representations},
  year = {2021},
  url = {https://openreview.net/forum?id=d7KBjmI3GmQ}
}

@inproceedings{zellers2019hellaswag,
  title = {{HellaSwag}: Can a Machine Really Finish Your Sentence?},
  author = {Zellers, Rowan and Holtzman, Ari and Bisk, Yonatan and Farhadi, Ali and Choi, Yejin},
  booktitle = {Proceedings of the 57th Annual Meeting of the Association for Computational Linguistics},
  month = jul,
  pages = {4791--4800},
  year = {2019},
  address = {Florence, Italy},
  publisher = {Association for Computational Linguistics},
  doi = {10.18653/v1/P19-1472},
  url = {https://aclanthology.org/P19-1472/}
}

@article{clark2018think,
  title = {Think You Have Solved Question Answering? Try {ARC}, the {AI2} Reasoning Challenge},
  author = {Clark, Peter and Cowhey, Isaac and Etzioni, Oren and Khot, Tushar and Sabharwal, Ashish and Schoenick, Carissa and Tafjord, {\O}yvind},
  journal = {arXiv preprint arXiv:1803.05457},
  year = {2018},
  eprint = {1803.05457},
  archiveprefix = {arXiv},
  url = {https://arxiv.org/abs/1803.05457}
}

@article{cobbe2021training,
  title = {Training Verifiers to Solve Math Word Problems},
  author = {Cobbe, Karl and Kosaraju, Vineet and Bavarian, Mohammad and Chen, Mark and Jun, Heewoo and Kaiser, Lukasz and Plappert, Matthias and Tworek, Jerry and Hilton, Jacob and Nakano, Reiichiro and Hesse, Christopher and Schulman, John},
  journal = {arXiv preprint arXiv:2110.14168},
  year = {2021},
  eprint = {2110.14168},
  archiveprefix = {arXiv},
  url = {https://arxiv.org/abs/2110.14168}
}

@article{raffel2020exploring,
  title = {Exploring the Limits of Transfer Learning with a Unified Text-to-Text Transformer},
  author = {Raffel, Colin and Shazeer, Noam and Roberts, Adam and Lee, Katherine and Narang, Sharan and Matena, Michael and Zhou, Yanqi and Li, Wei and Liu, Peter J.},
  journal = {Journal of Machine Learning Research},
  volume = {21},
  number = {140},
  pages = {1--67},
  year = {2020},
  url = {https://www.jmlr.org/papers/v21/20-074.html}
}

@article{wang2022text,
  title = {Text Embeddings by Weakly-Supervised Contrastive Pre-training},
  author = {Wang, Liang and Yang, Nan and Huang, Xiaolong and Jiao, Binxing and Yang, Linjun and Jiang, Daxin and Majumder, Rangan and Wei, Furu},
  journal = {arXiv preprint arXiv:2212.03533},
  year = {2022},
  eprint = {2212.03533},
  archiveprefix = {arXiv},
  url = {https://arxiv.org/abs/2212.03533}
}

@inproceedings{zhang2020pegasus,
  title = {{PEGASUS}: Pre-training with Extracted Gap-Sentences for Abstractive Summarization},
  author = {Zhang, Jingqing and Zhao, Yao and Saleh, Mohammad and Liu, Peter J.},
  booktitle = {Proceedings of the 37th International Conference on Machine Learning},
  volume = {119},
  pages = {11328--11339},
  series = {Proceedings of Machine Learning Research},
  year = {2020},
  publisher = {PMLR},
  url = {https://proceedings.mlr.press/v119/zhang20ae.html}
}

@misc{prithivida2021parrot,
  title = {Parrot: Paraphrase Generation for {NLU}},
  author = {Damodaran, Prithiviraj},
  year = {2021},
  howpublished = {\url{https://github.com/PrithivirajDamodaran/Parrot_Paraphraser}},
  note = {Software, version 1.0}
}

@inproceedings{krishna2023paraphrasing,
  title = {Paraphrasing Evades Detectors of {AI}-Generated Text, but Retrieval Is an Effective Defense},
  author = {Krishna, Kalpesh and Song, Yixiao and Karpinska, Marzena and Wieting, John and Iyyer, Mohit},
  booktitle = {Advances in Neural Information Processing Systems},
  volume = {36},
  pages = {27469--27500},
  year = {2023},
  doi = {10.52202/075280-1195},
  url = {https://proceedings.neurips.cc/paper_files/paper/2023/hash/575c450013d0e99e4b0ecf82bd1afaa4-Abstract-Conference.html}
}

@inproceedings{lou2024discrete,
  title = {Discrete Diffusion Modeling by Estimating the Ratios of the Data Distribution},
  author = {Lou, Aaron and Meng, Chenlin and Ermon, Stefano},
  booktitle = {Proceedings of the 41st International Conference on Machine Learning},
  volume = {235},
  pages = {32819--32848},
  series = {Proceedings of Machine Learning Research},
  year = {2024},
  publisher = {PMLR},
  url = {https://proceedings.mlr.press/v235/lou24a.html}
}

@inproceedings{bender2021dangers,
  title = {On the Dangers of Stochastic Parrots: Can Language Models Be Too Big?},
  author = {Bender, Emily M. and Gebru, Timnit and McMillan-Major, Angelina and Shmitchell, Shmargaret},
  booktitle = {Proceedings of the 2021 ACM Conference on Fairness, Accountability, and Transparency},
  pages = {610--623},
  year = {2021},
  doi = {10.1145/3442188.3445922}
}

@article{bommasani2021opportunities,
  title = {On the Opportunities and Risks of Foundation Models},
  author = {Bommasani, Rishi and others},
  journal = {arXiv preprint arXiv:2108.07258},
  year = {2021},
  eprint = {2108.07258},
  archiveprefix = {arXiv},
  url = {https://arxiv.org/abs/2108.07258}
}

@article{weidinger2021ethical,
  title = {Ethical and Social Risks of Harm from Language Models},
  author = {Weidinger, Laura and others},
  journal = {arXiv preprint arXiv:2112.04359},
  year = {2021},
  eprint = {2112.04359},
  archiveprefix = {arXiv},
  url = {https://arxiv.org/abs/2112.04359}
}

@article{ye2025dream,
  title = {{Dream 7B}: Diffusion Large Language Models},
  author = {Ye, Jiacheng and Xie, Zhihui and Zheng, Lin and Gao, Jiahui and Wu, Zirui and Jiang, Xin and Li, Zhenguo and Kong, Lingpeng},
  journal = {arXiv preprint arXiv:2508.15487},
  year = {2025},
  eprint = {2508.15487},
  archiveprefix = {arXiv},
  url = {https://arxiv.org/abs/2508.15487}
}

@article{kuditipudi2023robust,
  title = {Robust Distortion-free Watermarks for Language Models},
  author = {Kuditipudi, Rohith and Thickstun, John and Hashimoto, Tatsunori and Liang, Percy},
  journal = {Transactions on Machine Learning Research},
  year = {2024},
  issn = {2835-8856},
  url = {https://openreview.net/forum?id=FpaCL1M02C}
}

@inproceedings{hu2023unbiased,
  title = {Unbiased Watermark for Large Language Models},
  author = {Hu, Zhengmian and Chen, Lichang and Wu, Xidong and Wu, Yihan and Zhang, Hongyang and Huang, Heng},
  booktitle = {International Conference on Learning Representations},
  year = {2024},
  url = {https://openreview.net/forum?id=uWVC5FVidc}
}

@inproceedings{hou2024ksemstamp,
  title = {{k-SemStamp}: A Clustering-Based Semantic Watermark for Detection of Machine-Generated Text},
  author = {Hou, Abe Bohan and Zhang, Jingyu and Wang, Yichen and Khashabi, Daniel and He, Tianxing},
  booktitle = {Findings of the Association for Computational Linguistics: ACL 2024},
  month = aug,
  pages = {1706--1715},
  year = {2024},
  address = {Bangkok, Thailand},
  publisher = {Association for Computational Linguistics},
  doi = {10.18653/v1/2024.findings-acl.98},
  url = {https://aclanthology.org/2024.findings-acl.98/}
}

@article{chowdhery2023palm,
  title = {{PaLM}: Scaling Language Modeling with Pathways},
  author = {Chowdhery, Aakanksha and others},
  journal = {Journal of Machine Learning Research},
  volume = {24},
  number = {240},
  pages = {1--113},
  year = {2023},
  url = {https://www.jmlr.org/papers/v24/22-1144.html}
}

@article{touvron2023llama,
  title = {{LLaMA}: Open and Efficient Foundation Language Models},
  author = {Touvron, Hugo and Lavril, Thibaut and Izacard, Gautier and Martinet, Xavier and Lachaux, Marie-Anne and Lacroix, Timothée and Rozière, Baptiste and Goyal, Naman and Hambro, Eric and Azhar, Faisal and Rodriguez, Aurelien and Joulin, Armand and Grave, Edouard and Lample, Guillaume},
  journal = {arXiv preprint arXiv:2302.13971},
  year = {2023},
  eprint = {2302.13971},
  archiveprefix = {arXiv},
  url = {https://arxiv.org/abs/2302.13971}
}

@inproceedings{dabiriaghdam2025simmark,
  title = {{SimMark}: A Robust Sentence-Level Similarity-Based Watermarking Algorithm for Large Language Models},
  author = {Dabiriaghdam, Amirhossein and Wang, Lele},
  booktitle = {Proceedings of the 2025 Conference on Empirical Methods in Natural Language Processing},
  pages = {30785--30806},
  year = {2025},
  publisher = {Association for Computational Linguistics},
  doi = {10.18653/v1/2025.emnlp-main.1567},
  url = {https://aclanthology.org/2025.emnlp-main.1567/}
}

@inproceedings{gong2024scaling,
  title = {Scaling Diffusion Language Models via Adaptation from Autoregressive Models},
  author = {Gong, Shansan and Agarwal, Shivam and Zhang, Yizhe and Ye, Jiacheng and Zheng, Lin and Li, Mukai and An, Chenxin and Zhao, Peilin and Bi, Wei and Han, Jiawei and Peng, Hao and Kong, Lingpeng},
  booktitle = {International Conference on Learning Representations},
  year = {2025},
  url = {https://openreview.net/forum?id=j1tSLYKwg8}
}

@article{zhu2025llada15,
  title = {{LLaDA} 1.5: Variance-Reduced Preference Optimization for Large Language Diffusion Models},
  author = {Zhu, Fengqi and Wang, Rongzhen and Nie, Shen and Zhang, Xiaolu and Wu, Chunwei and Hu, Jun and Zhou, Jun and Chen, Jianfei and Lin, Yankai and Wen, Ji-Rong and Li, Chongxuan},
  journal = {arXiv preprint arXiv:2505.19223},
  year = {2025},
  eprint = {2505.19223},
  archiveprefix = {arXiv},
  url = {https://arxiv.org/abs/2505.19223}
}

@article{bie2025llada20,
  title = {{LLaDA}2.0: Scaling Up Diffusion Language Models to {100B}},
  author = {Bie, Tiwei and Cao, Maosong and Chen, Kun and Du, Lun and Gong, Mingliang and Gong, Zhuochen and Gu, Yanmei and Hu, Jiaqi and Huang, Zenan and Lan, Zhenzhong and Li, Chengxi and Li, Chongxuan and Li, Jianguo and Li, Zehuan and Liu, Huabin and Liu, Lin and Lu, Guoshan and Lu, Xiaocheng and Ma, Yuxin and Tan, Jianfeng and Wei, Lanning and Wen, Ji-Rong and Xing, Yipeng and Zhang, Xiaolu and Zhao, Junbo and Zheng, Da and Zhou, Jun and Zhou, Junlin and Zhou, Zhanchao and Zhu, Liwang and Zhuang, Yihong},
  journal = {arXiv preprint arXiv:2512.15745},
  year = {2025},
  eprint = {2512.15745},
  archiveprefix = {arXiv},
  url = {https://arxiv.org/abs/2512.15745}
}

@inproceedings{tu2024waterbench,
  title = {{WaterBench}: Towards Holistic Evaluation of Watermarks for Large Language Models},
  author = {Tu, Shangqing and Sun, Yuliang and Bai, Yushi and Yu, Jifan and Hou, Lei and Li, Juanzi},
  booktitle = {Proceedings of the 62nd Annual Meeting of the Association for Computational Linguistics (Volume 1: Long Papers)},
  pages = {1517--1542},
  year = {2024},
  publisher = {Association for Computational Linguistics},
  doi = {10.18653/v1/2024.acl-long.83},
  url = {https://aclanthology.org/2024.acl-long.83/}
}

@inproceedings{maia2018fiqa,
  title = {{WWW'18 Open Challenge}: Financial Opinion Mining and Question Answering},
  author = {Maia, Macedo and Handschuh, Siegfried and Freitas, Andr{\'e} and Davis, Brian and McDermott, Ross and Zarrouk, Manel and Balahur, Alexandra},
  booktitle = {Companion Proceedings of the Web Conference 2018},
  pages = {1941--1942},
  year = {2018},
  publisher = {International World Wide Web Conferences Steering Committee},
  address = {Lyon, France},
  doi = {10.1145/3184558.3192301}
}

@inproceedings{dubois2023alpacafarm,
  title = {{AlpacaFarm}: A Simulation Framework for Methods that Learn from Human Feedback},
  author = {Dubois, Yann and Li, Chen Xuechen and Taori, Rohan and Zhang, Tianyi and Gulrajani, Ishaan and Ba, Jimmy and Guestrin, Carlos and Liang, Percy and Hashimoto, Tatsunori B.},
  booktitle = {Advances in Neural Information Processing Systems},
  volume = {36},
  pages = {30039--30069},
  year = {2023},
  doi = {10.52202/075280-1308}
}

@inproceedings{fan2019eli5,
  title = {{ELI5}: Long Form Question Answering},
  author = {Fan, Angela and Jernite, Yacine and Perez, Ethan and Grangier, David and Weston, Jason and Auli, Michael},
  booktitle = {Proceedings of the 57th Annual Meeting of the Association for Computational Linguistics},
  pages = {3558--3567},
  year = {2019},
  publisher = {Association for Computational Linguistics},
  address = {Florence, Italy},
  doi = {10.18653/v1/P19-1346},
  url = {https://aclanthology.org/P19-1346/}
}

@inproceedings{hong2026dgmark,
  title = {{dgMARK}: Decoding-Guided Watermarking for Diffusion Language Models},
  author = {Hong, Pyo Min and No, Albert},
  booktitle = {International Conference on Machine Learning},
  year = {2026},
  eprint = {2601.22985},
  url = {https://arxiv.org/abs/2601.22985},
  archiveprefix = {arXiv},
  primaryclass = {cs.LG}
}

@inproceedings{chen2025patternmark,
  title = {A Watermark for Order-Agnostic Language Models},
  author = {Chen, Ruibo and Wu, Yihan and Chen, Yanshuo and Liu, Chenxi and Guo, Junfeng and Huang, Heng},
  booktitle = {International Conference on Learning Representations},
  year = {2025},
  url = {https://openreview.net/forum?id=Nlm3Xf0W9S}
}

@inproceedings{yang2026umr,
  title = {You Can Have a Second Chance: Unbiased and Multi-bit Watermarking for Diffusion Language Models with Regret-based Remasking},
  author = {Yang, Ke and Liang, Dongyang and Yu, Jing and Yuan, Shuguang and Chen, Chi},
  booktitle = {Proceedings of the 64th Annual Meeting of the Association for Computational Linguistics (Volume 1: Long Papers)},
  pages = {28141--28160},
  year = {2026},
  publisher = {Association for Computational Linguistics},
  address = {San Diego, California, United States},
  doi = {10.18653/v1/2026.acl-long.1297},
  url = {https://aclanthology.org/2026.acl-long.1297/}
}

@inproceedings{devlin2019bert,
  title = {{BERT}: Pre-training of Deep Bidirectional Transformers for Language Understanding},
  author = {Devlin, Jacob and Chang, Ming-Wei and Lee, Kenton and Toutanova, Kristina},
  booktitle = {Proceedings of the 2019 Conference of the North American Chapter of the Association for Computational Linguistics: Human Language Technologies, Volume 1 (Long and Short Papers)},
  pages = {4171--4186},
  year = {2019},
  publisher = {Association for Computational Linguistics},
  doi = {10.18653/v1/N19-1423},
  url = {https://aclanthology.org/N19-1423/}
}

@inproceedings{raban-etal-2026-lr,
  title = {{LR}-{DWM}: Efficient Watermarking for Diffusion Language Models},
  author = {Raban, Ofek and Chechik, Gal and Fetaya, Ethan},
  booktitle = {Findings of the Association for Computational Linguistics: ACL 2026},
  month = jul,
  year = {2026},
  address = {San Diego, California, United States},
  publisher = {Association for Computational Linguistics},
  pages = {43510--43517},
  doi = {10.18653/v1/2026.findings-acl.2161},
  url = {https://aclanthology.org/2026.findings-acl.2161/}
}

@inproceedings{he2025empirical,
  title = {On the Empirical Power of Goodness-of-Fit Tests in Watermark Detection},
  author = {He, Weiqing and Li, Xiang and Shang, Tianqi and Shen, Li and Su, Weijie and Long, Qi},
  booktitle = {Advances in Neural Information Processing Systems},
  volume = {38},
  year = {2025},
  doi = {10.52202/085713-0632},
  url = {https://proceedings.neurips.cc/paper_files/paper/2025/hash/1b19bf73c2f341a350f9bd05c204d97b-Abstract-Conference.html}
}

@inproceedings{zhao2025permute,
  title = {Permute-and-Flip: An Optimally Stable and Watermarkable Decoder for {LLM}s},
  author = {Zhao, Xuandong and Li, Lei and Wang, Yu-Xiang},
  booktitle = {International Conference on Learning Representations},
  year = {2025},
  url = {https://proceedings.iclr.cc/paper_files/paper/2025/hash/50be7e77b9c883144940be925b608acc-Abstract-Conference.html}
}

@article{gloaguen2026every,
  title = {Every Bit, Everywhere, All at Once: A Binomial Multibit {LLM} Watermark},
  author = {Gloaguen, Thibaud and Staab, Robin and Vero, Mark and Vechev, Martin},
  journal = {arXiv preprint arXiv:2605.11653},
  year = {2026},
  eprint = {2605.11653},
  archiveprefix = {arXiv},
  url = {https://arxiv.org/abs/2605.11653}
}

@inproceedings{an2025defending,
  title = {Defending {LLM} Watermarking Against Spoofing Attacks with Contrastive Representation Learning},
  author = {An, Li and Liu, Yujian and Liu, Yepeng and Zhang, Yang and Bu, Yuheng and Chang, Shiyu},
  booktitle = {Conference on Language Modeling},
  year = {2025},
  url = {https://openreview.net/forum?id=n5hmtkdl7k}
}
\bibliographystyle{iclr2027_conference}

\clearpage
\appendix

\phantomsection
\pdfbookmark[1]{Appendix Contents}{appendix-contents}
\section*{Appendix Contents}
{\hypersetup{hidelinks}
\fontsize{11}{12.5}\selectfont
\newcommand{\appcontentsline}[3]{%
  \noindent\hspace*{#1}\hyperref[#2]{#3}\nobreak
  \leaders\hbox{\kern.18em.\kern.18em}\hfill\nobreak
  \hyperref[#2]{\pageref*{#2}}\par}
\appcontentsline{0pt}{app:notation_summary}{\textbf{Appendix~\ref*{app:notation_summary}: Notation Summary}}
\vspace{2pt}
\appcontentsline{0pt}{app:experimental_details}{\textbf{Appendix~\ref*{app:experimental_details}: Experimental Setup and Implementation Details}}
\appcontentsline{1.5em}{sec:base_models}{\ref*{sec:base_models}\quad Base Models and Baselines}
\appcontentsline{1.5em}{app:datasets}{\ref*{app:datasets}\quad Datasets}
\appcontentsline{1.5em}{sec:detection_samples}{\ref*{sec:detection_samples}\quad Positive and Negative Samples}
\appcontentsline{1.5em}{sec:gpt_semantic_attacks}{\ref*{sec:gpt_semantic_attacks}\quad Semantic Attacks Using GPT-4o-mini}
\appcontentsline{3em}{app:rewrite_prompt}{\ref*{app:rewrite_prompt}\quad Rewrite}
\appcontentsline{3em}{app:compression_prompt}{\ref*{app:compression_prompt}\quad Compression}
\appcontentsline{3em}{app:expansion_prompt}{\ref*{app:expansion_prompt}\quad Expansion}
\appcontentsline{1.5em}{app:detection_metrics}{\ref*{app:detection_metrics}\quad Detection Metrics}
\appcontentsline{1.5em}{sec:generation_quality_eval}{\ref*{sec:generation_quality_eval}\quad Generation Quality Evaluation}
\appcontentsline{1.5em}{sec:generation_algorithm_configs}{\ref*{sec:generation_algorithm_configs}\quad Detailed Configurations of Generation Algorithms}
\appcontentsline{1.5em}{sec:semantic_encoder_training}{\ref*{sec:semantic_encoder_training}\quad Semantic Encoder Training}
\vspace{2pt}
\appcontentsline{0pt}{app:detection_length_analyses}{\textbf{Appendix~\ref*{app:detection_length_analyses}: Detection and Length Analyses}}
\appcontentsline{1.5em}{app:scan_range_ablation}{\ref*{app:scan_range_ablation}\quad Ablation of the Candidate Unit Size Scan Range}
\appcontentsline{1.5em}{app:max_generation_length_ablation}{\ref*{app:max_generation_length_ablation}\quad Effect of Maximum Generation Length}
\vspace{2pt}
\appcontentsline{0pt}{app:generation_model_ablations}{\textbf{Appendix~\ref*{app:generation_model_ablations}: Generation and Model Ablations}}
\appcontentsline{1.5em}{app:candidate_temperature_position_ablation}{\ref*{app:candidate_temperature_position_ablation}\quad Candidate Temperature and Position Selection}
\appcontentsline{1.5em}{app:multi_position_candidate_decoding}{\ref*{app:multi_position_candidate_decoding}\quad Candidate Decoding with Multiple Positions}
\appcontentsline{1.5em}{app:encoder_backbone_ablation}{\ref*{app:encoder_backbone_ablation}\quad Ablation of the Semantic Encoder}
\vspace{2pt}
\appcontentsline{0pt}{app:robustness_generalization}{\textbf{Appendix~\ref*{app:robustness_generalization}: Robustness and Generalization}}
\appcontentsline{1.5em}{app:gpt4omini_backtranslation}{\ref*{app:gpt4omini_backtranslation}\quad Robustness to Back Translation with GPT-4o-mini}
\appcontentsline{1.5em}{app:document_compress_expand}{\ref*{app:document_compress_expand}\quad Document Compression and Expansion}
\appcontentsline{1.5em}{app:nonuniform_local_length_attacks}{\ref*{app:nonuniform_local_length_attacks}\quad Robustness to Non-Uniform Local Length Changes}
\vspace{2pt}
\appcontentsline{0pt}{app:additional_evaluations}{\textbf{Appendix~\ref*{app:additional_evaluations}: Additional Evaluations, Baselines, and Ablations}}
\appcontentsline{1.5em}{app:generation_quality}{\ref*{app:generation_quality}\quad Generation Quality Evaluation}
\appcontentsline{1.5em}{app:blockwise_semantic_baseline_comparison}{\ref*{app:blockwise_semantic_baseline_comparison}\quad Semantic Baselines with Block Decoding}
\appcontentsline{1.5em}{app:tied_k_bc_ablation}{\ref*{app:tied_k_bc_ablation}\quad Effect of Candidate and Channel Budgets}
\appcontentsline{1.5em}{app:downstream_performance}{\ref*{app:downstream_performance}\quad Downstream Task Performance}
\vspace{2pt}
\appcontentsline{0pt}{app:proofs}{\textbf{Appendix~\ref*{app:proofs}: Proofs}}
\appcontentsline{1.5em}{app:scan_fpr_control_proof}{\ref*{app:scan_fpr_control_proof}\quad Proof of Proposition~\ref*{prop:scan_fpr_control}}
\appcontentsline{1.5em}{app:cumulative_continued_policy_decomposition_proof}{\ref*{app:cumulative_continued_policy_decomposition_proof}\quad Proof of Proposition~\ref*{prop:cumulative_continued_policy_decomposition}}
\vspace{2pt}
\appcontentsline{0pt}{app:qualitative_examples}{\textbf{Appendix~\ref*{app:qualitative_examples}: Qualitative Examples}}
}

\clearpage
\section{Notation Summary}
\label{app:notation_summary}

\paragraph{Generation and watermark embedding.}
\begin{center}
\small
\renewcommand{\arraystretch}{1.12}
\begin{tabular}{@{}P{0.24\textwidth}P{0.70\textwidth}@{}}
\toprule
Symbol & Meaning \\
\midrule
\([q],\lceil a\rceil\) & Index set \(\{1,\ldots,q\}\) for an integer \(q\geq1\), and the smallest integer not less than \(a\). \\
\(\mathcal V,[\mathrm{MASK}]\) & Ordinary token vocabulary and the special mask token. \\
\(\Tok,\Dec\) & Tokenizer encoding and decoding maps. \\
\(x_{\mathrm p},L_{\mathrm p},N\) & Tokenized prompt, prompt length, and prescribed continuation length. \\
\(x^{(t)},\mathcal M_t,T\) & Decoding state, masked generation positions at step \(t\), and terminal decoding step. \\
\(\hat z,\hat y\) & Completed continuation token sequence and decoded final text. \\
\(p_\phi,\phi\) & Masked DLM and its parameters; \(p_\phi(\cdot\mid x^{(t)},i)\) is the token distribution at position \(i\). \\
\(m,B,\mathcal U_b\) & Semantic unit size, number of units \(B=\lceil N/m\rceil\), and token positions of unit \(b\). \\
\(\mathcal B_t,\mathcal M_{b,t}\) & Set of semantic units selected at step \(t\), and the remaining masked positions of a selected unit \(b\in\mathcal B_t\). \\
\(K_{t,b}\) & Number of candidates for selected unit \(b\) at step \(t\). \\
\(r_t,\mathcal I_{t,b,k}\) & Positive-integer budget for position updates and selected subset of \(\mathcal M_{b,t}\), with \(|\mathcal I_{t,b,k}|=\min\{r_t,|\mathcal M_{b,t}|\}\). \\
\(c_{t,b,k},k_{t,b}^\star\) & Candidate state \(k\) for selected unit \(b\) at step \(t\), and the candidate index selected from rollout scores. \\
\(\pi_{\mathrm{pos}},\pi_{\mathrm{tok}}\) & Distribution for position selection over allowed subsets of masked positions, and token sampling distribution over \(\mathcal V\). \\
\(R_t,\tilde c_{t,b,k,\rho},\tilde u_{t,b,k,\rho}\) & Number of rollout completions at step \(t\), candidate state with the current unit completed, and its decoded unit text for rollout \(\rho\) of candidate \(k\). \\
\(\mathcal T,u,E_\eta,\eta,d\) & Domain of decoded text, text unit, semantic encoder, its parameters, and embedding dimension. \\
\(\mathcal K,\kappa,C,(\theta_{b,j},s_{b,j})\) & Key space, secret key, number of semantic channels, and direction--sign pair for channel \(j\) of unit \(b\). \\
\(\langle a,b\rangle\) & Euclidean inner product \(a^\top b\) on \(\mathbb R^d\). \\
\(g_b(u),W_{t,b,k}\) & Semantic score of a completed unit and rollout estimate for candidate \(k\) in selected unit \(b\). \\
\bottomrule
\end{tabular}
\end{center}

\newpage
\paragraph{Detection and cumulative analysis.}
\begin{center}
\small
\renewcommand{\arraystretch}{1.12}
\begin{tabular}{@{}P{0.24\textwidth}P{0.70\textwidth}@{}}
\toprule
Symbol & Meaning \\
\midrule
\(y,z(y),L\) & Input text, its retokenized sequence \(z(y)=\Tok(y)\), and token length. \\
\(m',\mathcal J_b^{(m')},n_{m'}\) & Candidate detection unit size, token indices of detected unit \(b\), and number of resulting units. \\
\(u_b^{(m')}(y),S_{m'}(y)\) & Decoded unit text and document watermark score for candidate unit size \(m'\). \\
\(\mathcal G,\mathcal D_0,n_0\) & Scanned set of candidate unit sizes, unwatermarked calibration set, and its size. \\
\(p_{m'}(y),p_{\mathrm{scan}}(y)\) & Empirical right tail \(p\) value for each size and Bonferroni corrected scan \(p\) value. \\
\(S_{\mathrm{rob}}(y),\alpha\) & Robust ranking score \(-\log p_{\mathrm{scan}}(y)\) and detection significance level. \\
\(y_i^{(0)},Y\) & Calibration text \(i\) in \(\mathcal D_0\), and random test text under the unwatermarked null distribution. \\
\(\mathbb I\{A\}\) & Indicator equal to one if event \(A\) holds and zero otherwise. \\
\(\mu,\pi,\pi^0,J(\mu)\) & Generic decoding policy, DenMark decoder, reference decoder, and $J(\mu)=\mathbb E_\mu[S_m(\hat y)\mid x^{(0)}]$. \\
\(\mathbf k_t,\mathbf k_t^\star,\mathbf 1_t\) & Candidate index vectors ordered by increasing \(b\): a general choice, DenMark's selected choice, and the reference choice that selects the first candidate for every active unit. \\
\(c_{t,\mathbf k_t}\) & State obtained by jointly committing the local updates indexed by \(\mathbf k_t\) for all units in \(\mathcal B_t\). \\
\(Q_{t,\mathbf k_t}^{\pi}\) & Expected final watermark score after committing joint update \(c_{t,\mathbf k_t}\) at step \(t\) and following \(\pi\) thereafter. \\
\(\Delta_t^{\pi}\) & Expected step advantage of \(\mathbf k_t^\star\) over the reference choice \(\mathbf 1_t\). \\
\(\pi^{[\ell]}\) & Reverse hybrid policy using \(\pi^0\) for decisions \(0,\ldots,\ell-1\) and \(\pi\) from decision \(\ell\) onward. \\
\bottomrule
\end{tabular}
\end{center}

\FloatBarrier
\section{Experimental Setup and Implementation Details}
\label{app:experimental_details}

\FloatBarrier
\subsection{Base Models and Baselines}
\label{sec:base_models}

Our evaluation uses LLaDA-8B~\citep{nie2025large}, LLaDA1.5-8B~\citep{zhu2025llada15}, LLaDA2.0-mini~\citep{bie2025llada20}, and Dream-v0-Instruct-7B~\citep{ye2025dream}. All models generate at most 300 new tokens with decoding temperature $0.5$. DenMark uses sequential masked diffusion decoding with 25 token blocks and random position selection within each block for the LLaDA family. Dream uses its \texttt{origin} diffusion decoder with 300 denoising steps and random position selection. We compare against four baselines. \textbf{DLM-KGW}~\citep{gloaguen2025watermarking} adapts the KGW green--red watermark to DLMs by applying the bias in expectation over uncertain context hashes and uses the standard green token detector. \textbf{DGMark}~\citep{hong2026dgmark} uses multinomial token sampling, low confidence position selection, and one step lookahead beam search to embed parity constraints induced by hashes, then detects the signal with a sliding window statistic. On Dream, DGMark uses 25 token block decoding following its original configuration because direct decoding frequently produces outputs below the required length. \textbf{PatternMark}~\citep{chen2025patternmark} generates Markov chain key sequences with characteristic frequent patterns and tests the recovered pattern counts. \textbf{UMR}~\citep{yang2026umr} combines stability aware, bit controlled unbiased modulation with low confidence remasking to reinforce tokens with weak watermark evidence. The unwatermarked clean reference uses random-position decoding at temperature $0.5$.

\FloatBarrier
\subsection{Datasets}
\label{app:datasets}
Following WaterBench~\citep{tu2024waterbench}, we evaluate three datasets. Finance (FiQA)~\citep{maia2018fiqa} contains questions that require domain knowledge and financial concepts. Alpaca (AlpacaFarm)~\citep{dubois2023alpacafarm} contains diverse instruction prompts spanning everyday and knowledge-intensive tasks. LongForm (ELI5)~\citep{fan2019eli5} contains open-ended questions that usually require long explanatory responses. Together, these datasets cover finance, general instruction following, and long-form explanation.

\FloatBarrier
\subsection{Positive and Negative Samples}
\label{sec:detection_samples}

For each watermarking method, base model, and dataset, we generate 300 watermarked outputs as positive samples. We retain outputs with at least 150 tokens and a word level 4 gram repetition ratio below 0.2, and remove empty, duplicated, anomalous, or otherwise degenerate outputs. Clean detection uses the retained outputs before attack. Robustness evaluation applies attacks to the same retained samples, and the attacked outputs are evaluated directly.

Following prior watermarking evaluations that use C4 human-written text as a clean null or empirical calibration reference~\citep{raban-etal-2026-lr,he2025empirical,zhao2025permute,gloaguen2026every}, for each backbone we sample 40,000 unique clean passages from the C4 RealNewsLike split~\citep{raffel2020exploring} and tokenize them with the corresponding model tokenizer. We use 30,000 passages for DenMark calibration and 10,000 disjoint passages as the held out negative set. Both sets are stratified so that token lengths are approximately uniform over 150--300 tokens under the corresponding tokenizer. Unless explicitly stated otherwise, all subsequent detection results use this backbone specific setup: DenMark is calibrated on the 30,000 passage pool, and every method is evaluated against the shared 10,000 passage held out pool using the metric and threshold procedure in Appendix~\ref{app:detection_metrics}.

\FloatBarrier
\subsection{Semantic Attacks Using GPT-4o-mini}
\label{sec:gpt_semantic_attacks}

We use GPT-4o-mini for three transformations that preserve meaning: rewriting, compression, and expansion. Each transformation is applied at two granularities. For sentence attacks, we segment the input document into sentences, submit each sentence independently to the API, and concatenate the outputs in their original order. For document attacks, we submit the complete response in one API call, allowing sentence and paragraph reorganization. All API calls use temperature $0.7$.

We cache each response by attack name, model name, and source sentence or document so that repeated evaluations reuse the same output. All three transformations use the same system prompt:
\begin{lstlisting}[style=prompt]
You are a helpful assistant that rewrites text while preserving its meaning.
\end{lstlisting}

\subsubsection{Rewrite}
\label{app:rewrite_prompt}

The rewrite attack changes wording, syntax, and sentence structure while preserving meaning and factual content. The sentence variant preserves sentence boundaries. The document variant processes the complete response jointly and may reorganize sentences and paragraphs.

\promptheading{User prompt for sentence attacks.}
\begin{lstlisting}[style=prompt]
Rewrite the following sentence in a natural and fluent way.
Preserve the original meaning, factual content, named entities, numbers, and technical terms.
Change the sentence structure, wording, and phrasing as much as possible.
Keep the length roughly similar.
Do not add explanations.
Return only the rewritten sentence.

Sentence:
{sentence}
\end{lstlisting}

\promptheading{User prompt for document attacks.}
\begin{lstlisting}[style=prompt]
Rewrite the following text in a natural and fluent way.
Preserve the original meaning, factual content, named entities, numbers, and technical terms.
Change the overall organization, sentence structure, wording, and phrasing as much as possible.
You may reorganize sentences and paragraphs, but do not summarize, omit details, or add new information.
Keep the total length roughly similar to the original.
Do not add explanations, labels, or commentary.
Return only the rewritten text.

Text:
{text}
\end{lstlisting}

\subsubsection{Compression}
\label{app:compression_prompt}

The compression attack rewrites the input to approximately 60--70\% of its original word count. The sentence variant compresses each sentence independently. The document variant compresses the complete response jointly.

\promptheading{User prompt for sentence attacks.}
\begin{lstlisting}[style=prompt]
Rewrite the following sentence in a shorter and more concise way.
Preserve the original meaning, factual content, named entities, numbers, technical terms, and essential qualifiers.
Keep the result as exactly one sentence; do not split or merge sentences.
The original sentence contains approximately {original_word_count} words.
Target approximately 60% to 70% of the original length: about {target_min_words} to {target_max_words} words.
Remove only unnecessary modifiers, redundancy, and verbose phrasing.
Do not change the core claim or add new information.
Do not add explanations.
Return only the rewritten sentence.

Sentence:
{sentence}
\end{lstlisting}

\promptheading{User prompt for document attacks.}
\begin{lstlisting}[style=prompt]
Rewrite the following text into a shorter and more concise version.
The original contains approximately {original_word_count} whitespace-separated words. Your rewrite must contain between {target_min_words} and {target_max_words} words (60-70% of the original length); aim for exactly {target_words} words.
Preserve the original meaning, factual content, named entities, numbers, and technical terms.
Remove redundancy, repetition, unnecessary modifiers, and verbose phrasing.
You may reorganize sentences and paragraphs, but do not change the core claims, omit essential information, or add new information.
Do not add explanations, labels, or commentary.
Return only the rewritten text.

Text:
{text}
\end{lstlisting}

\subsubsection{Expansion}
\label{app:expansion_prompt}

The expansion attack rewrites the input in a more explicit form without adding facts or claims. The sentence variant expands each sentence independently. The document variant expands the complete response jointly to approximately 130--150\% of its original length.

\promptheading{User prompt for sentence attacks.}
\begin{lstlisting}[style=prompt]
Rewrite the following sentence in a more detailed and explicit way.
Preserve the original meaning, factual content, named entities, numbers, and technical terms.
You may clarify implicit relationships and use richer phrasing, but do not add new facts, examples, numbers, named entities, or claims.
Change the sentence structure, wording, and phrasing as much as possible.
Do not add explanations.
Return only the rewritten sentence.

Sentence:
{sentence}
\end{lstlisting}

\promptheading{User prompt for document attacks.}
\begin{lstlisting}[style=prompt]
Rewrite the following text into a more detailed and explicit version.
The original contains approximately {original_word_count} whitespace-separated words. Your rewrite must contain between {target_min_words} and {target_max_words} words (130-150% of the original length); aim for exactly {target_words} words.
Preserve the original meaning, factual content, named entities, numbers, and technical terms.
You may clarify implicit relationships, reorganize sentences and paragraphs, and use richer phrasing, but do not add new facts, examples, numbers, named entities, or claims.
Do not add explanations, labels, or commentary.
Return only the rewritten text.

Text:
{text}
\end{lstlisting}

\FloatBarrier
\subsection{Detection Metrics}
\label{app:detection_metrics}
We report AUC and TPR at false positive rates of $0.5\%$, $1\%$, and $5\%$. At each operating point, we use the empirical score threshold that gives the highest TPR while keeping the measured false positive rate on the held-out negatives no greater than the target. The Clean column evaluates watermarked generations before attack, and the remaining columns report detection after the corresponding attack.

\FloatBarrier
\subsection{Generation Quality Evaluation}
\label{sec:generation_quality_eval}

We evaluate generation quality with perplexity and an LLM judge. For perplexity, we use Qwen2.5-32B-Instruct and construct a separate clean reference for each combination of base model and dataset from the corresponding retained generations without a watermark. We report
\begin{equation}
\Delta\,\operatorname{mean}(\log\mathrm{PPL})
=
\operatorname{mean}_i\!\left[\log\mathrm{PPL}_{\mathrm{wm},i}\right]
-
\operatorname{mean}_j\!\left[\log\mathrm{PPL}_{\mathrm{clean},j}\right].
\end{equation}
Lower values indicate a smaller quality deviation from the corresponding clean generations.

We also use GPT-4o-mini as an evaluator. Each generation receives scores for \emph{Style}, \emph{Consistency}, \emph{Accuracy}, and \emph{Ethics}; their equally weighted average is the judge score. The evaluator uses the same prompt and temperature zero for watermarked and clean generations. For each combination of base model and dataset, we report the difference between the watermarked judge score and the corresponding clean score ($\Delta$Clean). Higher judge scores and $\Delta$Clean values closer to zero indicate better preservation of generation quality.

The Average row aggregates all four base model groups and all three datasets.

\FloatBarrier
\subsection{Detailed Configurations of Generation Algorithms}
\label{sec:generation_algorithm_configs}

\paragraph{DenMark.}
We use the semantic encoder described in Section~\ref{sec:semantic_encoder_training}. We set semantic unit size to $m=25$, use $K_{t,b}=16$ candidates for every selected unit at each denoising step, use $C=2$ semantic channels, and average three one-step rollouts per candidate. Within each semantic unit, the step-dependent rollout count $R_t$ decreases linearly as decoding progresses: early positions receive up to five rollouts and later positions receive as few as one, giving an average of three rollouts over the unit. The base model decoding temperature is $0.5$; candidate and rollout temperatures are $0.6$ and $0.5$. Candidate positions are sampled randomly, and the candidate with the largest watermark score is committed. Detection calibrates each candidate unit size separately and applies Bonferroni correction over $\mathcal{G}=\{12,\ldots,37\}$.

\paragraph{DLM-KGW~\citep{gloaguen2025watermarking}.}
We use HashDistribution with $\delta=4$ on LLaDA-8B and LLaDA1.5-8B, $\delta=4.25$ on LLaDA2.0-mini, and $\delta=4$ on Dream-v0-Instruct-7B. Detection uses the corresponding $z$ score.

\paragraph{DGMark~\citep{hong2026dgmark}.}
We implement DGMark with its parity guided decoding and one step beam search. We use multinomial top 10 decoding with beam size 10 on the three LLaDA checkpoints and top 32 decoding with beam size 32 on Dream. On Dream, we use a block decoder with low confidence position selection because direct decoding frequently produces outputs below the required length. Detection averages squared $z$ scores over sliding windows of size $w=8$.

\paragraph{PatternMark~\citep{chen2025patternmark}.}
We use the OrderAgnostic implementation with $\delta=4$, $\ell=2$, pattern length 4, and patterns 0101 and 1010. Detection converts the observed pattern count into its right tail null $p$ value.

\paragraph{UMR~\citep{yang2026umr}.}
We use the official bitmap implementation with the 4 bit message 1001, watermark ratio $0.5$, secret key 42, temperature $0.5$, and low confidence remasking. We set $\delta=10$ for Finance and LongForm and $\delta=13$ for Alpaca on all three LLaDA checkpoints. Dream uses $\delta=10$ for all three datasets. Detection uses the official UMR $z$ score on the full tokenized response.

\FloatBarrier
\subsection{Semantic Encoder Training}

\label{sec:semantic_encoder_training}

Following prior semantic watermarking work on contrastively trained semantic representations~\citep{hou2024semstamp,hou2024ksemstamp,an2025defending}, we fine tune the encoder on original--paraphrase pairs to make its representations stable under meaning-preserving rewrites. We initialize $E_\eta$ from E5-base-v2~\citep{wang2022text} and optimize an InfoNCE objective with negatives from the same batch.

The source corpus contains 100,000 short text segments collected from publicly accessible web pages. Each segment is truncated to at most 25 words separated by whitespace, with an average length of 18.74 words and a median length of 20 words. The corpus includes finance and business, sports, general knowledge, tabular content, and multilingual text.

We sample 8,000 segments for training. For each sampled segment, Pegasus~\citep{zhang2020pegasus} generates a paraphrase that serves as the positive example. Removing empty or unchanged rewrites leaves 7,993 valid $(x_i,x_i^+)$ pairs.

For each input $x$, we mean pool the final hidden states of E5-base-v2~\citep{wang2022text} and apply $\ell_2$ normalization:
\begin{equation}
\label{eq:semantic_encoder_pooling}
E_\eta(x_i)
=
\operatorname{Normalize}\!\left(
\operatorname{MeanPool}\!\left(H_\eta(x_i)\right)
\right),
\end{equation}
where $H_\eta(x_i)$ denotes the final hidden states of the encoder. We optimize the encoder with InfoNCE and negatives from the same batch:
\begin{equation}
\label{eq:semantic_encoder_infonce}
\mathcal L_{\mathrm{sem}}
=
-\frac{1}{B_{\mathrm{enc}}}
\sum_{i=1}^{B_{\mathrm{enc}}}
\log
\frac{
\exp\!\left(E_\eta(x_i)^\top E_\eta(x_i^+)/\tau\right)
}{
\sum_{j=1}^{B_{\mathrm{enc}}}
\exp\!\left(E_\eta(x_i)^\top E_\eta(x_j^+)/\tau\right)
},
\end{equation}
where $x_i^+$ is the Pegasus paraphrase of $x_i$, $B_{\mathrm{enc}}$ is the batch size, and $\tau=0.05$. The objective pulls semantically equivalent segments closer in the embedding space and uses the other examples in the batch as negatives. Table~\ref{tab:semantic_encoder_training} gives the training configuration.

\begin{table}[!htb]
\centering
\caption{Training configuration of the semantic encoder.}
\label{tab:semantic_encoder_training}
\small
\setlength{\tabcolsep}{5pt}
\renewcommand{\arraystretch}{1.08}
\begin{tabular}{@{}P{0.38\textwidth}P{0.54\textwidth}@{}}
\toprule
Component & Configuration \\
\midrule
Base encoder & E5-base-v2~\citep{wang2022text} \\
Source corpus size & 100,000 text segments \\
Sampled segments & 8,000 \\
Valid training pairs & 7,993 \\
Maximum source length & 25 words \\
Positive generator & PEGASUS~\citep{zhang2020pegasus} \\
PEGASUS beam size & 10 \\
Pooling & Mean pooling with $\ell_2$ normalization \\
Objective & InfoNCE with negatives from the same batch \\
Contrastive temperature $\tau$ & 0.05 \\
Batch size & 128 \\
Learning rate & $1\times10^{-5}$ \\
Training epochs & 3 \\
Optimizer & AdamW \\
Learning rate schedule & Linear decay with 5\% warm up \\
\bottomrule
\end{tabular}
\end{table}

Encoder training uses no watermark key, semantic direction, generation model, or detection objective. The same trained encoder is used for watermark generation and detection and is optimized to keep paraphrases close in representation space.

\par

\section{Detection and Length Analyses}
\label{app:detection_length_analyses}

\subsection{Ablation of the Candidate Unit Size Scan Range}
\label{app:scan_range_ablation}

We vary the candidate unit sizes used by the calibrated detector. Starting from the generation size of 25 tokens, we compare detection at the fixed size with the ranges $\{23,\ldots,27\}$, $\{21,\ldots,29\}$, $\{17,\ldots,33\}$, $\{12,\ldots,37\}$, and $\{10,\ldots,40\}$. Figure~\ref{fig:scan_range_ablation} reports the aggregate attacked performance and results for individual attacks.

Wider scans improve robustness most under compression and expansion because these attacks can move the effective semantic scale away from the 25 token generation size. The Bonferroni penalty grows with the number of scanned sizes~\citep{shaffer1995multiple}. Performance improves through the $12$--$37$ range, and extending the scan to $10$--$40$ gives no consistent gain. We use $12$--$37$ because it covers the observed scale changes while avoiding the additional statistical penalty from larger scans.

\begin{figure}[!htb]
    \centering
    \includegraphics[width=\textwidth]{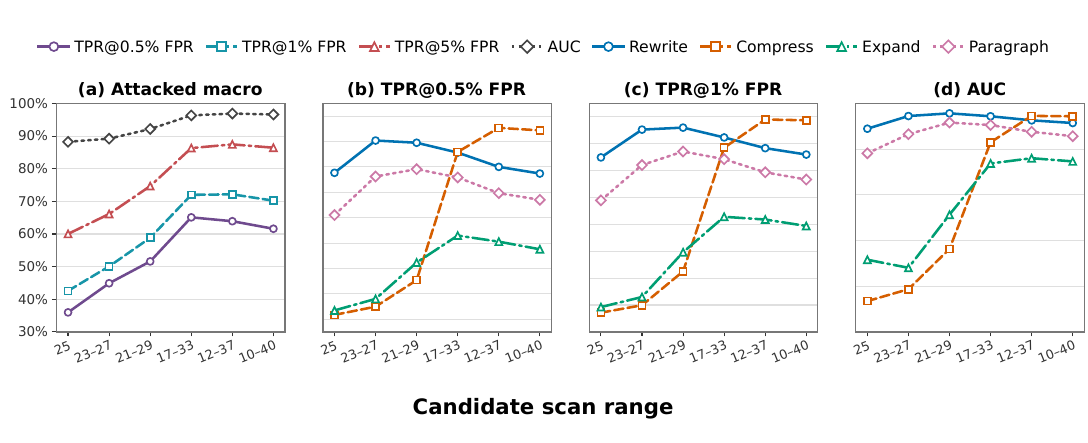}
    \caption{Ablation of the candidate unit size scan range. Wider scans improve robustness to scale changes caused by attacks, especially compression and expansion, and also increase the Bonferroni penalty. Performance peaks at the $12$--$37$ range across the evaluated settings, while $10$--$40$ gives no consistent additional gain.}
    \label{fig:scan_range_ablation}
\end{figure}

\par

\subsection{Effect of Maximum Generation Length}
\label{app:max_generation_length_ablation}
We vary the maximum continuation length over $\{100,150,200,250,300\}$ tokens. We evaluate DenMark on FinanceQA and LongFormQA with LLaDA-8B and Dream. Figure~\ref{fig:maxlen_ablation} shows TPR near saturation at all three false positive operating points and AUC across this range, with small fluctuations across models and datasets. Detection remains stable over the evaluated requested lengths.

\begin{figure}[!htb]
    \centering
    \includegraphics[width=\textwidth]{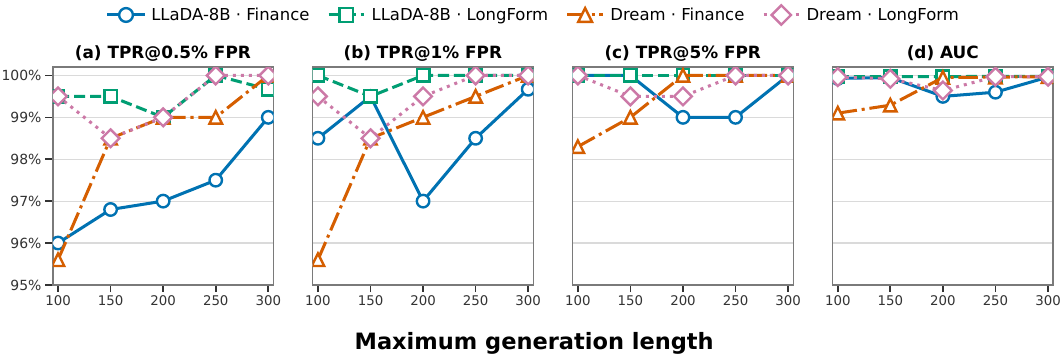}
    \caption{Ablation of maximum generation length. We vary the maximum generation length from 100 to 300 tokens and report TPR@0.5\% FPR, TPR@1\% FPR, TPR@5\% FPR, and AUC on FinanceQA and LongFormQA for LLaDA-8B and Dream. Detection remains high across the evaluated lengths.}
    \label{fig:maxlen_ablation}
\end{figure}

\par

\section{Generation and Model Ablations}
\label{app:generation_model_ablations}

\subsection{Candidate Temperature and Position Selection}
\label{app:candidate_temperature_position_ablation}
We vary candidate temperature under low confidence and random position selection on LLaDA-8B and Dream. Each point in Figure~\ref{fig:candidate_temperature_position_ablation} averages FinanceQA and LongFormQA.

Very low candidate temperatures concentrate the sampling distribution, which increases the chance that independently drawn candidates are identical or nearly identical. Lower candidate diversity gives the watermark selector fewer semantic alternatives and weakens insertion. The effect is larger with low confidence position selection. Random position selection adds positional diversity and retains higher detection at low temperatures. At temperature $0.6$, low confidence decoding reaches dataset averaged TPR@1\%FPR of $88.00\%$ on LLaDA-8B and $93.17\%$ on Dream, with AUCs of $0.9872$ and $0.9966$. We use $0.6$ as the default candidate temperature.

\begin{figure}[!htb]
  \centering
  \includegraphics[width=\textwidth]{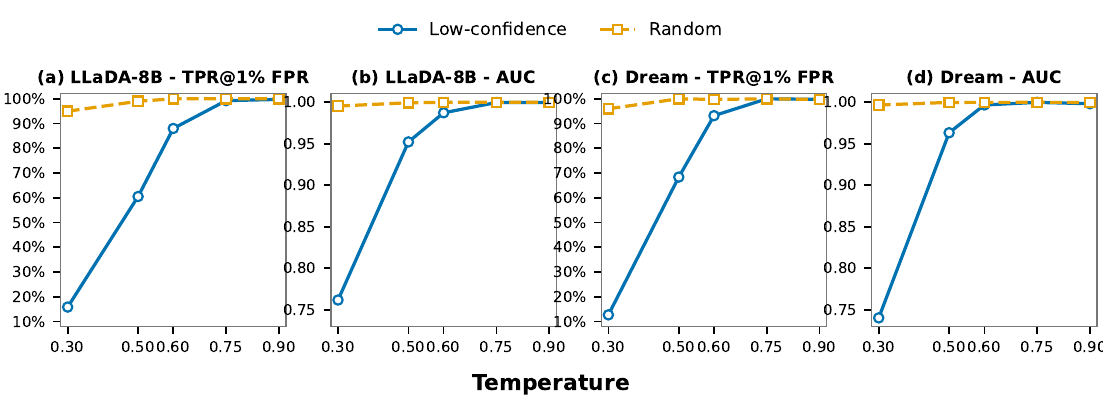}
  \caption{Ablation of candidate temperature and position selection. We vary candidate temperature with low confidence and random position selection on LLaDA-8B and Dream. Each point averages FinanceQA and LongFormQA. Very low temperatures reduce candidate diversity, especially with low confidence position selection, and temperature $0.6$ gives high detection on both base models.}
  \label{fig:candidate_temperature_position_ablation}
\end{figure}

\par

\subsection{Candidate Decoding with Multiple Positions}
\label{app:multi_position_candidate_decoding}

We test whether updating several unresolved positions per denoising step can reduce decoding cost while preserving semantic watermark detection. We fix $K_{t,b}=16$ and vary the position-update budget over $r_t\in\{2,4,8\}$. For LLaDA-8B, each candidate independently samples $r_t$ unresolved positions and proposes their tokens jointly. For Dream, we sample $r_t$ unresolved positions globally, group them by their 25 token semantic unit, and sample and roll out each nonempty group. Selected positions can span several semantic units.

Table~\ref{tab:multi_position_candidate_decoding} reports clean detection on Finance-QA. Each configuration contains 50 retained positive samples.

\begin{table}[!htb]
\centering
\caption{\textbf{Clean detection with multiple position candidate decoding.} We fix $K_{t,b}=16$ and vary the position-update budget $r_t$. We report TPR at three false positive operating points and rank AUC on Finance-QA ($N=50$ positives per row).}
\label{tab:multi_position_candidate_decoding}
\small
\renewcommand{\arraystretch}{1.03}
\setlength{\tabcolsep}{7pt}
\begin{tabular}{@{}llcccc@{}}
\toprule
Base model & $r_t$ & TPR@0.5\% FPR $\uparrow$ & TPR@1\% FPR $\uparrow$ & TPR@5\% FPR $\uparrow$ & AUC $\uparrow$ \\
\midrule
\multirow{3}{*}{LLaDA-8B}
& 2 & 100.00 & 100.00 & 100.00 & 0.9998 \\
& 4 & 100.00 & 100.00 & 100.00 & 0.9998 \\
& 8 & 100.00 & 100.00 & 100.00 & 0.9998 \\
\midrule
\multirow{3}{*}{Dream-v0-Instruct-7B}
& 2 & 100.00 & 100.00 & 100.00 & 0.9997 \\
& 4 & 98.00 & 98.00 & 98.00 & 0.9966 \\
& 8 & 100.00 & 100.00 & 100.00 & 0.9997 \\
\bottomrule
\end{tabular}
\end{table}

Clean detection is near saturation for every evaluated position count. LLaDA-8B reaches 100\% TPR at all three operating points for every $r_t$. Dream reaches 100\% for $r_t\in\{2,8\}$ and 98\% for $r_t=4$. With $N=50$ positives per configuration, the experiment shows no consistent loss in clean detection as more positions are updated jointly.

\par

\subsection{Ablation of the Semantic Encoder}
\label{app:encoder_backbone_ablation}

The primary DenMark configuration uses E5-base-v2 after contrastive fine tuning on the 7,993 pairs of original text and Pegasus paraphrases described in Section~\ref{sec:semantic_encoder_training}. We replace this encoder with the original \nolinkurl{intfloat/e5-base-v2} checkpoint for both watermark generation and detection to measure the effect of the training procedure. The remaining watermark and generation settings use $m=25$, $K_{t,b}=16$, $C=2$, the same rollout schedule and temperatures, and the calibrated scan over $\mathcal G=\{12,\ldots,37\}$.

Table~\ref{tab:encoder_backbone_ablation} compares the DenMark results from Tables~\ref{tab:c4_empirical_roc_1} and~\ref{tab:c4_empirical_roc_2} with results obtained using the original E5 checkpoint.

\begin{table}[!htb]
\centering
\caption{\textbf{Ablation of the semantic encoder under the GPT-4o-mini attacks in the main tables.} Each entry reports TPR@0.5\%FPR / TPR@1\%FPR / TPR@5\%FPR / AUC. \emph{Contrastive FT (main)} uses the DenMark values from the main tables; \emph{Untuned E5} uses the original \texttt{intfloat/e5-base-v2} checkpoint for generation and detection.}
\label{tab:encoder_backbone_ablation}
\scriptsize
\renewcommand{\arraystretch}{1.03}
\setlength{\tabcolsep}{2.0pt}
\resizebox{\textwidth}{!}{%
\begin{tabular}{@{}lll|cc@{}}
\toprule
Base model & Attack & Encoder & Finance & LongForm \\
\midrule
\multirow{8}{*}{LLaDA-8B}
& \multirow{2}{*}{Rewrite} & Contrastive FT (main) & (\textbf{63.55} / \textbf{74.92} / \textbf{90.30} / \textbf{0.9784}) & (\textbf{70.33} / \textbf{81.00} / \textbf{93.67} / \textbf{0.9838}) \\
& & Untuned E5 & (24.00 / 26.00 / 41.00 / 0.7958) & (20.00 / 26.00 / 47.00 / 0.8773) \\
\cmidrule(lr){2-5}
& \multirow{2}{*}{Compression} & Contrastive FT (main) & (\textbf{80.94} / \textbf{85.28} / \textbf{96.32} / \textbf{0.9877}) & (\textbf{91.00} / \textbf{93.33} / \textbf{97.33} / \textbf{0.9914}) \\
& & Untuned E5 & (48.00 / 59.00 / 83.00 / 0.9705) & (56.00 / 66.00 / 90.00 / 0.9697) \\
\cmidrule(lr){2-5}
& \multirow{2}{*}{Expansion} & Contrastive FT (main) & (\textbf{33.11} / \textbf{44.82} / \textbf{70.23} / \textbf{0.9209}) & (\textbf{47.33} / \textbf{61.00} / \textbf{83.00} / \textbf{0.9669}) \\
& & Untuned E5 & (13.00 / 16.00 / 38.00 / 0.7876) & (5.00 / 8.00 / 33.00 / 0.8119) \\
\cmidrule(lr){2-5}
& \multirow{2}{*}{Document Paraphrase} & Contrastive FT (main) & (\textbf{56.19} / \textbf{67.56} / \textbf{84.95} / \textbf{0.9578}) & (\textbf{63.00} / \textbf{73.00} / \textbf{91.00} / \textbf{0.9741}) \\
& & Untuned E5 & (25.00 / 28.00 / 34.00 / 0.7752) & (17.00 / 21.00 / 51.00 / 0.8353) \\
\midrule
\multirow{8}{*}{Dream-v0-Instruct-7B}
& \multirow{2}{*}{Rewrite} & Contrastive FT (main) & (\textbf{72.00} / \textbf{78.00} / \textbf{94.67} / \textbf{0.9865}) & (\textbf{84.00} / \textbf{88.67} / \textbf{95.00} / \textbf{0.9914}) \\
& & Untuned E5 & (38.00 / 44.00 / 61.00 / 0.8892) & (23.33 / 37.00 / 63.00 / 0.9116) \\
\cmidrule(lr){2-5}
& \multirow{2}{*}{Compression} & Contrastive FT (main) & (\textbf{88.00} / \textbf{91.00} / \textbf{97.00} / \textbf{0.9949}) & (\textbf{96.33} / \textbf{98.00} / \textbf{99.33} / \textbf{0.9983}) \\
& & Untuned E5 & (75.00 / 83.00 / 91.00 / 0.9881) & (72.33 / 81.00 / 96.00 / 0.9916) \\
\cmidrule(lr){2-5}
& \multirow{2}{*}{Expansion} & Contrastive FT (main) & (\textbf{36.33} / \textbf{49.00} / \textbf{77.67} / \textbf{0.9333}) & (\textbf{55.00} / \textbf{66.00} / \textbf{87.33} / \textbf{0.9735}) \\
& & Untuned E5 & (12.00 / 17.00 / 43.00 / 0.8132) & (12.00 / 14.00 / 52.00 / 0.8692) \\
\cmidrule(lr){2-5}
& \multirow{2}{*}{Document Paraphrase} & Contrastive FT (main) & (\textbf{60.67} / \textbf{68.33} / \textbf{87.67} / \textbf{0.9694}) & (\textbf{71.00} / \textbf{77.33} / \textbf{90.67} / \textbf{0.9786}) \\
& & Untuned E5 & (24.67 / 36.00 / 51.00 / 0.8572) & (21.33 / 30.00 / 50.00 / 0.8807) \\
\bottomrule
\end{tabular}%
}
\end{table}

Contrastive fine tuning improves all 16 combinations of base model, dataset, and attack at all three operating points. Averaged over these combinations, TPR@0.5\%, TPR@1\%, and TPR@5\% increase from 30.31\%, 37.00\%, and 57.75\% with the original E5 checkpoint to 67.59\%, 76.10\%, and 90.99\% with the trained encoder; mean AUC increases from 0.8765 to 0.9775. The largest gains occur under rewriting, expansion, and document paraphrasing, which matches the training objective of keeping semantic representations stable after changes in surface form.

\par

\section{Robustness and Generalization}
\label{app:robustness_generalization}

\subsection{Robustness to Back Translation with GPT-4o-mini}
\label{app:gpt4omini_backtranslation}
We evaluate translation as another semantic transformation using GPT-4o-mini. Each watermarked response is translated from English to Chinese and then back to English. One attack applies the round trip independently to each sentence before concatenation. The other applies the round trip to the complete response in one call. We use the retained sample sets from before attack for this evaluation.

At 1\% FPR, DenMark ranks first in all eight combinations of backbone, dataset, and attack. On LLaDA-8B, TPR@1\%FPR averaged across AlpacaFarm and LongForm is 93.97\% for sentence back translation and 94.27\% for document back translation. On Dream, the corresponding averages are 97.46\% and 97.67\%.

\begin{table}[!htb]
\centering
\caption{\textbf{Robustness to GPT-4o-mini back translation on LLaDA-8B and Dream-v0-Instruct-7B.} Each entry reports TPR@0.5\%FPR / TPR@1\%FPR / TPR@5\%FPR / AUC. Sentence back translation applies the English--Chinese--English round trip independently to each sentence. Document back translation applies it to the complete response.}
\label{tab:gpt4omini_backtranslation}
\scriptsize
\renewcommand{\arraystretch}{1.00}
\setlength{\tabcolsep}{1.5pt}
\resizebox{\textwidth}{!}{%
\begin{tabular}{@{}ll|cc@{}}
\toprule
Attack & Method & Alpaca & LongForm \\
\midrule
\multicolumn{4}{c}{\textbf{LLaDA-8B}} \\
\midrule
\multirow{5}{*}{Sentence Back Translation}
& DenMark & (\textbf{89.38} / \textbf{91.94} / \textbf{96.34} / \textbf{0.9845}) & (\textbf{93.67} / \textbf{96.00} / \textbf{99.00} / \textbf{0.9975}) \\
& DLM-KGW & (61.00 / 69.09 / 83.20 / 0.9613) & (55.20 / 64.40 / 81.00 / 0.9657) \\
& DGMark & (76.43 / 83.16 / 90.91 / 0.9606) & (92.67 / 95.33 / \textbf{99.00} / 0.9943) \\
& PatternMark & (76.17 / 79.42 / 88.45 / 0.9699) & (82.33 / 84.67 / 94.00 / 0.9870) \\
& UMR & (82.67 / 86.00 / 93.67 / 0.9778) & (75.00 / 79.67 / 92.00 / 0.9761) \\
\addlinespace[1pt]
\cmidrule(lr){1-4}
\multirow{5}{*}{Document Back Translation}
& DenMark & (\textbf{85.35} / \textbf{91.21} / \textbf{96.70} / \textbf{0.9883}) & (\textbf{95.67} / \textbf{97.33} / \textbf{99.67} / \textbf{0.9987}) \\
& DLM-KGW & (65.77 / 71.78 / 86.72 / 0.9660) & (53.60 / 61.60 / 83.80 / 0.9663) \\
& DGMark & (82.83 / 86.87 / 93.94 / 0.9708) & (94.00 / 96.67 / 99.00 / 0.9950) \\
& PatternMark & (77.98 / 81.95 / 90.25 / 0.9812) & (78.33 / 84.33 / 93.67 / 0.9852) \\
& UMR & (84.67 / 87.33 / 93.33 / 0.9852) & (80.67 / 86.00 / 93.33 / 0.9851) \\
\midrule
\multicolumn{4}{c}{\textbf{Dream-v0-Instruct-7B}} \\
\midrule
\multirow{5}{*}{Sentence Back Translation}
& DenMark & (\textbf{92.65} / \textbf{95.92} / \textbf{98.37} / \textbf{0.9970}) & (\textbf{98.33} / \textbf{99.00} / \textbf{99.67} / \textbf{0.9992}) \\
& DLM-KGW & (61.15 / 69.06 / 86.69 / 0.9651) & (46.96 / 56.91 / 78.45 / 0.9548) \\
& DGMark & (87.34 / 90.25 / 93.36 / 0.9686) & (93.00 / 95.00 / 98.67 / 0.9969) \\
& PatternMark & (79.48 / 84.72 / 92.58 / 0.9760) & (83.96 / 88.40 / 96.93 / 0.9900) \\
& UMR & (63.67 / 70.33 / 88.00 / 0.9715) & (67.33 / 76.33 / 89.67 / 0.9790) \\
\addlinespace[1pt]
\cmidrule(lr){1-4}
\multirow{5}{*}{Document Back Translation}
& DenMark & (\textbf{93.47} / \textbf{96.33} / \textbf{98.37} / \textbf{0.9972}) & (\textbf{97.33} / \textbf{99.00} / \textbf{99.67} / \textbf{0.9991}) \\
& DLM-KGW & (56.12 / 63.67 / 82.01 / 0.9554) & (35.36 / 43.09 / 67.40 / 0.9157) \\
& DGMark & (88.38 / 91.08 / 95.02 / 0.9789) & (89.33 / 93.67 / 96.67 / 0.9925) \\
& PatternMark & (83.84 / 87.34 / 93.89 / 0.9869) & (81.23 / 84.98 / 91.81 / 0.9837) \\
& UMR & (64.33 / 72.67 / 87.67 / 0.9678) & (58.33 / 67.67 / 86.00 / 0.9711) \\
\bottomrule
\end{tabular}%
}
\end{table}

\par

\FloatBarrier
\subsection{Document Compression and Expansion}
\label{app:document_compress_expand}
Table~\ref{tab:document_compress_expand} reports robustness when GPT-4o-mini compresses or expands each complete response in one call while preserving meaning. These attacks can reorganize discourse and change sentence boundaries. Across all four backbones and three datasets, DenMark has the highest TPR@1\%FPR, TPR@5\%FPR, and AUC in every comparison.

\begin{table}[!htb]
\centering
\caption{\textbf{Robustness to document compression and expansion across four DLMs.} Each entry reports TPR@0.5\%FPR / TPR@1\%FPR / TPR@5\%FPR / AUC.}
\label{tab:document_compress_expand}
\scriptsize
\renewcommand{\arraystretch}{1.00}
\setlength{\tabcolsep}{1.5pt}
\resizebox{\textwidth}{!}{%
\begin{tabular}{@{}ll|ccc@{}}
\toprule
Attack & Method & Finance & Alpaca & LongForm \\
\midrule
\multicolumn{5}{c}{\textbf{LLaDA-8B}} \\
\midrule
\multirow{5}{*}{Document Compression} & DenMark & (\textbf{64.88} / \textbf{73.58} / \textbf{86.29} / \textbf{0.9671}) & (\textbf{67.03} / \textbf{75.46} / \textbf{89.01} / \textbf{0.9554}) & (\textbf{77.33} / \textbf{84.67} / \textbf{93.33} / \textbf{0.9789}) \\
 & DLM-KGW & (21.60 / 29.80 / 51.80 / 0.8763) & (29.67 / 37.14 / 57.05 / 0.8890) & (13.00 / 17.80 / 34.20 / 0.7853) \\
 & DGMark & (42.00 / 51.33 / 69.67 / 0.9255) & (42.09 / 53.87 / 72.39 / 0.9138) & (33.67 / 46.00 / 71.00 / 0.9309) \\
 & PatternMark & (27.33 / 31.33 / 57.00 / 0.8935) & (31.05 / 39.71 / 59.93 / 0.8899) & (18.00 / 23.33 / 42.67 / 0.8251) \\
 & UMR & (32.33 / 43.00 / 64.00 / 0.9164) & (47.33 / 52.00 / 69.00 / 0.9152) & (26.00 / 33.33 / 57.33 / 0.8853) \\
\addlinespace[1pt]
\cmidrule(lr){1-5}
\multirow{5}{*}{Document Expansion} & DenMark & (\textbf{27.09} / \textbf{39.13} / \textbf{63.88} / \textbf{0.8973}) & (23.44 / \textbf{37.00} / \textbf{60.44} / \textbf{0.8966}) & (\textbf{37.00} / \textbf{55.33} / \textbf{77.67} / \textbf{0.9413}) \\
 & DLM-KGW & (7.00 / 11.00 / 26.60 / 0.7569) & (8.92 / 14.32 / 28.63 / 0.7427) & (5.60 / 8.20 / 20.00 / 0.6788) \\
 & DGMark & (11.00 / 18.00 / 41.67 / 0.8551) & (14.81 / 21.55 / 43.10 / 0.8374) & (9.33 / 20.67 / 46.00 / 0.8781) \\
 & PatternMark & (24.00 / 30.00 / 50.67 / 0.8447) & (21.66 / 27.80 / 41.88 / 0.8271) & (10.33 / 12.33 / 32.67 / 0.7545) \\
 & UMR & (10.67 / 13.33 / 28.33 / 0.7456) & (\textbf{24.00} / 29.00 / 44.67 / 0.7892) & (11.00 / 14.33 / 32.00 / 0.7900) \\
\midrule
\multicolumn{5}{c}{\textbf{LLaDA1.5-8B}} \\
\midrule
\multirow{5}{*}{Document Compression} & DenMark & (\textbf{56.52} / \textbf{66.56} / \textbf{86.62} / \textbf{0.9664}) & (\textbf{62.90} / \textbf{71.02} / \textbf{84.45} / \textbf{0.9670}) & (\textbf{71.33} / \textbf{79.00} / \textbf{91.00} / \textbf{0.9763}) \\
 & DLM-KGW & (24.80 / 31.20 / 55.20 / 0.8873) & (33.40 / 39.38 / 58.56 / 0.8888) & (20.67 / 28.67 / 49.00 / 0.8441) \\
 & DGMark & (44.00 / 54.33 / 70.67 / 0.9269) & (46.64 / 58.39 / 73.83 / 0.9220) & (44.00 / 50.00 / 70.67 / 0.9381) \\
 & PatternMark & (26.09 / 35.45 / 58.53 / 0.8857) & (36.01 / 41.26 / 63.64 / 0.8977) & (16.33 / 20.67 / 39.00 / 0.7990) \\
 & UMR & (37.33 / 44.33 / 64.33 / 0.9091) & (47.67 / 52.67 / 70.00 / 0.9167) & (29.67 / 38.00 / 58.33 / 0.8920) \\
\addlinespace[1pt]
\cmidrule(lr){1-5}
\multirow{5}{*}{Document Expansion} & DenMark & (\textbf{26.42} / \textbf{35.12} / \textbf{60.20} / \textbf{0.8958}) & (\textbf{26.15} / \textbf{36.40} / \textbf{60.07} / \textbf{0.8946}) & (\textbf{37.67} / \textbf{49.67} / \textbf{73.33} / \textbf{0.9302}) \\
 & DLM-KGW & (7.20 / 12.00 / 26.00 / 0.7501) & (12.78 / 17.53 / 33.20 / 0.7656) & (7.00 / 10.67 / 22.33 / 0.7295) \\
 & DGMark & (15.00 / 21.33 / 43.67 / 0.8609) & (16.11 / 25.50 / 47.99 / 0.8389) & (17.33 / 24.67 / 55.33 / 0.8900) \\
 & PatternMark & (22.07 / 28.76 / 50.17 / 0.8571) & (21.68 / 26.92 / 48.60 / 0.8085) & (6.33 / 10.33 / 34.33 / 0.7687) \\
 & UMR & (9.33 / 12.00 / 29.67 / 0.7634) & (20.33 / 25.67 / 43.00 / 0.7984) & (10.00 / 17.67 / 36.67 / 0.7832) \\
\midrule
\multicolumn{5}{c}{\textbf{LLaDA2.0-mini}} \\
\midrule
\multirow{5}{*}{Document Compression} & DenMark & (\textbf{60.20} / \textbf{67.56} / \textbf{86.62} / \textbf{0.9662}) & (\textbf{57.04} / \textbf{63.18} / \textbf{83.75} / \textbf{0.9570}) & (\textbf{71.33} / \textbf{75.33} / \textbf{88.33} / \textbf{0.9717}) \\
 & DLM-KGW & (35.67 / 40.67 / 64.67 / 0.9164) & (43.73 / 51.97 / 69.18 / 0.9381) & (30.00 / 38.33 / 64.67 / 0.9059) \\
 & DGMark & (45.67 / 58.00 / 80.00 / 0.9578) & (50.92 / 57.20 / 74.54 / 0.9359) & (47.33 / 54.67 / 76.67 / 0.9390) \\
 & PatternMark & (29.00 / 35.67 / 56.00 / 0.8779) & (32.40 / 35.54 / 55.75 / 0.8572) & (31.33 / 36.67 / 64.33 / 0.9119) \\
 & UMR & (20.00 / 28.67 / 52.00 / 0.8572) & (26.33 / 33.00 / 55.33 / 0.8650) & (23.33 / 29.67 / 52.00 / 0.8728) \\
\addlinespace[1pt]
\cmidrule(lr){1-5}
\multirow{5}{*}{Document Expansion} & DenMark & (\textbf{32.78} / \textbf{41.47} / \textbf{67.89} / \textbf{0.9238}) & (\textbf{30.69} / \textbf{39.35} / \textbf{63.90} / \textbf{0.9178}) & (\textbf{41.33} / \textbf{50.00} / \textbf{78.67} / \textbf{0.9551}) \\
 & DLM-KGW & (9.33 / 16.00 / 36.67 / 0.8133) & (26.16 / 34.05 / 56.99 / 0.8795) & (12.00 / 19.67 / 45.00 / 0.8452) \\
 & DGMark & (13.67 / 20.33 / 46.67 / 0.8737) & (26.94 / 35.06 / 56.46 / 0.9038) & (15.33 / 23.67 / 54.33 / 0.9043) \\
 & PatternMark & (21.67 / 24.33 / 48.33 / 0.8424) & (25.44 / 29.97 / 49.48 / 0.8366) & (18.33 / 23.00 / 47.33 / 0.8692) \\
 & UMR & (6.33 / 10.67 / 30.67 / 0.7653) & (15.67 / 20.33 / 40.00 / 0.7861) & (8.00 / 14.33 / 36.33 / 0.8137) \\
\midrule
\multicolumn{5}{c}{\textbf{Dream-v0-Instruct-7B}} \\
\midrule
\multirow{5}{*}{Document Compression} & DenMark & (\textbf{72.33} / \textbf{78.33} / \textbf{93.33} / \textbf{0.9820}) & (\textbf{72.65} / \textbf{78.37} / \textbf{89.39} / \textbf{0.9688}) & (\textbf{76.00} / \textbf{82.00} / \textbf{94.67} / \textbf{0.9837}) \\
 & DLM-KGW & (17.36 / 22.57 / 45.14 / 0.7947) & (30.22 / 38.49 / 56.12 / 0.8427) & (13.26 / 16.02 / 34.25 / 0.7570) \\
 & DGMark & (25.20 / 32.38 / 46.11 / 0.8048) & (43.78 / 53.94 / 71.37 / 0.9142) & (31.00 / 42.67 / 62.67 / 0.8915) \\
 & PatternMark & (25.84 / 31.54 / 55.70 / 0.8860) & (39.30 / 45.41 / 65.50 / 0.9219) & (20.82 / 26.62 / 47.44 / 0.8704) \\
 & UMR & (16.67 / 21.33 / 44.00 / 0.8574) & (32.67 / 40.00 / 60.67 / 0.8886) & (12.33 / 19.00 / 42.00 / 0.8456) \\
\addlinespace[1pt]
\cmidrule(lr){1-5}
\multirow{5}{*}{Document Expansion} & DenMark & (\textbf{25.67} / \textbf{32.67} / \textbf{59.67} / \textbf{0.8988}) & (23.67 / \textbf{32.24} / \textbf{57.14} / \textbf{0.8736}) & (\textbf{22.00} / \textbf{35.33} / \textbf{70.00} / \textbf{0.9250}) \\
 & DLM-KGW & (2.08 / 3.82 / 17.71 / 0.7009) & (7.19 / 10.79 / 22.66 / 0.7592) & (1.66 / 2.21 / 6.63 / 0.5998) \\
 & DGMark & (7.17 / 14.14 / 28.89 / 0.7662) & (23.44 / 30.50 / 50.62 / 0.8701) & (9.33 / 17.33 / 42.33 / 0.8534) \\
 & PatternMark & (24.16 / 27.52 / 50.34 / 0.8386) & (\textbf{26.20} / 30.57 / 51.97 / 0.8552) & (18.77 / 22.18 / 41.98 / 0.8086) \\
 & UMR & (8.33 / 12.00 / 28.67 / 0.7741) & (17.33 / 22.33 / 44.33 / 0.8103) & (5.33 / 10.67 / 26.00 / 0.7678) \\
\bottomrule
\end{tabular}%
}
\end{table}

\FloatBarrier
\par

\subsection{Robustness to Non-Uniform Local Length Changes}
\label{app:nonuniform_local_length_attacks}

\paragraph{Motivation and attack design.}
DenMark uses fixed-size semantic units during generation, and each candidate size \(m'\) in the detector is applied uniformly across the entire document. To stress-test this global unit-size design, we introduce three attacks that induce different length changes in different parts of the same text, so that a single candidate size need not recover all original unit boundaries.
\begin{enumerate}
    \item \textbf{Alternating compression/expansion.} Starting with compression, we alternate sentence-level compression to 60--70\% of the original word count and expansion, producing adjacent sentences with opposite length changes.
    \item \textbf{Random runs.} Each document starts in a randomly selected compression or expansion mode. At each sentence boundary, the mode switches with probability 0.35, creating consecutive runs with different local length changes. For a multi-sentence document assigned only one mode, one sentence is switched to ensure that both transformations occur.
    \item \textbf{Variable local compression.} We partition each document into up to three contiguous, sentence-aligned regions, with approximately equal sentence counts, and assign target word-count ratios of 50--60\%, 60--70\%, and 70--80\% in a seeded random order. Documents with fewer than three sentences use fewer regions.
\end{enumerate}

\begin{table}[!htb]
\centering
\caption{\textbf{Non-uniform local length attack robustness on Dream-v0-Instruct-7B.} Results are averaged over FinanceQA and LongFormQA. We report TPR@0.5\%FPR, TPR@1\%FPR, TPR@5\%FPR, and AUC.}
\label{tab:nonuniform_local_length_attacks}
\small
\begin{tabular}{@{}llrrrr@{}}
\toprule
Attack & Method & TPR@0.5\% & TPR@1\% & TPR@5\% & AUC \\
\midrule
\multirow{5}{*}{\shortstack[l]{Alternating\\compression/expansion}}
& DenMark & \textbf{75.67} & \textbf{80.50} & \textbf{93.00} & \textbf{0.9808} \\
& DLM-KGW & 10.99 & 17.53 & 36.30 & 0.7783 \\
& DGMark & 16.19 & 23.00 & 40.81 & 0.8118 \\
& PatternMark & 41.60 & 48.71 & 70.87 & 0.9384 \\
& UMR & 18.33 & 26.17 & 49.00 & 0.8622 \\
\midrule
\multirow{5}{*}{Random runs}
& DenMark & \textbf{53.33} & \textbf{63.33} & \textbf{86.00} & \textbf{0.9641} \\
& DLM-KGW & 13.47 & 19.22 & 36.18 & 0.7867 \\
& DGMark & 18.33 & 24.42 & 43.08 & 0.8151 \\
& PatternMark & 43.13 & 49.38 & 72.41 & 0.9275 \\
& UMR & 21.00 & 27.50 & 53.00 & 0.8809 \\
\midrule
\multirow{5}{*}{\shortstack[l]{Variable local\\compression}}
& DenMark & \textbf{89.33} & \textbf{91.50} & \textbf{97.67} & \textbf{0.9935} \\
& DLM-KGW & 51.09 & 55.28 & 70.72 & 0.9183 \\
& DGMark & 48.25 & 53.71 & 69.10 & 0.9002 \\
& PatternMark & 56.81 & 64.76 & 83.89 & 0.9563 \\
& UMR & 35.17 & 48.33 & 70.67 & 0.9369 \\
\bottomrule
\end{tabular}
\end{table}

\paragraph{Results.}
DenMark achieves the highest TPR at all three operating points and the highest AUC under each attack. Its TPR@1\%FPR is 80.50\%, 63.33\%, and 91.50\% for alternating compression/expansion, random runs, and variable local compression, respectively. These results demonstrate robustness to the evaluated non-uniform local length changes despite using a global candidate unit size. They are consistent with some units retaining enough semantic watermark evidence for the aggregated document score to remain discriminative even when other units are misaligned; they do not establish invariance to arbitrary local length distortions.

\par

\FloatBarrier
\section{Additional Evaluations, Baselines, and Ablations}
\label{app:additional_evaluations}

\subsection{Generation Quality Evaluation}
\label{app:generation_quality}

Table~\ref{tab:generation_quality_deltas} compares generation quality using $\Delta\,\mathrm{mean}(\log\mathrm{PPL})$ and GPT-4o-mini judge scores. DenMark has the lowest average $\Delta$LogPPL among the evaluated watermark methods and the highest average judge score. The quality measurements stay in the same range as the strongest baselines while DenMark gives higher detection under the attacks reported in Tables~\ref{tab:c4_empirical_roc_1} and~\ref{tab:c4_empirical_roc_2}.

\begin{table}[!htb]
\vspace{3pt}
\centering
\caption{\textbf{Generation quality relative to clean outputs.} We report the absolute Qwen2.5-32B-Instruct $\mathrm{mean}(\log\mathrm{PPL})$ and the GPT-4o-mini average judge score over Style, Consistency, Accuracy, and Ethics. In both quality columns, the parenthesized value is the difference from the corresponding clean reference.}
\label{tab:generation_quality_deltas}

\scriptsize
\renewcommand{\arraystretch}{0.93}
\setlength{\tabcolsep}{3.0pt}
\resizebox{\textwidth}{!}{%
\begin{tabular}{@{}llcc@{\hspace{14pt}}llcc@{}}
\toprule
\multicolumn{4}{c}{\textbf{LLaDA-8B}} & \multicolumn{4}{c}{\textbf{LLaDA1.5-8B}} \\
\cmidrule(r){1-4}\cmidrule(l){5-8}
Dataset & Method & Mean LogPPL ($\Delta$Clean) & Judge Avg. ($\Delta$Clean) & Dataset & Method & Mean LogPPL ($\Delta$Clean) & Judge Avg. ($\Delta$Clean) \\
\midrule
\multirow{6}{*}{Finance} & Clean & 1.47 (+0.00) & 7.92 (+0.00) & \multirow{6}{*}{Finance} & Clean & 1.52 (+0.00) & 7.91 (+0.00) \\
 & DenMark & 1.83 (+0.36) & 7.21 (-0.71) &  & DenMark & 1.87 (+0.36) & 7.36 (-0.55) \\
 & DLM-KGW & 1.82 (+0.34) & 7.40 (-0.52) &  & DLM-KGW & 1.87 (+0.35) & 7.56 (-0.35) \\
 & DGMark & 1.91 (+0.44) & 7.11 (-0.81) &  & DGMark & 1.96 (+0.44) & 7.37 (-0.54) \\
 & PatternMark & 2.16 (+0.69) & 6.97 (-0.95) &  & PatternMark & 2.21 (+0.69) & 7.17 (-0.74) \\
 & UMR & 1.78 (+0.30) & 7.17 (-0.75) &  & UMR & 1.81 (+0.29) & 7.29 (-0.62) \\
\addlinespace[1pt]
\multirow{6}{*}{Alpaca} & Clean & 1.36 (+0.00) & 7.56 (+0.00) & \multirow{6}{*}{Alpaca} & Clean & 1.40 (+0.00) & 7.70 (+0.00) \\
 & DenMark & 1.77 (+0.40) & 7.22 (-0.34) &  & DenMark & 1.78 (+0.37) & 7.31 (-0.39) \\
 & DLM-KGW & 1.74 (+0.38) & 7.30 (-0.26) &  & DLM-KGW & 1.75 (+0.34) & 7.43 (-0.27) \\
 & DGMark & 1.78 (+0.42) & 6.94 (-0.62) &  & DGMark & 1.84 (+0.44) & 7.09 (-0.61) \\
 & PatternMark & 2.07 (+0.70) & 6.99 (-0.56) &  & PatternMark & 2.05 (+0.64) & 7.25 (-0.45) \\
 & UMR & 1.72 (+0.36) & 6.91 (-0.64) &  & UMR & 1.71 (+0.30) & 7.13 (-0.57) \\
\addlinespace[1pt]
\multirow{6}{*}{LongForm} & Clean & 1.52 (+0.00) & 7.83 (+0.00) & \multirow{6}{*}{LongForm} & Clean & 1.56 (+0.00) & 7.92 (+0.00) \\
 & DenMark & 1.89 (+0.37) & 7.37 (-0.46) &  & DenMark & 1.91 (+0.35) & 7.50 (-0.42) \\
 & DLM-KGW & 1.90 (+0.38) & 7.23 (-0.60) &  & DLM-KGW & 1.94 (+0.38) & 7.61 (-0.31) \\
 & DGMark & 1.97 (+0.45) & 7.18 (-0.65) &  & DGMark & 2.00 (+0.44) & 7.46 (-0.46) \\
 & PatternMark & 2.33 (+0.81) & 6.75 (-1.07) &  & PatternMark & 2.34 (+0.78) & 7.01 (-0.92) \\
 & UMR & 1.84 (+0.32) & 7.09 (-0.74) &  & UMR & 1.86 (+0.30) & 7.32 (-0.61) \\
\bottomrule
\end{tabular}%
}
\vspace{5pt}
\resizebox{\textwidth}{!}{%
\begin{tabular}{@{}llcc@{\hspace{14pt}}llcc@{}}
\toprule
\multicolumn{4}{c}{\textbf{LLaDA2.0-mini}} & \multicolumn{4}{c}{\textbf{Dream-v0-Instruct-7B}} \\
\cmidrule(r){1-4}\cmidrule(l){5-8}
Dataset & Method & Mean LogPPL ($\Delta$Clean) & Judge Avg. ($\Delta$Clean) & Dataset & Method & Mean LogPPL ($\Delta$Clean) & Judge Avg. ($\Delta$Clean) \\
\midrule
\multirow{6}{*}{Finance} & Clean & 2.24 (+0.00) & 7.61 (+0.00) & \multirow{6}{*}{Finance} & Clean & 1.78 (+0.00) & 7.62 (+0.00) \\
 & DenMark & 2.61 (+0.37) & 7.11 (-0.50) &  & DenMark & 2.02 (+0.24) & 7.42 (-0.20) \\
 & DLM-KGW & 2.58 (+0.34) & 7.20 (-0.41) &  & DLM-KGW  & 2.06 (+0.29) & 7.15 (-0.46) \\
 & DGMark & 2.64 (+0.40) & 7.29 (-0.32) &  & DGMark & 2.11 (+0.33) & 7.41 (-0.21) \\
 & PatternMark & 2.76 (+0.52) & 6.86 (-0.75) &  & PatternMark & 2.15 (+0.38) & 7.09 (-0.53) \\
 & UMR & 2.52 (+0.28) & 6.61 (-1.01) &  & UMR & 2.70 (+0.92) & 6.62 (-1.00) \\
\addlinespace[1pt]
\multirow{6}{*}{Alpaca} & Clean & 2.09 (+0.00) & 7.54 (+0.00) & \multirow{6}{*}{Alpaca} & Clean & 1.79 (+0.00) & 7.37 (+0.00) \\
 & DenMark & 2.51 (+0.43) & 7.13 (-0.41) &  & DenMark & 2.10 (+0.31) & 7.14 (-0.23) \\
 & DLM-KGW & 2.45 (+0.36) & 7.27 (-0.27) &  & DLM-KGW  & 2.13 (+0.33) & 6.85 (-0.52) \\
 & DGMark & 2.50 (+0.42) & 7.24 (-0.30) &  & DGMark & 2.24 (+0.45) & 6.94 (-0.43) \\
 & PatternMark & 2.65 (+0.56) & 6.85 (-0.69) &  & PatternMark & 2.26 (+0.46) & 6.94 (-0.43) \\
 & UMR & 2.42 (+0.33) & 6.72 (-0.82) &  & UMR & 2.86 (+1.07) & 6.62 (-0.75) \\
\addlinespace[1pt]
\multirow{6}{*}{LongForm} & Clean & 2.39 (+0.00) & 7.70 (+0.00) & \multirow{6}{*}{LongForm} & Clean & 1.91 (+0.00) & 7.50 (+0.00) \\
 & DenMark & 2.76 (+0.38) & 7.18 (-0.52) &  & DenMark & 2.09 (+0.18) & 7.36 (-0.14) \\
 & DLM-KGW & 2.78 (+0.39) & 7.22 (-0.48) &  & DLM-KGW  & 2.34 (+0.44) & 6.85 (-0.65) \\
 & DGMark & 2.81 (+0.42) & 7.32 (-0.38) &  & DGMark & 2.05 (+0.14) & 7.34 (-0.16) \\
 & PatternMark & 2.94 (+0.56) & 7.04 (-0.66) &  & PatternMark & 2.35 (+0.44) & 6.82 (-0.68) \\
 & UMR & 2.75 (+0.36) & 6.58 (-1.12) &  & UMR & 2.80 (+0.89) & 6.54 (-0.96) \\
\bottomrule
\end{tabular}%
}
\vspace{6pt}
\begin{tabular}{@{}lcc@{}}
\toprule
\multicolumn{3}{c}{\textbf{Average}} \\
\midrule
Method & Mean LogPPL ($\Delta$Clean) & Judge Avg. ($\Delta$Clean) \\
\midrule
Clean & 1.75 (+0.00) & 7.68 (+0.00) \\
DenMark & 2.10 (+0.34) & 7.28 (-0.41) \\
DLM-KGW  & 2.11 (+0.36) & 7.26 (-0.42) \\
DGMark & 2.15 (+0.40) & 7.22 (-0.46) \\
PatternMark & 2.36 (+0.60) & 6.98 (-0.70) \\
UMR & 2.23 (+0.48) & 6.88 (-0.80) \\
\bottomrule
\end{tabular}
\vspace{3pt}
\par\vspace{2pt}\begin{minipage}{0.98\linewidth}\scriptsize PPL uses EOS-corrected Dream cohorts and shared same-base, same-dataset clean references. Judge scores for Dream clean and DGMark are recomputed on the corrected cohorts; other absolute judge scores are unchanged. Judge differences use the corresponding same-base, same-dataset clean reference.\end{minipage}
\end{table}

\FloatBarrier
\par

\subsection{Semantic Baselines with Block Decoding}
\label{app:blockwise_semantic_baseline_comparison}

\begin{table}[!htb]
\centering
\caption{Clean detection for semantic baselines with block decoding. We report Qwen2.5-32B-Instruct $\Delta\,\mathrm{mean}(\log\mathrm{PPL})$, TPR at 0.5\%, 1\%, and 5\% FPR, and AUC.}
\label{tab:blockwise_semantic_clean_detection}
\tiny
\renewcommand{\arraystretch}{1.08}
\setlength{\tabcolsep}{1.6pt}
\resizebox{\textwidth}{!}{%
\begin{tabular}{@{}llccccc@{\hspace{10pt}}llccccc@{}}
\toprule
\multicolumn{7}{c}{\textbf{LLaDA-8B}}
&
\multicolumn{7}{c}{\textbf{LLaDA1.5-8B}} \\
\cmidrule(r){1-7}
\cmidrule(l){8-14}
Dataset & Method & $\Delta$LogPPL & TPR@0.5\% & TPR@1\% & TPR@5\% & AUC
&
Dataset & Method & $\Delta$LogPPL & TPR@0.5\% & TPR@1\% & TPR@5\% & AUC \\
\midrule

\multirow{4}{*}{Finance}
& DenMark & \textbf{+0.36} & \textbf{99.00} & \textbf{99.67} & \textbf{100.00} & \textbf{0.9996}
&
\multirow{4}{*}{Finance}
& DenMark & \textbf{+0.36} & \textbf{100.00} & \textbf{100.00} & \textbf{100.00} & \textbf{0.9994} \\

& Block Best-of-$K$ & +1.21 & 54.00 & 68.00 & 88.00 & 0.9825
&
& Block Best-of-$K$ & +1.28 & 60.00 & 78.00 & 96.00 & 0.9759 \\

& PMark & +1.29 & 72.00 & 74.00 & 88.00 & 0.9867
&
& PMark & +1.09 & 62.00 & 72.00 & 88.00 & 0.9829 \\

& SemStamp & +1.36 & 32.00 & 40.00 & 48.00 & 0.8328
&
& SemStamp & +1.31 & 18.00 & 22.00 & 46.00 & 0.7609 \\

\addlinespace[1pt]

\multirow{4}{*}{Alpaca}
& DenMark & \textbf{+0.40} & \textbf{95.24} & \textbf{97.07} & \textbf{98.53} & \textbf{0.9915}
&
\multirow{4}{*}{Alpaca}
& DenMark & \textbf{+0.37} & \textbf{95.41} & \textbf{96.47} & \textbf{98.59} & \textbf{0.9964} \\

& Block Best-of-$K$ & +1.43 & 68.00 & 78.00 & 94.00 & 0.9758
&
& Block Best-of-$K$ & +1.04 & 78.00 & 90.00 & 96.00 & 0.9811 \\

& PMark & +1.18 & 42.00 & 46.00 & 66.00 & 0.8411
&
& PMark & +1.19 & 26.53 & 30.61 & 61.22 & 0.8208 \\

& SemStamp & +1.12 & 38.00 & 50.00 & 70.00 & 0.8850
&
& SemStamp & +1.13 & 32.00 & 38.00 & 44.00 & 0.8195 \\

\addlinespace[1pt]

\multirow{4}{*}{LongForm}
& DenMark & \textbf{+0.37} & \textbf{99.67} & \textbf{100.00} & \textbf{100.00} & \textbf{0.9998}
&
\multirow{4}{*}{LongForm}
& DenMark & \textbf{+0.35} & \textbf{99.33} & \textbf{99.67} & \textbf{100.00} & \textbf{0.9993} \\

& Block Best-of-$K$ & +1.18 & 80.00 & 94.00 & 98.00 & 0.9957
&
& Block Best-of-$K$ & +1.24 & 76.00 & 82.00 & 98.00 & 0.9941 \\

& PMark & +1.15 & 50.00 & 58.00 & 72.00 & 0.9575
&
& PMark & +0.99 & 62.00 & 70.00 & 84.00 & 0.9439 \\

& SemStamp & +1.36 & 38.78 & 40.82 & 59.18 & 0.8329
&
& SemStamp & +1.21 & 34.00 & 42.00 & 60.00 & 0.8419 \\

\bottomrule
\end{tabular}%
}

\vspace{3pt}
\begin{minipage}{0.99\textwidth}
\scriptsize
\end{minipage}
\end{table}
Some DLMs generate sequentially across blocks and denoise tokens jointly within each block. This structure allows an approximate adaptation of semantic watermarks developed for ARLMs by treating each block as a semantic unit. We evaluate adaptations of SemStamp~\citep{hou2024semstamp} and PMark~\citep{huo2026pmark}. For each block, the method generates several complete block candidates, evaluates them under the corresponding semantic watermark criterion, and selects a candidate before moving to the next block. The adaptation depends on block decoding and on the chosen block size. Short blocks can contain too little semantic content for stable scoring. Large blocks provide fewer watermark selection opportunities and require completion of longer candidates before each decision.

\paragraph{Configurations.}
All three blockwise baselines use 25 token blocks, random remasking, and temperature $0.9$. To isolate the generation mechanism, they use the same fine-tuned semantic encoder and the same calibrated unit-size scan as DenMark for detection. Block Best-of-$K$ selects the highest scoring candidate among $K=16$ complete block candidates using a 2 bit keyed semantic score. PMark uses its prior offline variant with 16 candidates and two keyed semantic channels. SemStamp uses a 2 bit LSH partition with acceptance rate $0.25$ and margin $0.02$, sampling proposals in batches of 16 for at most 100 trials per block.

\paragraph{Results.}
Table~\ref{tab:blockwise_semantic_clean_detection} shows lower detection for these block adaptations than for DenMark and a larger increase in mean log perplexity. We use a higher sampling temperature for the baselines to increase candidate diversity. Complete candidates from the same block often receive similar semantic scores, which leaves a small selection margin. Each block also contributes one selection decision. DenMark scores local updates throughout denoising, so several small score gains can accumulate inside a block. These two differences account for the stronger watermark signal observed in the table.

\FloatBarrier
\par

\subsection{Effect of Candidate and Channel Budgets}
\label{app:tied_k_bc_ablation}

We jointly vary the candidate budget $K$ and the number of semantic channels $C$ over $(8,1)$, $(16,2)$, $(24,3)$, and $(32,4)$, while fixing the semantic unit size to $m=25$, the average rollout count to three, the candidate temperature to $0.6$, and random position selection. Table~\ref{tab:tied_k_bc_gpt_attacks} shows that $(K,C)=(24,3)$ provides strong clean and attacked detection across both datasets, whereas increasing the budgets further does not yield a consistent additional gain. Overall, the 24-candidate, 3-channel configuration offers a favorable trade-off between detection performance and generation-time scoring cost.

\begin{table}[!htb]
\centering
\caption{Clean detection and robustness to GPT attacks on LLaDA-8B. We jointly vary the candidate budget $K$ and channel count $C$ over $(8,1)$, $(16,2)$, $(24,3)$, and $(32,4)$, and report TPR at 0.5\%, 1\%, and 5\% FPR and AUC.}
\label{tab:tied_k_bc_gpt_attacks}
\renewcommand{\arraystretch}{0.96}
\setlength{\tabcolsep}{0.6pt}
\begin{minipage}[t]{0.492\textwidth}
\centering
{\scriptsize
\begin{tabular}{@{}cclcccc@{}}
\toprule
\multicolumn{7}{c}{\textbf{Finance-QA}} \\
\midrule
$K$ & $C$ & Condition & TPR@0.5\% & TPR@1\% & TPR@5\% & AUC \\
\midrule
\multirow{5}{*}{8} & \multirow{5}{*}{1} & Clean & 94.97 & 96.98 & 98.99 & 0.9971 \\
& & Rewrite & 48.24 & 59.80 & 77.89 & 0.9340 \\
& & Compress & 65.33 & 68.34 & 85.93 & 0.9552 \\
& & Expand & 29.15 & 36.68 & 62.31 & 0.8948 \\
& & Doc. Paraphrase & 40.70 & 48.74 & 72.36 & 0.9234 \\
\cmidrule(lr){1-7}
\multirow{5}{*}{16} & \multirow{5}{*}{2} & Clean & \textbf{100.00} & \textbf{100.00} & \textbf{100.00} & \textbf{0.9997} \\
& & Rewrite & 63.55 & 74.92 & 90.30 & 0.9784 \\
& & Compress & 80.94 & 85.28 & \textbf{96.32} & 0.9877 \\
& & Expand & 33.11 & 44.82 & 70.23 & 0.9209 \\
& & Doc. Paraphrase & 56.19 & 67.56 & 84.95 & 0.9578 \\
\cmidrule(lr){1-7}
\multirow{5}{*}{24} & \multirow{5}{*}{3} & Clean & 99.50 & 99.50 & 99.50 & 0.9991 \\
& & Rewrite & \textbf{70.00} & \textbf{81.00} & \textbf{95.50} & \textbf{0.9856} \\
& & Compress & \textbf{87.50} & \textbf{91.50} & 95.50 & 0.9857 \\
& & Expand & \textbf{46.50} & \textbf{58.00} & \textbf{81.50} & \textbf{0.9613} \\
& & Doc. Paraphrase & \textbf{63.00} & \textbf{73.00} & \textbf{92.00} & \textbf{0.9789} \\
\cmidrule(lr){1-7}
\multirow{5}{*}{32} & \multirow{5}{*}{4} & Clean & \textbf{100.00} & \textbf{100.00} & \textbf{100.00} & 0.9996 \\
& & Rewrite & 67.00 & 73.50 & 90.50 & 0.9733 \\
& & Compress & 86.50 & 89.00 & 94.50 & \textbf{0.9889} \\
& & Expand & 46.00 & 55.50 & 78.50 & 0.9492 \\
& & Doc. Paraphrase & 59.50 & 69.50 & 85.00 & 0.9659 \\
\bottomrule
\end{tabular}
}
\end{minipage}\hfill
\begin{minipage}[t]{0.492\textwidth}
\centering
{\scriptsize
\begin{tabular}{@{}cclcccc@{}}
\toprule
\multicolumn{7}{c}{\textbf{AlpacaFarm}} \\
\midrule
$K$ & $C$ & Condition & TPR@0.5\% & TPR@1\% & TPR@5\% & AUC \\
\midrule
\multirow{5}{*}{8} & \multirow{5}{*}{1} & Clean & 86.17 & 92.02 & 96.28 & 0.9939 \\
& & Rewrite & 45.21 & 53.19 & 73.94 & 0.9342 \\
& & Compress & 56.38 & 62.23 & 81.91 & 0.9513 \\
& & Expand & 31.38 & 40.96 & 62.77 & 0.8978 \\
& & Doc. Paraphrase & 39.36 & 47.34 & 70.74 & 0.9249 \\
\cmidrule(lr){1-7}
\multirow{5}{*}{16} & \multirow{5}{*}{2} & Clean & 95.16 & 96.24 & \textbf{99.46} & 0.9953 \\
& & Rewrite & 63.37 & 74.73 & 89.01 & 0.9690 \\
& & Compress & 79.12 & 84.98 & 93.41 & \textbf{0.9813} \\
& & Expand & 42.86 & 54.58 & 74.73 & 0.9266 \\
& & Doc. Paraphrase & 53.85 & 64.84 & 85.35 & 0.9670 \\
\cmidrule(lr){1-7}
\multirow{5}{*}{24} & \multirow{5}{*}{3} & Clean & 95.14 & 96.22 & 98.92 & \textbf{0.9981} \\
& & Rewrite & 66.49 & \textbf{76.22} & 90.27 & \textbf{0.9756} \\
& & Compress & \textbf{82.70} & \textbf{88.65} & \textbf{95.14} & 0.9735 \\
& & Expand & \textbf{49.73} & \textbf{59.46} & \textbf{83.78} & \textbf{0.9583} \\
& & Doc. Paraphrase & 58.92 & 67.57 & \textbf{85.95} & \textbf{0.9698} \\
\cmidrule(lr){1-7}
\multirow{5}{*}{32} & \multirow{5}{*}{4} & Clean & \textbf{96.84} & \textbf{97.37} & 98.42 & 0.9973 \\
& & Rewrite & \textbf{68.42} & 71.58 & \textbf{90.53} & 0.9734 \\
& & Compress & 80.53 & 84.21 & 90.53 & 0.9763 \\
& & Expand & 42.63 & 47.37 & 74.21 & 0.9425 \\
& & Doc. Paraphrase & \textbf{61.58} & \textbf{68.95} & 80.53 & 0.9538 \\
\bottomrule
\end{tabular}
}
\end{minipage}
\end{table}

\FloatBarrier
\par

\subsection{Downstream Task Performance}
\label{app:downstream_performance}

Following DGMark~\citep{hong2026dgmark}, we evaluate downstream capability after watermark insertion on four benchmarks: MMLU~\citep{hendrycks2021measuring}, HellaSwag~\citep{zellers2019hellaswag}, ARC-Challenge~\citep{clark2018think}, and GSM8K~\citep{cobbe2021training}. All methods use sampling temperature $0.1$ on the complete official evaluation splits. We report accuracy for each task and the unweighted average across the four tasks.

\begin{table}[!htb]
\centering
\caption{\textbf{Downstream task performance under watermarking.} Accuracy (\%) on MMLU, HellaSwag, ARC-Challenge, and GSM8K. Bold values indicate the best result among watermarking methods for each backbone and column.}
\label{tab:downstream_task_performance}

\small
\renewcommand{\arraystretch}{1.10}
\setlength{\tabcolsep}{7pt}

\begin{tabular}{@{}lccccc@{}}
\toprule
Method
& MMLU $\uparrow$
& HellaSwag $\uparrow$
& ARC-C $\uparrow$
& GSM8K $\uparrow$
& Avg. $\uparrow$ \\
\midrule

\multicolumn{6}{c}{\textbf{LLaDA-8B}} \\
\midrule

Clean
& 63.45 & 76.59 & 84.28 & 59.44 & 70.94 \\
DLM-KGW
& 50.58 & 70.94 & 72.24 & 53.30 & 61.77 \\
PatternMark
& 54.12 & 72.90 & 78.93 & 52.84 & 64.70 \\
DenMark
& \textbf{62.94}
& \textbf{76.52}
& \textbf{84.62}
& 48.98
& \textbf{68.26} \\
DGMark
& 55.97 & 72.99 & 83.28 & \textbf{54.51} & 66.69 \\
UMR
& 55.84 & 72.16 & 77.59 & 52.84 & 64.61 \\
\midrule

\multicolumn{6}{c}{\textbf{LLaDA1.5-8B}} \\
\midrule

Clean
& 63.97 & 76.33 & 84.28 & 57.62 & 70.55 \\
DLM-KGW
& 52.63 & 72.46 & 73.58 & \textbf{53.22} & 62.97 \\
PatternMark
& 55.63 & 73.64 & 80.27 & 51.40 & 65.23 \\
DenMark
& \textbf{63.63}
& \textbf{76.34}
& \textbf{83.95}
& 51.78
& \textbf{68.92} \\
DGMark
& 57.93 & 73.99 & 82.27 & 52.31 & 66.63 \\
UMR
& 56.87 & 72.75 & 79.26 & 52.69 & 65.39 \\
\bottomrule
\end{tabular}
\end{table}

\paragraph{Results.}
DenMark reaches average accuracies of 68.26\% on LLaDA-8B and 68.92\% on LLaDA1.5-8B, corresponding to drops of 2.68 and 1.63 percentage points from the clean models. It has the highest accuracy among the evaluated watermark methods on MMLU, HellaSwag, and ARC-Challenge for both backbones. Most of the remaining drop occurs on GSM8K. Relative to the next best watermark method by average accuracy, DenMark gains 1.57 points on LLaDA-8B and 2.29 points on LLaDA1.5-8B.

\FloatBarrier
\par

\section{Proofs}
\label{app:proofs}

\subsection{Proof of Proposition~\ref{prop:scan_fpr_control}}
\label{app:scan_fpr_control_proof}

\begin{lemma}[Superuniformity of the empirical right tail p value]
\label{lem:empirical_p_superuniform}
Fix \(m'\in\mathcal G\). Under the exchangeability assumption in Proposition~\ref{prop:scan_fpr_control},
\[
\Pr\!\left(p_{m'}(Y)\leq \alpha\right)
\leq
\alpha,
\qquad
\text{for every }\alpha\in[0,1].
\]
\end{lemma}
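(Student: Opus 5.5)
The plan is the standard rank argument for empirical permutation-type p values. Fix \(m'\in\mathcal G\) and write \(n=n_0\), \(V_0=S_{m'}(Y)\) and \(V_i=S_{m'}(y_i^{(0)})\) for \(i\in[n]\). Since \(S_{m'}\) is a fixed deterministic function of the text (the key, the encoder \(E_\eta\) and the tokenizer are fixed and do not depend on the calibration set), exchangeability of \((Y,y_1^{(0)},\ldots,y_n^{(0)})\) transfers to exchangeability of the score vector \((V_0,V_1,\ldots,V_n)\). Then
\[
p_{m'}(Y)=\frac{1+\sum_{i=1}^{n}\mathbb I\{V_i\geq V_0\}}{n+1}=\frac{R_0}{n+1},
\qquad
R_0=\sum_{i=0}^{n}\mathbb I\{V_i\geq V_0\},
\]
so \(R_0\) is the "upper rank" of \(V_0\) among all \(n+1\) scores, counting ties against the test point.

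The main step is to show \(\Pr(R_0\leq k)\leq k/(n+1)\) for every integer \(k\in\{0,\ldots,n+1\}\). Define \(R_j=\sum_{i=0}^{n}\mathbb I\{V_i\geq V_j\}\) for each \(j\). By exchangeability, \(\Pr(R_0\leq k)=\Pr(R_j\leq k)\) for every \(j\). Hence
\[
\Pr(R_0\leq k)=\frac{1}{n+1}\,\mathbb E\Big[\#\{j:R_j\leq k\}\Big].
\]
It therefore suffices to prove the deterministic claim that at most \(k\) indices satisfy \(R_j\leq k\).

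This deterministic counting claim is the only delicate point, because ties must be handled. I would argue it by contradiction. Suppose the set \(A=\{j:R_j\leq k\}\) has more than \(k\) elements, and let \(j^\ast\in A\) minimize \(V_j\) over \(A\). Every \(j\in A\) satisfies \(V_j\geq V_{j^\ast}\), so \(R_{j^\ast}\geq |A|>k\), which contradicts \(j^\ast\in A\). Ties only increase \(R_{j^\ast}\), so the argument is tie-robust, and no continuity or randomized tie-breaking is needed.

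To finish, take \(\alpha\in[0,1]\) and set \(k=\lfloor \alpha(n+1)\rfloor\). Because \(R_0\) is integer-valued, \(\{p_{m'}(Y)\leq\alpha\}=\{R_0\leq k\}\). Its probability is at most \(k/(n+1)\leq\alpha\). The edge cases \(\alpha<1/(n+1)\) (where \(k=0\) and \(R_0\geq1\) always) and \(\alpha=1\) are covered automatically.
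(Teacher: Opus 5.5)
Your proposal is correct and follows essentially the same route as the paper's proof: the same upper-rank representation $p_{m'}(Y)=\mathrm{rk}_0/(n_0+1)$, averaging over indices via exchangeability, the same tie-robust minimizer argument showing at most $h$ indices can have rank at most $h$, and the choice $h=\lfloor(n_0+1)\alpha\rfloor$. You also state explicitly that exchangeability of the texts carries over to the scores because $S_{m'}$ is a fixed deterministic map. The paper uses this fact without stating it, so your version is slightly more complete on that point.
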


\begin{proof}
Write \(Y_0=Y\) and enumerate the calibration texts as \(Y_1,\ldots,Y_{n_0}\). Set
\[
Z_i=S_{m'}(Y_i),
\qquad
\mathrm{rk}_i=\sum_{j=0}^{n_0}\mathbb I\{Z_j\geq Z_i\}.
\]
The corrected empirical right tail $p$ value used in the paper is
\[
p_{m'}(Y)=\frac{\mathrm{rk}_0}{n_0+1},
\]
a standard finite sample construction for exchangeable Monte Carlo scores~\citep{phipson2010permutation}.

For any integer \(h\in\{0,\ldots,n_0+1\}\), exchangeability implies that the indices \(0,\ldots,n_0\) have identical marginal rank distributions. Hence
\begin{align*}
\Pr(\mathrm{rk}_0\leq h)
&=
\frac{1}{n_0+1}
\sum_{i=0}^{n_0}\Pr(\mathrm{rk}_i\leq h) \\
&=
\mathbb E\!\left[
 \frac{1}{n_0+1}
 \sum_{i=0}^{n_0}\mathbb I\{\mathrm{rk}_i\leq h\}
\right].
\end{align*}
For every realized collection \(Z_0,\ldots,Z_{n_0}\), at most \(h\) indices can satisfy \(\mathrm{rk}_i\leq h\). To see this, let \(A=\{i:\mathrm{rk}_i\leq h\}\). If \(A\neq\emptyset\), choose \(i_\star\in A\) with the smallest score among indices in \(A\). Every \(j\in A\) then satisfies \(Z_j\geq Z_{i_\star}\), so
\[
\mathrm{rk}_{i_\star}
=
\sum_{j=0}^{n_0}\mathbb I\{Z_j\geq Z_{i_\star}\}
\geq
|A|.
\]
Since \(\mathrm{rk}_{i_\star}\leq h\), we have \(|A|\leq h\). The argument covers ties because the rank uses \(\geq\). Hence,
\[
\Pr(\mathrm{rk}_0\leq h)
\leq
\frac{h}{n_0+1}.
\]

For arbitrary \(\alpha\in[0,1]\), let \(h=\lfloor(n_0+1)\alpha\rfloor\). Since \(\mathrm{rk}_0\) is integer-valued,
\begin{align*}
\Pr\!\left(p_{m'}(Y)\leq \alpha\right)
&=
\Pr\!\left(\mathrm{rk}_0\leq(n_0+1)\alpha\right) \\
&=
\Pr(\mathrm{rk}_0\leq h) \\
&\leq
\frac{h}{n_0+1}
\leq
\alpha.
\end{align*}
This proves the stated superuniformity.
\end{proof}

\begin{proof}
Let \(J=|\mathcal G|\). For \(0\leq\alpha<1\), the definition of \(p_{\mathrm{scan}}\) gives
\begin{align*}
\{p_{\mathrm{scan}}(Y)\leq\alpha\}
&=
\left\{
 J\min_{m'\in\mathcal G}p_{m'}(Y)
 \leq\alpha
\right\} \\
&=
\bigcup_{m'\in\mathcal G}
\left\{
 p_{m'}(Y)\leq\frac{\alpha}{J}
\right\}.
\end{align*}
Applying the union bound, followed by Lemma~\ref{lem:empirical_p_superuniform}, yields the standard Bonferroni control~\citep{dunn1961multiple}:
\begin{align*}
\Pr\!\left(p_{\mathrm{scan}}(Y)\leq\alpha\right)
&\leq
\sum_{m'\in\mathcal G}
\Pr\!\left(
 p_{m'}(Y)\leq\frac{\alpha}{J}
\right) \\
&\leq
\sum_{m'\in\mathcal G}\frac{\alpha}{J}
=
\alpha.
\end{align*}
The case \(\alpha=1\) is immediate. The proof uses no independence assumption across candidate unit sizes.
\end{proof}

\FloatBarrier
\subsection{Proof of Proposition~\ref{prop:cumulative_continued_policy_decomposition}}
\label{app:cumulative_continued_policy_decomposition_proof}

\begin{proof}
For any decoding policy \(\mu\), write
\[
J(\mu) =
\mathbb E_{\mu}\!\left[
S_m(\hat y)
\,\middle|\,
x^{(0)}
\right].
\]

For each \(\ell\in\{0,\ldots,T\}\), define the reverse hybrid policy
\(\pi^{[\ell]}\) to use \(\pi^0\) for decisions \(0,\ldots,\ell-1\) and
\(\pi\) for decisions \(\ell,\ldots,T-1\).
The endpoints are
\[
\pi^{[0]}=\pi,
\qquad
\pi^{[T]}=\pi^0.
\]

Fix \(t\in\{0,\ldots,T-1\}\). The adjacent hybrids \(\pi^{[t]}\) and \(\pi^{[t+1]}\) execute the same policy
\(\pi^0\) for their first \(t\) decisions, so they induce the same distribution
over the state \(x^{(t)}\) at step \(t\). Given a common realized state, couple the selection of active units, candidate construction, and rollout randomness at this step.

Under this coupling, \(\pi^{[t]}\) commits the joint candidate selected from rollout scores
\(c_{t,\mathbf k_t^\star}\), and \(\pi^{[t+1]}\) commits the
reference joint candidate \(c_{t,\mathbf 1_t}\). After the current commitment, both hybrids follow \(\pi\) for all remaining decisions.

By the definition of~\eqref{eq:continued_policy_value},
\[
J\!\left(\pi^{[t]}\right) - J\!\left(\pi^{[t+1]}\right) =
\mathbb E\!\left[
Q_{t,\mathbf k_t^\star}^{\pi} - Q_{t,\mathbf 1_t}^{\pi}
\right]
=
\Delta_t^\pi.
\]

Summing over \(t\) telescopes:
\begin{align*}
\mathbb E_{\pi}\!\left[
S_m(\hat y)\mid x^{(0)}
\right]
-
\mathbb E_{\pi^0}\!\left[
S_m(\hat y)\mid x^{(0)}
\right]
&=
J(\pi^{[0]})-J(\pi^{[T]})
\\
&=
\sum_{t=0}^{T-1}
\left[
J(\pi^{[t]})-J(\pi^{[t+1]})
\right]
\\
&=
\sum_{t=0}^{T-1}\Delta_t^\pi.
\end{align*}
\end{proof}

\FloatBarrier
\section{Qualitative Examples}
\label{app:qualitative_examples}

Tables~\ref{tab:qualitative-unattacked-llada8b}--\ref{tab:qualitative-unattacked-dream} compare clean generations with DenMark generations for aligned prompts. Each dataset uses the same prompt across the three representative models. We show one readable example from Finance, Alpaca, and LongForm, covering financial advice, procedural knowledge, and explanatory generation. Passages are truncated for presentation, and detection uses the complete generations. The DenMark outputs remain coherent, relevant to the prompt, and similar in style to the clean outputs, with no visible watermark marker or repeated template.

\begin{table}[!htbp]
\centering
\caption{Qualitative comparison of clean and DenMark outputs from LLaDA-8B. Each row uses the same prompt for both outputs. The full responses were manually screened for fluency, factual plausibility, completeness, and repetition. Outputs are truncated for readability.}
\label{tab:qualitative-unattacked-llada8b}
\scriptsize
\setlength{\tabcolsep}{3.5pt}
\renewcommand{\arraystretch}{1.08}
\begin{tabular}{@{}P{0.09\textwidth}P{0.16\textwidth}P{0.34\textwidth}P{0.34\textwidth}@{}}
\toprule
Dataset & Prompt & Clean (no watermark) & DenMark (unattacked) \\
\midrule
Finance & Should I replace bonds in a passive investment strategy & The decision to replace bonds in a passive investment strategy depends on various factors, including your investment goals, risk tolerance, and market conditions. Here are some key points to consider: 1. Risk Tolerance: Bonds are generally considered low-risk investments, making them suitable for a passive investment strategy. If you're considering diversifying your portfolio to take on higher risk, you might consider replacing some or all bonds with equities. [...continues] & Whether to replace bonds in a passive investment strategy depends on several factors, including individual goals, risk tolerance, and time horizon. However, I'll provide some perspectives to help you make an informed decision. 1. Lower risk: Bonds are generally less volatile compared to stocks, which makes them a good option for those seeking low volatility. 2. [...continues] \\
Alpaca & Why do we need AI assistants? & AI assistants are essential for several reasons in modern society: 1. Efficiency: AI assistants can automate repetitive tasks, freeing up human workers to focus on more high-priority, creative activities. 2. Productivity: They can perform tasks quickly and accurately, increasing overall productivity and reducing human error. 3. [...continues] & We need AI assistants for several reasons: 1. Automation: AI assistants can automate tasks that are typically repetitive, time-consuming and require minimal human intervention. This frees up time for humans to focus on more important activities. 2. Speed and Accuracy: AI systems can process information faster, more accurately and with fewer errors than human operators. [...continues] \\
LongForm & Why do humans like to kiss each other? & Humans like to kiss each other for several complex biological and psychological reasons. Firstly, kissing is a form of social bonding that helps to strengthen relationships and emotional connection between friends, family, and partners. When we kiss, it triggers the release of oxytocin, a hormone that promotes attachment and bonding, as well as endorphins, which is associated with feelings of pleasure and relaxation. [...continues] & Kissing is a universal human behavior that is deeply embedded in cultures and traditions. There are several biological and psychological reasons why humans like to kiss each other. 1. Oxytocin: Kissing releases oxytocin, also known as the "love hormone." Oxytocin is a neurotransmitter that improves mood and creates a feeling of bonding, affection, and trust. 2. Physical connection: Kissing involves close physical contact, which creates a strong connection. [...continues] \\
\bottomrule
\end{tabular}
\end{table}

\par
\begin{table}[!htbp]
\centering
\caption{Qualitative comparison of clean and DenMark outputs from LLaDA1.5-8B. Each row uses the same prompt for both outputs. The full responses were manually screened for fluency, factual plausibility, completeness, and repetition. Outputs are truncated for readability.}
\label{tab:qualitative-unattacked-llada15}
\scriptsize
\setlength{\tabcolsep}{3.5pt}
\renewcommand{\arraystretch}{1.08}
\begin{tabular}{@{}P{0.09\textwidth}P{0.16\textwidth}P{0.34\textwidth}P{0.34\textwidth}@{}}
\toprule
Dataset & Prompt & Clean (no watermark) & DenMark (unattacked) \\
\midrule
Finance & How to calculate how much a large stock position is really worth? & To calculate the real value of a large stock position, consider the following factors: 1. Stock Price: Determine the current stock price of the company. You can find this on stock exchanges, investment websites, or financial news. 2. Number of Shares: Know the total number of shares you own in the stock position. This can be found in your investment account, trading app, or brokerage statement. 3. [...continues] & Calculating the value of a large stock position involves a few key factors, including the number of shares you own, the price per share, if applicable, and any dividends or appreciation. Here’s a step-by-step guide to determining the total value: 1. Determine the Number of Shares: Count how many shares you own in the stock. 2. [...continues] \\
Alpaca & how much of a threat is climate change in the coming years, and what should we do to stop it? & Climate change is a very real threat to our environment and society. According to the IPCC, the Earth's temperature is rising at an alarming rate, and the trend is projected to continue accelerating if we don't take immediate action. The impacts of climate change are already evident, with melting glaciers, rising sea levels, and changing weather patterns, among other things. To stop climate change, we need to take a comprehensive approach: 1. [...continues] & Climate change poses a significant threat to humanity in the future. It is real, primarily driven by the burning of fossil fuels, oil, and gas, and human activities such as deforestation and agriculture. It is estimated that by the end of this century, the global average temperature could be up to 2°C higher than pre-industrial levels. [...continues] \\
LongForm & What causes mental fatigue while studying? & Mental fatigue while studying can be caused by a variety of factors, or a combination of them. These factors include: 1. Lack of sleep: Not getting enough hours of sleep per night can impair your ability to function properly, leading to mental fatigue while studying. 2. Hunger: Low blood sugar can cause your energy levels to drop, leading to mental fog and fatigue. 3. [...continues] & Mental fatigue, also known as cognitive overload or burnout, is a state of mental and emotional exhaustion caused by prolonged mental effort. When studying, individuals experience mental fatigue when they absorb excessive amounts of information, leading to decreased concentration, motivation, and memory retention. Some common causes of mental fatigue include: 1. Lack of rest: Cognitive function requires adequate sleep and rest to recover. [...continues] \\
\bottomrule
\end{tabular}
\end{table}

\begin{table}[!htbp]
\centering
\caption{Qualitative comparison of clean and DenMark outputs from Dream-v0-Instruct-7B. Each row uses the same prompt for both outputs. The full responses were manually screened for fluency, factual plausibility, completeness, and repetition. Outputs are truncated for readability.}
\label{tab:qualitative-unattacked-dream}
\scriptsize
\setlength{\tabcolsep}{3.5pt}
\renewcommand{\arraystretch}{1.08}
\begin{tabular}{@{}P{0.09\textwidth}P{0.16\textwidth}P{0.34\textwidth}P{0.34\textwidth}@{}}
\toprule
Dataset & Prompt & Clean (no watermark) & DenMark (unattacked) \\
\midrule
Finance & Should I Use an Investment Professional? & Yes, using an investment professional can help you plan and achieve your financial goals. An investment professional can help you plan, evaluate various investment options, and manage your investment portfolio. Firstly, using an investment professional can help you to determine your financial goals, objectives, and needs. An investment professional can help you create an investment plan that aligns your financial goals, risk tolerance, and investment horizon. [...continues] & The decision to use an investment professional is personal and based on individual circumstances, goals, and preferences. It depends on an individual's level of knowledge and experience in investing. If you are able to manage your own investments, you have a good understanding of financial markets and the economic environment. [...continues] \\
Alpaca & ideas for building a waitlist for a new product or service & 1. Build anticipation on social media platforms by sharing sneak peeks of the product or service and encouraging followers to subscribe to the waitlist as soon as it launches. 2. Offer an exclusive discount or special promotion to those who register for the waitlist before it is available. 3. Create an email marketing campaign that sends updates with the latest information about the product or service as it becomes available. 4. [...continues] & Sure, here are some ideas for building a waitlist for a new product or service: 1. Social media promotion: Share teasers, behind-the-scenes, or other information about your product or service on social media channels. Encourage your followers to provide their contact information for a chance for exclusive access. 2. Online launch event: Host an online launch event or a webinar where you encourage potential customers to sign up for a waitlist. [...continues] \\
LongForm & Why does a 2 hour movie take a year to make when a high quality TV show can make 45 minute ones every week. & Producing a 2-hour movie involves many different stages, from the development of the script, to casting, filming, and editing. This process involves many professionals, including writers, directors, actors, and editors, and can take several months. On the other hand, TV shows follow a similar process, with scriptwriting, pre-production, production, and editing for each episode. [...continues] & There are several key factors that contribute to the longer time to produce a 2-hour movie in comparison to a half-hour-long TV show. One main difference is that movies typically involve a greater degree of complexity in terms of storytelling, effects, cinematography, and editing, while TV shows can be more streamlined and made quickly due to their regular production schedules. [...continues] \\
\bottomrule
\end{tabular}
\end{table}

\end{document}